\def\eqref#1{equation~\ref{#1}}
\def\1{\bm{1}}
\def\eps{{\epsilon}}
\def\Alg{{\normalfont\texttt{Alg}}}
\def\RNN{{\normalfont\texttt{RNN}}}
\def\Opt{{\normalfont\texttt{Opt}}}
\def\Stab{{\textit{Stab}}}
\def\Conv{{\textit{Cvg}}}
\def\Sens{{\textit{Sens}}}
\def\GD{{\normalfont\texttt{GD}}}
\def\NAG{{\normalfont\texttt{NAG}}}
\newcommand{\rStab}[1]{{\color{red}\Stab(#1)}}
\newcommand{\bConv}[1]{{\color{blue}\Conv(#1)}}
\newcommand{\bSens}[1]{{\color{blue}\Sens(#1)}}
\def\vb{{\bm{b}}}
\def\vg{{\bm{g}}}
\def\vx{{\bm{x}}}
\def\vy{{\bm{y}}}
\def\vz{{\bm{z}}}
\DeclareMathAlphabet{\mathsfit}{\encodingdefault}{\sfdefault}{m}{sl}
\SetMathAlphabet{\mathsfit}{bold}{\encodingdefault}{\sfdefault}{bx}{n}
\def\gB{{\mathcal{B}}}
\def\gF{{\mathcal{F}}}
\def\gN{{\mathcal{N}}}
\def\gO{{\mathcal{O}}}
\def\gQ{{\mathcal{Q}}}
\def\gS{{\mathcal{S}}}
\def\gU{{\mathcal{U}}}
\def\gX{{\mathcal{X}}}
\def\gY{{\mathcal{Y}}}
\def\sI{{\mathbb{I}}}
\def\sN{{\mathbb{N}}}
\def\sR{{\mathbb{R}}}
\newcommand{\E}{\mathbb{E}}
\newcommand{\R}{\mathbb{R}}
\DeclareMathOperator*{\argmin}{arg\,min}
\newcommand{\rbr}[1]{\left(#1\right)}
\newcommand{\sbr}[1]{\left[#1\right]}
\newcommand{\cbr}[1]{\left\{#1\right\}}
\newcommand{\q}{\quad}
\def\f{\frac}  
\def\bb{\begin{equation}} \def\ee{\end{equation}}
\newtheorem{theorem}{Theorem}[section]
\newtheorem{lemma}{Lemma}[section]
\newtheorem{proposition}{Proposition}[section]
\theoremstyle{definition}
\theoremstyle{remark}
\newtheorem{remark}{Remark}[section]
\definecolor{deepmagenta}{rgb}{0.8, 0.0, 0.8}
\renewcommand{\eqref}{Eq.~\ref}
\definecolor{mydarkblue}{rgb}{0,0.08,0.45}
\newcommand*{\Scale}[2][4]{\scalebox{#1}{$#2$}}%
\title{\Large  
Understanding Deep Architectures with Reasoning Layer
}
\author{%
  Xinshi Chen \\
  Georgia Institute of Technology \\
 \texttt{xinshi.chen@gatech.edu}\\
  \And
  Yufei Zhang\\
  University of Oxford\\
  \texttt{yufei.zhang@maths.ox.ac.uk}
  \And 
  Christoph Reisinger\\
  University of Oxford\\
  \texttt{christoph.reisinger@maths.ox.ac.uk}
  \And
  Le Song\\
  Georgia Institute of Technology\\
  \texttt{lsong@cc.gatech.edu}
}
\begin{document}

\maketitle
\begin{abstract}
Recently, there has been a surge of interest in combining deep learning models with reasoning in order to handle more sophisticated learning tasks. In many cases, a reasoning task can be solved by an iterative algorithm. This algorithm is often unrolled, and used as a specialized layer in the deep architecture, which can be trained end-to-end with other neural components. Although such hybrid deep architectures have led to many empirical successes, the theoretical foundation of such architectures, especially the interplay between algorithm layers and other neural layers, remains largely unexplored. In this paper, we take an initial step towards an understanding of such hybrid deep architectures by showing that properties of the algorithm layers, such as convergence, stability and sensitivity, are intimately related to the approximation and generalization abilities of the end-to-end model. Furthermore, our analysis matches closely our experimental observations under various conditions, suggesting that our theory can provide useful guidelines for designing deep architectures with reasoning layers.
\end{abstract}

\section{Introduction}

\begin{wrapfigure}[16]{R}{0.28\textwidth}
    \vspace{-18mm}
    \includegraphics[width=0.28\textwidth]{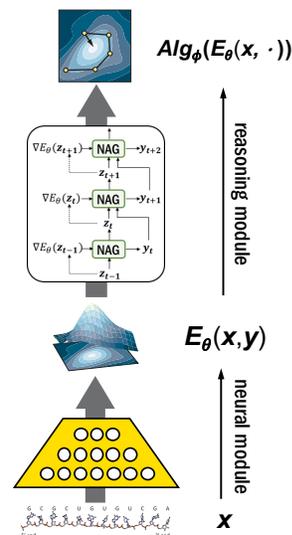}
    \vspace{-7mm}
    \caption{\small Hybrid architecture.}
    \label{fig:hybrid}
\end{wrapfigure}
Many real world applications require perception and reasoning to work together to solve a problem. Perception refers to the ability~to understand and represent inputs, while reasoning refers to the ability to follow prescribed steps and derive answers satisfying certain constraints. To tackle such sophisticated learning tasks, recently, there has been a surge of interests in combining deep perception models with reasoning modules.  

Typically, a \textbf{reasoning module} is stacked on top of a \textbf{neural module}, and treated as an additional layer of the overall deep architecture; then all the parameters in the architecture are optimized end-to-end with loss gradients (Fig~\ref{fig:hybrid}). 
Very often these reasoning modules can be implemented as unrolled \textit{iterative algorithms}, which can solve more sophisticated tasks with carefully designed and interpretable operations. For instance, SATNet~\citep{wang2019satnet} integrated a satisfiability solver into its deep model as a reasoning module; E2Efold~\citep{chen2020rna} used a constrained optimization algorithm on top of a neural energy network to predict and reason about RNA structures, while \cite{cuturi2019differentiable} used optimal transport algorithm as a reasoning module for learning to sort. Other algorithms such as ADMM~\citep{shrivastava2020glad,yang2017admm}, Langevin dynamics \citep{ingraham2018learning},  inductive logic programming \citep{manhaeve2018deepproblog}, DP \citep{mensch2018differentiable}, k-means clustering \citep{wilder2019end}, message passing \citep{domke2011parameter,paschalidou2018raynet}, power iterations \citep{wang2019backpropagation} are also used as differentiable reasoning modules in deep models for various learning tasks. Thus in the reminder of the paper, we will use reasoning layer and algorithm layer interchangeably.

While these previous works have demonstrated the effectiveness of combining deep learning with reasoning, the theoretical underpinning of such hybrid deep architectures remains largely unexplored. For instance, what is the benefit of using a reasoning module based on unrolled algorithms compared to generic architectures such as recurrent neural networks (RNN)? How exactly will the reasoning module affect the generalization ability of the deep architecture? 
For different algorithms which can solve the same task, what are their differences when used as reasoning modules in deep models?
Despite the rich literature on rigorous analysis of algorithm properties, there is a paucity of work leveraging these analyses to formally study the learning behavior of  deep architectures containing algorithm layers. 
This motivates us to ask the crucial and timely question of
\begin{quote}
    \vspace{-2mm}
    \it How will the algorithm properties of a reasoning layer affect the learning behavior of deep architectures containing such layers? 
    \vspace{-2mm}
\end{quote}
In this paper, we provide a first step towards an answer to this question by analyzing the approximation and generalization abilities of such hybrid deep architectures. To the best our knowledge, such an analysis has not been done before and faces several difficulties: 1) The analysis of certain algorithm properties such as convergence can be complex by itself; 2) Models based on highly structured iterative algorithms have rarely been analyzed before; 3) The bound needs to be sharp enough to match empirical observations. In this new setting, the complexities of the algorithm's analysis and generalization analysis are intertwined together, making the analysis even more challenging.

{\bf Summary of results.} We find that standard Rademacher complexity analysis, widely used for neural networks~\citep{bartlett2017spectrally,chen2019generalization,garg2020generalization}, is insufficient for explaining the behavior of these hybrid architectures. Thus we resort to a more refined local Rademacher complexity analysis~\citep{bartlett2005local,koltchinskii2006local}, and find the following: 
\begin{itemize}[wide,nolistsep]
    \item \textbf{Relation to algorithm properties.} Algorithm properties such as convergence, stability and sensitivity all play important roles in the generalization ability of the hybrid architecture. Generally speaking, an algorithm layer that is faster converging, more stable and less sensitive will be able to better approximate the joint perception and reasoning task, 
    while at the same time generalize better. 
    \item \textbf{Which algorithm?} There is a tradeoff that a faster converging algorithm has to be less stable~\citep{chen2018stability}. Therefore, depending on the precise setting, the best choice of algorithm layer may be different. Our theorem reveals that when the neural module is over- or under-parameterized, stability of~the algorithm layer can be more important than its convergence; but when the neural module is has an `about-right' parameterization, a faster converging algorithm layer may give a better generalization.
    \item \textbf{What depth?} With deeper algorithm layers, the representation ability gets better, but the generalization becomes worse if the neural module is over/under-parameterized. Only when it has 'about-right' complexity, deeper algorithm layers can induce both better representation and generalization.
    \item \textbf{What if RNN?} It has been shown that RNN (or graph neural networks, GNN) can represent reasoning and iterative algorithms~\cite{andrychowicz2016learning,garg2020generalization}. On the example of RNN we demonstrate in Sec~\ref{sec:rnn} that these generic reasoning modules can also be analyzed under our framework, revealing that  
    RNN layers induce better representation but worse generalization compared to traditional algorithm layers.
    \item \textbf{Experiments.} We conduct empirical studies to validate our theory and show that it matches well with experimental observations under various conditions. These results suggest that our theory can provide useful practical guidelines for designing deep architectures with reasoning layers.
\end{itemize}

{\bf Contributions and limitations.} To the best of our knowledge, this is the first result to quantitatively characterize the effects of algorithm properties on the learning behavior of hybrid deep architectures with reasoning layers, {showing that algorithm biases can help reduce sample complexity of such architectures}. Our result also reveals a subtle and previously unknown interplay between algorithm convergence, stability and sensitivity when affecting model generalization, and thus provides design principles for deep architectures with reasoning layers. To simplify the analysis, our initial study is limited to a setting where the reasoning module is an unconstrained optimization algorithm and the neural module outputs a quadratic energy function. However, our analysis framework can be extended to more complicated cases and the insights can be expected to apply beyond our current setting.

{\bf Related theoretical works.}  
Our analysis borrows proof techniques for analyzing algorithm properties from the optimization literature~\cite{chen2018stability,nesterov2013introductory} and for bounding Rademacher complexity from the statistical learning literature~\cite{bartlett2017spectrally,bartlett2005local,koltchinskii2006local,maurer2016vector,cortes2016structured}, but our focus and results are new. More precisely, 
the `leave-one-out' stability of optimization algorithms have been used to derive generalization bounds~\cite{bousquet2002stability,agarwal2009generalization,hardt2015train,chen2018stability,rivasplata2018pac,verma2019stability}. However, all existing analyses are in the context where the optimization algorithms are used to train and select the model, while our analysis is based on a fundamentally different viewpoint where the algorithm itself is unrolled and integrated as a layer in the deep model. Also, existing works on the generalization of deep learning mainly focus on generic neural architectures such as feed-forward neural networks, RNN, GNN, etc~\cite{bartlett2017spectrally,chen2019generalization,garg2020generalization}. The omplexity of models based on highly structured iterative algorithms and the relation to algorithm properties have not been investigated. Furthermore, we are not aware of any previous use of local Rademacher complexity analysis for deep learning models.

\section{Setting: Optimization Algorithms as Reasoning Modules}
\vspace{-1mm}
\label{sec:setting}
In many applications, reasoning can be accomplished by solving an optimization problem defined by a neural perceptual module. 
For instance, a
visual SUDOKU puzzle can be solved using a neural module to perceive the digits followed by a quadratic optimization module to maximize a logic satisfiability objective~\cite{wang2019satnet}. The RNA folding problem can be tackled by a neural energy model to capture pairwise relations between RNA bases and a constrained optimization module to minimize the energy, with additional pairing constraints, to obtain a folding~\cite{chen2020rna}. In a broader context, MAML \cite{finn2017model,rajeswaran2019meta} also has a neural module for joint initialization and a reasoning module that performs optimization steps for task-specific adaptation. Other examples include \cite{ingraham2018learning,belanger2017end,donti2017task,amos2017optnet,poganvcic2019differentiation,rolinek2020deep,berthet2020learning,chen2018theoretical,ferber2020mipaal,knobelreiter2017end,niculae2018sparsemap}.   
More specifically, perception and reasoning can be jointly formulated in the form
\begin{align}
\label{eq:emb}
\vspace{-2mm}
    \vy(\vx) = \argmin\nolimits_{\vy\in\gY}~E_\theta(\vx,\vy),
\vspace{-2mm}
\end{align}
where $\vx$ is an input, and $E_\theta(\vx,\vy)$ is a neural energy function with parameters $\theta$, which specifies the type of information needed for performing reasoning, and together with constraints $\gY$ on the output $\vy$, specifies the style of reasoning. Very often, the optimizer can be approximated by iterative algorithms, so the mapping in \eqref{eq:emb} can be approximated by the following end-to-end hybrid model
\begin{align}\label{eq:model}\vspace{-2mm}
    f_{\phi,\theta}(\vx):=\Alg_\phi^k\rbr{E_\theta(\vx,\cdot)}~:~\gX \mapsto \gY.\vspace{-2mm}
\end{align}
$\Alg_\phi^k$ is the reasoning module with parameters $\phi$. Given a neural energy, it~performs $k$-step iterative updates to produce the output (Fig~\ref{fig:hybrid}). When $k$ is finite, $\Alg_\phi^k$ corresponds to approximate reasoning. 
As an initial attempt to analyze deep architectures with reasoning layers, we will restrict our analysis to a simple case where $E_\theta(\vx,\vy)$ is quadratic in $\vy$. A reason is that the analysis of advanced algorithms such as Nesterov accelerated gradients will become very complex for general cases. Similar problems occur in \citep{chen2018stability} which also restricts the proof to quadratic objectives. Specifically: 

{\bf Problem setting:} Consider a hybrid architecture where the neural module is an energy function of the form
$
E_\theta((\vx,\vb),\vy)= \frac{1}{2}\vy^\top Q_\theta(\vx) \vy + \vb^\top \vy
$,
with $Q_\theta$ a neural network that maps $\vx$ to a matrix. Each energy can be uniquely represented by $(Q_\theta(\vx),\vb)$, so we can write the overall architecture as 
\begin{align}
    f_{\phi,\theta}(\vx,\vb):=\Alg_\phi^k(Q_\theta(\vx), \vb).
\end{align}
Assume we are given a set of $n$ i.i.d. samples $S_n=\{((\vx_1,\vb_1),\vy_1^*),\cdots, ((\vx_n,\vb_n),\vy_n^*)\}$, where the labels $\vy^*$ are given by the \emph{exact minimizer} $\Opt$ of the corresponding $Q^*$, i.e.,
\begin{align}
    \vy^* = \Opt(Q^*(\vx),\vb). 
\end{align}
Then the learning problem is to find the best model $f_{\phi,\theta}$
from the space $\gF:=\{f_{\phi,\theta}: (\phi,\theta)\in \Phi\times\Theta\}$ by minimizing the empirical loss function  
\begin{align}
    \min_{f_{\phi,\theta} \in \gF}~~{ \frac{1}{n}\sum_{i=1}^n} \ell_{\phi,\theta}(\vx_i,\vb_i),
\end{align}
where 
$
    \ell_{\phi,\theta}(\vx,\vb):= \|\Alg_\phi^k\rbr{Q_\theta(\vx),\vb}- \Opt(Q^*(\vx),\vb)\|_2. 
$
Furthermore, we assume:
\begin{itemize}[wide,nolistsep]
    \item We have $\gY = \R^d$, and both $Q_\theta$ and $Q^*$ map $\gX$ to $\gS_{\mu,L}^{d\times d}$, the space of symmetric positive definite (SPD) matrices with $\mu, L>0$ as its smallest and largest singular values, respectively. Thus the induced energy function $E_\theta$ will be $\mu$-strongly convex and $L$-smooth, and the output of $\Opt$ is unique.
    \item The input $(\vx,\vb)$ is a pair of random variables where $\vx\in\gX\subseteq \R^m$ and $\vb\in\gB\subseteq \R^d$. Assume $\vb$ satisfies $\E[\vb\vb^\top]=\sigma_b^2I$.
    Assume $\vx$ and $\vb$ are independent, and their joint distribution follows a probability measure $P$. Assume samples in $S_n$ are drawn i.i.d. from $P$.
\item Assume $\gB$ is bounded, and let 
$M= \sup_{(Q,\vb)\in\gS_{\mu,L}^{d\times d}\times\gB}\|\Opt(Q,\vb)\|_2$.
\end{itemize}
Though this setting does not encompass the full complexity of hybrid deep architectures, it already reveals interesting connections between algorithm properties of the reasoning module and the learning behaviors of hybrid architectures.

\section{Properties of Algorithms}
\vspace{-1mm}
\label{sec:algo-property}

In this section, we formally define the algorithm properties of the reasoning module $\Alg_\phi^k$, under the problem setting presented in Sec~\ref{sec:setting}.
After that, we compare the corresponding properties of gradient descent, $\GD_\phi^k$, and Nesterov's accelerated gradients, $\NAG_\phi^k$, as concrete examples. 

\textbf{(I)} The \textbf{convergence rate} 
of an algorithm expresses how fast the optimization error decreases as $k$ grows. Formally, we say $\Alg_\phi^k$ has a convergence rate $\Conv(k,\phi)$ 
if for any $Q\in\gS_{\mu,L}^{d\times d},\vb\in\gB$,
\begin{align}
    \|\Alg_\phi^k(Q,\vb)-\Opt(Q,\vb)\|_2\leq \Conv(k,\phi)\|\Alg_\phi^0(Q,\vb)-\Opt(Q,\vb)\|_2.
\end{align}

\textbf{(II) Stability} of an algorithm characterizes its robustness to small \textit{perturbations in the optimization objective}, which corresponds to the perturbation of $Q$ and $\vb$ in the quadratic case. For the purpose of this paper, we say an algorithm $\Alg_\phi^k$ is $\Stab(k,\phi)$-stable 
if for any $Q,Q'\in\gS_{\mu,L}^{d\times d}$ and $\vb,\vb'\in\gB$,
\begin{align}
  \| \Alg_\phi^k(Q,\vb) - \Alg_\phi^k(Q',\vb') \|_2\leq \Stab(k,\phi)\|Q-Q'\|_2 + \Stab(k,\phi)\|\vb-\vb'\|_2,
\end{align}
where $\|Q-Q'\|_2$ is the spectral norm of the matrix 
$Q-Q'$.

\textbf{(III) Sensitivity} characterizes the robustness to small \textit{perturbations in the algorithm parameters $\phi$}. We say the sensitivity of $\Alg^k_\phi$ 
is $\Sens(k)$ if it holds for all $Q\in\gS_{\mu,L}^{d\times d},\vb\in\gB$, and $\phi,\phi'\in\Phi$ that
\begin{align}
    \|\Alg_\phi^k(Q,\vb) - \Alg_{\phi'}^k(Q,\vb)\|_2\leq \Sens(k) \|\phi-\phi'\|_2.
\end{align}
This concept is referred in the deep learning community to ``parameter perturbation error'' or ``sharpness''~\cite{keskar2016large,kawaguchi2017generalization,neyshabur2017exploring}. It has been used for deriving generalization bounds of neural networks, both in the Rademacher complexity framework \cite{bartlett2017spectrally} and PAC-Bayes framework~\cite{neyshabur2018a}. 

\textbf{(IV)} The \textbf{stable region} is the range $\Phi$ of the parameters $\phi$ where the algorithm output will remain bounded as $k$ grows to infinity, i.e., numerically stable. Only when the algorithms operate in the stable region, the corresponding $\Conv(k,\phi)$, $\Stab(k,\phi)$ and $\Sens(k)$ will remain finite
for all $k$. It is usually very difficult to identity the exact stable region, but a sufficient range 
can be provided.

\textbf{GD and NAG.} Now we will compare the above four algorithm properties for gradient descent and Nesterov's accelerated gradient method, both of which can be used to solve the quadratic optimization in our problem setting. First, the algorithm update steps are summarized bellow: 
\begin{align}\vspace{-1mm}
    \texttt{GD}_\phi: &\,\, \vy_{k+1}\gets\vy_k - \phi(Q\vy_k+\vb) &
    \texttt{NAG}_\phi: &  
    \begin{cases}
        \vy_{k+1}\gets \vz_k - \phi(Q\vz_k+\vb)\\
        \vz_{k+1}\gets \vy_{k+1}+\frac{1-\sqrt{\mu\phi}}{1+\sqrt{\mu\phi}}(\vy_{k+1}-\vy_k)
    \end{cases}
\end{align}
where the hyperparameter $\phi$ corresponds to the step size. The initializations $\vy_0,\vz_0$ are set to zero vectors throughout this paper.
Denote the results of $k$-step update, $\vy_{k}$, of GD and NAG by $\texttt{GD}_\phi^k(Q,\vb)$ and  $\texttt{NAG}_\phi^k(Q,\vb)$, respectively. Then their algorithm properties are summarized in Table~\ref{tab:alg-properties}.
\begin{table}[ht!]
    \vspace{-3mm}
    \centering
    \caption{\small Comparison of algorithm properties between GD and NAG. For simplicity, only the order in $k$ is presented. Complete statements with detailed coefficients and proofs are given in Appendix~\ref{app:algo-properties}.}
    \begin{tabular}{c|ccccc}
        \toprule
        \Alg & $\Conv(k,\phi)$ & $\Stab(k,\phi)$ & $\Sens(k)$ & Stable region $\Phi$\\
        \midrule
        $\GD_\phi^k$ & $\gO\rbr{(1-\phi\mu)^k}$ 
         & $\gO\rbr{1-(1-\phi\mu)^k}$
         & $\gO\rbr{k(1-c_0\mu)^{k-1}}$ 
         & $[c_0,\frac{2}{\mu+L}]$ 
         \\ 
        $\NAG_\phi^k$ & $\gO\rbr{k(1-\sqrt{\phi\mu})^k}$ 
         & $\gO\rbr{1-(1-\sqrt{\phi\mu})^k}$
         & $\gO\rbr{k^3(1-\sqrt{c_0\mu})^k}$
         & $[c_0,\frac{4}{\mu+3L}]$
         \\
        \bottomrule
    \end{tabular}\vspace{-2mm}
    \label{tab:alg-properties}
\end{table}

Table~\ref{tab:alg-properties} shows: 
\textit{(i) Convergence}: NAG converges faster than GD, especially when $\mu$ is very small, which is a well-known result.
\textit{(ii) Stability}: However, as $k$ grows, NAG is less stable than GD for a fixed $k$, in contrast to their convergence behaviors. This is pointed out in \citep{chen2018stability}, which proves that a faster converging algorithm has to be less stable.
\textit{(iii) Sensitivity}: The sensitivity behaves similar to the convergence, where NAG is less sensitive to step-size perturbation than GD. Also, 
the sensitivity of both algorithms gets smaller as $k$ grows larger.
\textit{(iv): Stable region}: Since $\mu<L$, the stable region of GD is larger than that of NAG. It means a larger step size is allowable for GD that will not lead to exploding outputs even if $k$ is large. Note that all the other algorithm properties are based on the assumption that $\phi$ is in the stable region $\Phi$. Furthermore, as $k$ goes to infinity, the space $\{\Alg^k_\phi:\phi\in\Phi\}$ will finally shrink to a single function, which is the exact minimizer $\{\Opt\}$.

Our purpose of comparing the algorithm properties of GD and NAG is to show in a later section their difference as a reasoning layer in deep architectures. However, some results in Table~\ref{tab:alg-properties} are new by themselves, which may be of independent interest. For instance, we are not aware of other analysis of the sensitivity of GD and NAG to their step-size perturbation. Besides, for the stability results, we provide a proof with a weaker assumption where $\phi$ can be larger than $1/L$, which is not allowed in \citep{chen2018stability}. This is necessary since in practice the learned step size $\phi$ is usually larger than $1/L$.

\vspace{-1mm}
\section{Approximation Ability} 
\vspace{-1mm}
\label{sec:representation}

How will the algorithm properties affect the approximation ability of deep architecture with reasoning layers? Given a model space 
$ \gF:=\{\Alg_\phi^k\rbr{Q_\theta(\cdot),\cdot}: \phi\in\Phi, \theta\in\Theta\}
$,
we are interested in its approximation ability to functions of the form $\Opt\rbr{Q^*(\vx),\vb}$. More specifically, we define the loss
\begin{align}\label{eq:loss}\vspace{-2mm}
    \ell_{\phi,\theta}(\vx,\vb):= \|\Alg_\phi^k\rbr{Q_\theta(\vx),\vb}- \Opt(Q^*(\vx),\vb)\|_2,
\end{align}
and measure the approximation ability by
$\inf_{\phi\in\Phi,\theta\in\Theta} \sup_{Q^*\in \gQ^*} P \ell_{\phi,\theta}$, where $\gQ^*:=\{\gX\mapsto \gS_{\mu,L}^{d\times d}\}$ and $P\ell_{\phi,\theta}=\E_{\vx,\vb}[\ell_{\phi,\theta}(\vx,\vb)]$. 
Intuitively, using a faster converging algorithm, the model $\Alg_\phi^k$ could represent the reasoning-task structure, $\Opt$, better and improve the overall approximation ability. 
Indeed we can prove the following lemma confirming this intuition.

\begin{restatable}{lemma}{lmconv} 
\label{lm:convergence-approximation}
{\bf (Faster Convergence $\Rightarrow$ Better Approximation Ability).} 
Assume the problem setting in Sec~\ref{sec:setting}. The approximation ability can be bounded by two terms:
\begin{align}\vspace{-2mm}
   \inf_{\phi\in\Phi,\theta\in\Theta} \sup_{Q^*\in\gQ^*} P\ell_{\phi,\theta} 
    &\leq \sigma_b \mu^{-2}\underbrace{\inf_{\theta\in\Theta}\sup_{Q^* \in \gQ^*} P\|Q_\theta-Q^*\|_F}_{\text{approximation ability of the neural module}}+ M \underbrace{\inf_{\phi\in\Phi} \bConv{k,\phi}}_{\text{best convergence}}.\vspace{-3mm}
    \label{eq:approximation}
\end{align}
\end{restatable}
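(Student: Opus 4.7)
The plan is to decompose the loss by triangle inequality through the intermediate point $\Opt(Q_\theta(\vx),\vb)$:
\begin{align*}
\ell_{\phi,\theta}(\vx,\vb) \le \|\Alg_\phi^k(Q_\theta(\vx),\vb) - \Opt(Q_\theta(\vx),\vb)\|_2 + \|\Opt(Q_\theta(\vx),\vb) - \Opt(Q^*(\vx),\vb)\|_2.
\end{align*}
The first term is an ``algorithm error'' measuring the gap between the $k$-step iterate and the exact minimizer on the same energy $Q_\theta(\vx)$, while the second is a ``model error'' capturing how well the neural module approximates $Q^*$. Crucially these depend on disjoint parameter groups ($\phi$ vs.\ $\theta$), so once each is bounded pointwise the outer $\inf_{\phi,\theta}$ separates into a sum of two independent infima and yields the stated bound.

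For the algorithm error, I would apply the convergence definition from Sec~\ref{sec:algo-property} with the energy $Q_\theta(\vx)$ in place of $Q$. Since the initialisation $\Alg_\phi^0$ is the zero vector, $\|\Alg_\phi^0(Q_\theta(\vx),\vb) - \Opt(Q_\theta(\vx),\vb)\|_2 = \|\Opt(Q_\theta(\vx),\vb)\|_2 \le M$, and so the convergence property gives the uniform bound
\begin{align*}
\|\Alg_\phi^k(Q_\theta(\vx),\vb) - \Opt(Q_\theta(\vx),\vb)\|_2 \le M\cdot \Conv(k,\phi).
\end{align*}

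For the model error, I would use the quadratic structure: $\Opt(Q,\vb) = -Q^{-1}\vb$, together with the resolvent identity $Q_\theta^{-1} - (Q^*)^{-1} = Q_\theta^{-1}(Q^* - Q_\theta)(Q^*)^{-1}$, to write
\begin{align*}
\Opt(Q_\theta(\vx),\vb) - \Opt(Q^*(\vx),\vb) = -Q_\theta(\vx)^{-1}\bigl(Q^*(\vx) - Q_\theta(\vx)\bigr)Q^*(\vx)^{-1}\vb.
\end{align*}
Writing $A(\vx)$ for the matrix prefactor, the factor $\sigma_b\mu^{-2}$ emerges from two manipulations. First, using independence of $\vx$ and $\vb$ together with $\E[\vb\vb^\top] = \sigma_b^2 I$, Jensen's inequality gives $\E_\vb\|A\vb\|_2 \le \sqrt{\E_\vb\|A\vb\|_2^2} = \sqrt{\sigma_b^2\,\Tr(A^\top A)} = \sigma_b\|A\|_F$. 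Second, Frobenius sub-multiplicativity together with $\|Q_\theta^{-1}\|_2,\|(Q^*)^{-1}\|_2 \le 1/\mu$ yields $\|A\|_F \le \mu^{-2}\|Q^*(\vx) - Q_\theta(\vx)\|_F$.

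The main subtlety will be the first manipulation above: one must handle the $\vb$-expectation via the second-moment identity rather than, say, Cauchy--Schwarz on the joint $(\vx,\vb)$-expectation, as the latter would introduce an unwanted $\sqrt{d}$ factor that would spoil the coefficient. Everything else is a routine assembly: combining the two pointwise bounds, taking $\E_{\vx,\vb}$, then $\sup_{Q^*\in\gQ^*}$, and finally $\inf_{\phi,\theta}$; since the algorithm term depends only on $\phi$ and the model term only on $\theta$, the infimum splits cleanly into $M\inf_\phi \Conv(k,\phi) + \sigma_b\mu^{-2}\inf_\theta \sup_{Q^*} P\|Q_\theta - Q^*\|_F$, completing the proof.
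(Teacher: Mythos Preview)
Your proposal is correct and matches the paper's proof essentially step for step: the same triangle-inequality split through $\Opt(Q_\theta(\vx),\vb)$, the same use of the convergence definition with zero initialisation to get $M\cdot\Conv(k,\phi)$, and the same second-moment computation $\E_\vb\|A\vb\|_2^2=\sigma_b^2\|A\|_F^2$ followed by the resolvent identity and the spectral bound $\|Q^{-1}\|_2\le\mu^{-1}$ to extract $\sigma_b\mu^{-2}\|Q_\theta-Q^*\|_F$. The paper's write-up is slightly more terse but uses exactly the ingredients you identified, including the Jensen/H\"older step to pass from $\E_\vb\|\cdot\|_2$ to $\sqrt{\E_\vb\|\cdot\|_2^2}$.
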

With Lemma~\ref{lm:convergence-approximation}, we conclude that:
{A faster converging algorithm can define a model with better~approximation ability.} For example, for a fixed $k$ and $Q_\theta$, NAG converges faster than GD, so  $\NAG^k_\phi$ can approximate $\Opt$ more accurately than $\GD^k_\phi$, which is experimentally validated in Sec~\ref{sec:experiments}.

Similarly, we can also reverse the reasoning, and ask the question that, given two hydrid architectures with the same approximation error, which architecture has a smaller error in representing the energy function $Q^*$? We show that this error is also intimately related to the convergence of the algorithm.

\begin{restatable}{lemma}{lmbdd} 
    \label{lm:bound-on-energy}
    {\bf (Faster Convergence $\Rightarrow$ Better Representation of $Q^*$).}
    Assume the problem setting in Sec~\ref{sec:setting}. Then
    $
        \forall \phi\in \Phi,\theta\in\Theta, Q^*\in\gQ^*
    $
    it holds true that
    \begin{align}
              P\|Q_\theta-Q^*\|_F^2 \leq \sigma_b^{-2}L^4(\sqrt{P\ell_{\phi,\theta}^2} +M \cdot \bConv{k,\phi})^2.
        \vspace{-3mm}
    \end{align}
\end{restatable}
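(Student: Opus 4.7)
The plan is to bound $P\|Q_\theta-Q^*\|_F^2$ by relating it back to the difference of the exact minimizers, which we can then control via the triangle inequality using $\ell_{\phi,\theta}$ and the convergence rate. The key algebraic observation is that for quadratics with SPD Hessians, $\Opt(Q,\vb) = -Q^{-1}\vb$, so the difference of minimizers is a linear function of $\vb$, and we can exploit the second-moment structure $\E[\vb\vb^\top]=\sigma_b^2 I$ to turn an $L^2(P)$ bound on the minimizer difference into a Frobenius-norm bound on $Q_\theta^{-1}-Q^{*-1}$.

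First, I would add and subtract $\Alg_\phi^k(Q_\theta(\vx),\vb)$ and apply the triangle inequality to get
\begin{align*}
\|\Opt(Q_\theta(\vx),\vb)-\Opt(Q^*(\vx),\vb)\|_2 \le \ell_{\phi,\theta}(\vx,\vb) + \|\Alg_\phi^k(Q_\theta(\vx),\vb)-\Opt(Q_\theta(\vx),\vb)\|_2.
\end{align*}
The second term is bounded using the convergence property with $\vy_0=\vzero$: it is at most $\Conv(k,\phi)\|\Opt(Q_\theta(\vx),\vb)\|_2 \le M\cdot\Conv(k,\phi)$. Taking $L^2(P)$ norms and using Minkowski's inequality yields
\begin{align*}
\sqrt{P\|\Opt(Q_\theta,\vb)-\Opt(Q^*,\vb)\|_2^2} \le \sqrt{P\ell_{\phi,\theta}^2} + M\cdot\Conv(k,\phi).
\end{align*}

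Second, I would unpack the left-hand side. Since $\Opt(Q,\vb)=-Q^{-1}\vb$, we have $\Opt(Q_\theta(\vx),\vb)-\Opt(Q^*(\vx),\vb) = -(Q_\theta(\vx)^{-1}-Q^*(\vx)^{-1})\vb$. Using the independence of $\vx$ and $\vb$ together with $\E[\vb\vb^\top]=\sigma_b^2 I$, the trace identity gives
\begin{align*}
P\|\Opt(Q_\theta,\vb)-\Opt(Q^*,\vb)\|_2^2 = \sigma_b^2\, \E_\vx\!\left[\|Q_\theta(\vx)^{-1}-Q^*(\vx)^{-1}\|_F^2\right].
\end{align*}

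Third, I would relate $\|Q_\theta^{-1}-Q^{*-1}\|_F$ back to $\|Q_\theta-Q^*\|_F$ via the standard resolvent identity $Q_\theta^{-1}-Q^{*-1} = Q_\theta^{-1}(Q^*-Q_\theta)Q^{*-1}$, rearranged as $Q_\theta-Q^* = Q_\theta(Q_\theta^{-1}-Q^{*-1})Q^*$. Since both $Q_\theta$ and $Q^*$ take values in $\gS_{\mu,L}^{d\times d}$, we have $\|Q_\theta\|_2,\|Q^*\|_2\le L$, and submultiplicativity of the Frobenius norm against the spectral norm yields $\|Q_\theta-Q^*\|_F \le L^2\|Q_\theta^{-1}-Q^{*-1}\|_F$, i.e., $\|Q_\theta^{-1}-Q^{*-1}\|_F^2 \ge L^{-4}\|Q_\theta-Q^*\|_F^2$.

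Chaining the three displays then gives $P\|Q_\theta-Q^*\|_F^2 \le \sigma_b^{-2}L^4\,P\|\Opt(Q_\theta,\vb)-\Opt(Q^*,\vb)\|_2^2$, which combined with the first step produces the claimed inequality. I don't expect any real obstacle: the only mildly delicate point is the correct use of independence of $\vx$ and $\vb$ to reduce the quadratic form in $\vb$ to a Frobenius-norm expression in the (random) matrix $Q_\theta(\vx)^{-1}-Q^*(\vx)^{-1}$, and the submultiplicative Frobenius-norm bound, both of which are standard.
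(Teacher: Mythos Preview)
Your proposal is correct and follows essentially the same route as the paper: pointwise triangle inequality to separate the convergence error $M\cdot\Conv(k,\phi)$, the trace identity with $\E[\vb\vb^\top]=\sigma_b^2 I$ to pass from the minimizer difference to $\|Q_\theta^{-1}-Q^{*-1}\|_F$, and the resolvent identity with $\|Q_\theta\|_2,\|Q^*\|_2\le L$ to get the $L^4$ factor. The only cosmetic difference is that you invoke Minkowski's inequality in $L^2(P)$ directly, whereas the paper squares the pointwise bound, takes expectation, and then expands $(\ell_{\phi,\theta}+M\Conv(k,\phi))^2$ using $(P\ell_{\phi,\theta})^2\le P\ell_{\phi,\theta}^2$; these yield the same final inequality.
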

Lemma~\ref{lm:bound-on-energy} highlights the benefit of using an algorithmic layer that aligns with the reasoning-task structure. Here the task structure is represented by $\Opt$, the minimizer, and convergence measures how well $\Alg_\phi^k$ is aligned with $\Opt$. Lemma~\ref{lm:bound-on-energy} essentially indicates that \textit{if the structure of a reasoning module can better align with the task structure, then it can better constrain the search space of the underlying neural module $Q_\theta$}, making it easier to learn, and further lead to better sample complexity, which we will explain more in the next section. 

As a concrete example for Lemma~\ref{lm:bound-on-energy}, if $\GD_\phi^k\rbr{Q_\theta,\cdot}$ and  $\NAG_\phi^k\rbr{Q_\theta,\cdot}$ achieve the {\bf same} accuracy for approximating $\Opt\rbr{Q^*,\cdot}$, then the neural module $Q_\theta$ in $\NAG_\phi^k\rbr{Q_\theta,\cdot}$ will have a {\bf better} accuracy for approximating $Q^*$ than $Q_\theta$ in $\GD_\phi^k\rbr{Q_\theta,\cdot}$. In other words, a faster~converging algorithm imposes more constraints on the energy function $Q_\theta$, making it approach $Q^*$ faster.

\section{Generalization Ability}
\label{sec:generalization}

How will algorithm properties affect the generalization ability of deep architectures with reasoning layers? We theoretically showed that the generalization bound is determined by both the algorithm properties and the complexity of the neural module. Moreover, it induces interesting implications - when the neural module is over- or under- parameterized, the generalization bound is dominated by algorithm stability; but when the neural module has an about-right parameterization, the bound is dominated by the product of algorithm stability and convergence.

More specifically, we will analyze \textit{generalization gap} between the expected loss and empirical loss,  
\begin{align}
    P\ell_{\phi,\theta}=\E_{\vx,\vb} \ell_{\phi,\theta}(\vx,\vb) \text{~~and~~} P_n\ell_{\phi,\theta}={\textstyle \frac{1}{n}\sum_{i=1}^n} \ell_{\phi,\theta}(\vx_i,\vb_i), \text{ respectively},
\end{align}
where $P_n$ is the empirical probability measure induced by the samples $S_n$. 
Let 
$
\ell_\gF:=\{\ell_{\phi,\theta}:\phi\in\Phi,\theta\in\Theta\}
$
be the function space of losses of the models. The generalization gap, $P\ell_{\phi,\theta} - P_n\ell_{\phi,\theta}$, can be bounded by the Rademacher complexity, $\E R_n\ell_\gF$, which is defined as the expectation of the empirical Rademacher complexity,
$
R_n\ell_\gF:=\E_{\bm{\sigma}}\sup_{\phi\in\Phi,\theta\in\Theta}\frac{1}{n}$ $\sum_{i=1}^n\sigma_i \ell_{\phi,\theta}(\vx_i,\vb_i),
$
where $\{\sigma_i\}_{i=1}^n$ are $n$ independent Rademacher random variables uniformly distributed over $\{\pm 1\}$. Generalization bounds derived from Rademacher complexity have been studied in many works~\cite{koltchinskii2000rademacher,koltchinskii2001rademacher,koltchinskii2002empirical,bartlett2002rademacher}.

However, deriving the Rademacher complexity of $\ell_\gF$ is highly nontrivial in our case, and we are not aware of prior bounds for deep learning models with reasoning layers. Aiming at bridging the relation between algorithm properties and generalization ability that can explain experimental~observations, we find that 
standard Rademacher complexity analysis is insufficient.
The shortcoming of the standard Rademacher complexity is that it provides \textit{global} estimates of the complexity of the model space, which ignores the fact that the training process will likely pick models with small errors. Taking this factor into account, we resort to more refined analysis using \emph{local} Rademacher complexity~\citep{bartlett2017spectrally,bartlett2005local,koltchinskii2006local}. Remarkably, we found that the bounds derived via \textit{global} and \textit{local} Rademacher complexity will lead to \textit{different} conclusions about the effects of algorithm layers. That is, an algorithm that converges faster could lead to a model space that has a larger global Rademacher complexity but a smaller local Rademacher complexity. Also, the \emph{global} Rademacher complexity is dominated by algorithmic stability. However, in the \emph{local} counterpart, there is a trade-off term between stability and convergence, which aligns better with the experimental observations.

\textbf{Main Result.}
More specifically, the local Rademacher complexity of $\ell_\gF$ at level $r$ is defined as 
\begin{align}
    \E R_n \ell_\gF^{loc}(r)~\text{where}~\ell_\gF^{loc}(r):=\{\ell_{\phi,\theta}:\phi\in\Phi,\theta\in\Theta,P\ell_{\phi,\theta}^2\leq r\}.
\end{align}
This notion is less general than the one defined in \cite{bartlett2005local, koltchinskii2006local} but is sufficient for our purpose. 
Here we also define a loss function space $\ell_\gQ:=\{\|Q_\theta-Q^*\|_F: \theta\in\Theta\}$ for the neural module $Q_\theta$, and introduce its local Rademacher complexity $\E R_n \ell_\gQ^{loc}(r_q)$, where $\ell_\gQ^{loc}(r_q)= \big\{\|Q_\theta-Q^*\|_F \in\ell_\gQ: P\|Q_\theta-Q^*\|_F^2\leq r_q \big\}$. With these definitions, we can show that the local Rademacher complexity of the hybrid architecture is explicitly related to all considered algorithm properties, namely convergence, stability and sensitivity, and there is an intricate trade-off. 

\begin{theorem} \label{thm:main-rademacher} Assume the problem setting in Sec~\ref{sec:setting}.
Then we have for any $t>0$ that
\vspace{-2mm}
    \begin{align}
        \E R_n\ell_{\gF}^{loc}(r)\leq & \sqrt{2}d n^{-\frac{1}{2}}\rStab{k}\rbr{\sqrt{(\bConv{k} M +\sqrt{r})^2C_1(n)+ C_2(n,t)}+C_3(n,t)+ 4} \label{eq:trade-off}\\ \vspace{-1mm}
        & + \bSens{k} B_{\Phi}, \label{eq:sensitivity}
    \end{align}
    where $\Stab(k)=\sup_{\phi}\Stab(k,\phi)$ and $\Conv(k)=\sup_{\phi}\Conv(k,\phi)$ are worst-case~stability and convergence, \Scale[0.9]{B_{\Phi}=\frac{1}{2}\sup_{\phi,\phi'\in\Phi}\|\phi-\phi'\|_2}, \Scale[0.9]{C_1(n)=\gO(\log N(n))}, \Scale[0.9]{C_3(n,t)= \gO(\frac{\log N(n)}{\sqrt{n}}+\frac{\sqrt{\log N(n)}}{e^t})}, \Scale[0.9]{C_2(n,t)=\gO(\frac{t\log N(n)}{n}+(C_3(n,t)+1)\frac{\log N(n)}{\sqrt{n}})}, and \Scale[0.9]{N(n)=\gN(\frac{1}{\sqrt{n}},\ell_\gQ,L_\infty)} is the covering number of $\ell_\gQ$ with radius $\frac{1}{\sqrt{n}}$ and $L_\infty$ norm.
\end{theorem}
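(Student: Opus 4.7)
My plan is to reduce the local Rademacher complexity of $\ell_\gF$ to that of the neural-module loss class $\ell_\gQ$ alone, paying three prices corresponding to the three algorithm properties. First, I would peel off the dependence on $\phi$ using sensitivity. Fix a reference $\phi_0 \in \Phi$. By the reverse triangle inequality for the Euclidean norm and the definition of $\Sens(k)$, the residual $\abr{\ell_{\phi,\theta}(\vx,\vb)-\ell_{\phi_0,\theta}(\vx,\vb)}\le \Sens(k)\|\phi-\phi_0\|_2$ uniformly in $(\vx,\vb)$. Writing $\ell_{\phi,\theta}=\ell_{\phi_0,\theta}+(\ell_{\phi,\theta}-\ell_{\phi_0,\theta})$ and using subadditivity of the Rademacher supremum, $\E R_n\ell_\gF^{loc}(r)$ splits into a residual bounded uniformly by $\Sens(k)B_\Phi$ (yielding \eqref{eq:sensitivity}) and a $\theta$-only Rademacher sum, after taking $\phi_0$ to be a Chebyshev centre of $\Phi$ so that the residual is absorbed into $B_\Phi$.

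Second, I would translate the local ball from loss space to energy space via convergence and then use stability to contract losses onto energies. Lemma~\ref{lm:bound-on-energy} forces every $(\phi,\theta)\in\ell_\gF^{loc}(r)$ into the energy ball
\[
P\|Q_\theta-Q^*\|_F^2 \le \sigma_b^{-2}L^4\rbr{\sqrt{r}+M\,\Conv(k,\phi)}^2 \le r_q := \sigma_b^{-2}L^4\rbr{\sqrt{r}+M\Conv(k)}^2,
\]
where the last bound uses worst-case convergence. Meanwhile, the reverse triangle inequality combined with the stability property shows that $\theta\mapsto\ell_{\phi_0,\theta}(\vx,\vb)$ is $\Stab(k,\phi_0)$-Lipschitz in $Q_\theta(\vx)$ (spectral, hence Frobenius, norm). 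A vector-valued contraction in the spirit of Maurer's inequality then bounds the residual $\theta$-only complexity by $\sqrt{2}\,\Stab(k)\cdot\E R_n\ell_\gQ^{loc}(r_q)$, with the dimensional factor $d$ in the theorem accounting for the $d\times d$ matrix output of $Q_\theta$.

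Finally, the local Rademacher complexity of $\ell_\gQ$ at level $r_q$ is controlled by an entropy-plus-concentration estimate: covering $\ell_\gQ$ at resolution $n^{-1/2}$ in $L_\infty$, and applying a Talagrand--Bousquet concentration uniformly over the cover to pass from population to empirical $L^2$-balls, yields
\[
\E R_n\ell_\gQ^{loc}(r_q) \le \frac{1}{\sqrt{n}}\rbr{\sqrt{r_q\,C_1(n)+C_2(n,t)}+C_3(n,t)+\text{const}},
\]
with $C_1(n)=\gO(\log N(n))$ from the cover entropy, the $t$-dependent terms arising from a union bound with failure probability $e^{-t}$ (hence the $e^t$ in $C_3$), and the remaining additive constant absorbed into the $+4$ of \eqref{eq:trade-off}. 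Substituting $r_q=(\Conv(k)M+\sqrt{r})^2\,\sigma_b^{-2}L^4$ into this bound and combining with Steps~1--2 yields the stated inequality.

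The hard part will be this last step. A naive Dudley integral would give only $\sqrt{r_q\log N(n)/n}$, missing both the $C_2(n,t)$ correction and the $e^{-t}$ tail; producing these requires a careful passage from the population-level local ball entering the definition of $\ell_\gF^{loc}(r)$ to the empirical-level local ball visible to the Rademacher sum, carried out via a Talagrand--Bousquet inequality applied uniformly on the $L_\infty$-cover. A secondary technical challenge is keeping the convergence bias $M\Conv(k)$ \emph{inside} the square root so that it trades off correctly against $\sqrt{r}$ rather than appearing as an additive offset; this is precisely what Lemma~\ref{lm:bound-on-energy} buys when invoked at the definition of $r_q$, and it is this coupling that transfers the algorithm-theoretic bias--variance tradeoff into the statistical complexity of the hybrid architecture.
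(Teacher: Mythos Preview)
Your proposal is correct and follows essentially the same three-stage architecture as the paper's proof: a Lipschitz bound combining stability and sensitivity, a vector-contraction (Maurer/Khintchine--Kahane) step together with Lemma~\ref{lm:bound-on-energy} to reduce to $R_n\ell_\gQ^{loc}(r_q)$, and then Dudley plus Talagrand to bound the latter. Two minor differences worth noting: the paper handles $\phi$ and $\theta$ \emph{jointly} inside the contraction argument (the $\Sens(k)B_\Phi$ term falls out coordinate-by-coordinate during the symmetrisation) rather than peeling off $\phi$ first via a Chebyshev centre, and in Step~3 the paper applies Talagrand to the entire class $\ell_{\gQ^2}^{loc}(r_q)$ to control the \emph{empirical} second moment that caps the Dudley integral, which produces an \emph{implicit} bound (the right side contains $\E R_n\ell_\gQ^{loc}(r_q)$ again) that is then resolved by solving a quadratic---this self-referential step is the source of the precise forms of $C_2$ and $C_3$ and is somewhat understated in your sketch.
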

\begin{proof}[Proof Sketch.] 
We will explain the key steps here, and the full proof details are deferred to Appendix~\ref{app:generalization}. The essence of the proof is to find the relation between $R_n \ell_\gF^{loc}(r)$ and $R_n \ell_\gQ^{loc}(r_q)$, and also the relation between the local level $r$ and $r_q$. Then the analysis of the local Rademachar complexity of the end-to-end model $\Alg_\phi^k(Q_\theta,\cdot)$ can be reduced to that of the neural module $Q_\theta$.

More specifically, we first show that the loss $\ell_{\phi,\theta}$ is $\Stab(k)$-Lipschitz in $Q_\theta$ and $\Sens(k)$-Lipschitz in $\phi$.
By the triangle inequality and algorithm properties, we can 
bound the sensitivity of the loss by 
\begin{align}
    |\ell_{\phi,\theta}(\vx) - \ell_{\phi',\theta'}(\vx)|\leq \Stab(k)\|Q_{\theta}(\vx)-Q_{\theta'}(\vx)\|_2 + \Sens(k)\|\phi-\phi'\|_2.
    \label{eq:loss_sensitivity}
\end{align}
Second, by leveraging vector-contraction inequality for Rademacher complexity of vector-valued hypothesis~\cite{maurer2016vector,cortes2016structured} and our previous observations in Lemma~\ref{lm:bound-on-energy}, we can turn the sensitivity bound on the loss function in~\eqref{eq:loss_sensitivity} to a local Rademacher complexity bound
\begin{align}
    R_n\ell_{\gF}^{loc}(r) \leq \sqrt{2}d\,\rStab{k} R_n\ell_\gQ^{loc}(r_q) + \bSens{k} B_\Phi \text{ with }r_q=\sigma_b^{-2}L^4(\sqrt{r}+M\bConv{k})^2.
\end{align}
Therefore, bounding the local Rademacher complexity of $\ell_{\gF}^{loc}$ at level $r$ resorts to bounding that of $\ell_\gQ^{loc}$ at level $r_q$. This inequality has already revealed the role of stability, convergence, and sensitivity in bounding local Rademacher complexity, and is the key step in the proof.

Third, based on an extension of Talagrand’s inequality for empirical processes~\citep{talagrand1994sharper,bartlett2005local}, we can bound the empirical error $P_n\|Q_\theta-Q^*\|_F^2$ using $r_q$ and some other terms with high probability. Then $R_n\ell_\gQ^{loc}(r_q)$ can be bounded using the covering number of $\ell_\gQ$ via the classical Dudley entropy integral \cite{dudley2014uniform}, where the upper integration bound is given by the upper bound of $P_n\|Q_\theta-Q^*\|_F^2$. 
\end{proof}

{\bf Trade-offs between convergence, stability and sensitivity.} Generally speaking, the convergence rate $\bConv{k}$ and sensitivity $\bSens{k}$ have similar behavior, but $\rStab{k}$ behaves opposite to them; see illustrations in Fig~\ref{fig:algo-property}. Therefore, the way these three quantities interact in Theorem~\ref{thm:main-rademacher} suggests that in different regimes one may see different generalization behavior. 
More specially, depending on the parameterization of $Q_\theta$, the coefficients $C_1$, $C_2$, and $C_3$ in \eqref{eq:trade-off} may have different scale, making the local Rademacher complexity bound dominated by different algorithm properties. Since the coefficients $C_i$ are monotonely increasing in the covering number of $\ell_\gQ$, we expect that: 

\textbf{(i)} When $Q_\theta$ is \textbf{over-parameterized}, the covering number of $\ell_\gQ$ becomes large, as do the three coefficients. Large $C_i$ will reduce the effect of $\Conv(k)$ and make \eqref{eq:trade-off} dominated by $\Stab(k)$;

\textbf{(ii)} When $Q_\theta$ is \textbf{under-parameterized}, the three coefficients get small, but they still reduce the effect of $\Conv(k)$ given the constant 4 in \eqref{eq:trade-off}, again making it dominated by $\Stab(k)$;

\textbf{(iii)} When the parametrization of $Q_\theta$ is \textbf{about-right}, we can expect $\Conv(k)$ to play a critical role in \eqref{eq:trade-off}, which will then behave similar to the product $\Stab(k)\Conv(k)$, as illustrated schematically in Fig~\ref{fig:algo-property}. We experimentally validate these implications in Sec~\ref{sec:experiments}.

\begin{wrapfigure}[6]{R}{0.5\textwidth}
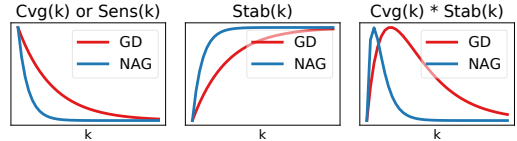

    \centering \vspace{-5mm}
    \includegraphics[width=0.166\textwidth]{figs/conv.pdf}\includegraphics[width=0.166\textwidth]{figs/stab.pdf}\includegraphics[width=0.166\textwidth]{figs/multi.pdf}\vspace{-3mm}
    \caption{\footnotesize Overall trend of algorithm properties.}
    \label{fig:algo-property}
\end{wrapfigure}
\textbf{Trade-off of the depth.} Combining the~above implications with the approximation ability analysis in Sec~\ref{sec:representation}, we can see that in the above-mentioned cases \textbf{(i)} and \textbf{(ii)}, deeper algorithm layers will lead to better approximation accuracy but worse generalization.  Only in the ideal case \textbf{(iii)}, a deeper reasoning module can induce both better representation and generalization abilities. This result provides practical guidelines for some recently proposed infinite-depth models~\citep{bai2019deep,el2019implicit}.

{\bf Standard Rademacher complexity analysis.} If we consider the standard Rademacher~complexity and directly bound it  by the covering number of $\ell_\gF$ via Dudley's entropy integral in the way some existing generalization bounds of deep learning are derived~\cite{bartlett2017spectrally,chen2019generalization,garg2020generalization}, we will get the following upper bound for the covering number, where $\bConv{k}$ does not play a role:
\begin{align}
    \gN(\epsilon, \ell_\gF, L_2(P_n))\leq \gN({\epsilon}/\rbr{2\rStab{k}},\gQ,L_2(P_n))\cdot \gN({\epsilon}/\rbr{2\bSens{k}}, \Phi, \|\cdot\|_2).
\end{align}
Since $\Phi$ only contains the hyperparameters in the algorithm and $\gQ:=\{Q_\theta,\theta\in\Theta\}$ is often highly expressive, typically stability will dominate this bound. Or, consider the case when algorithm layers are fixed so $\Phi$ only contains one element. Then this covering number is determined by stability, which infers that  $\NAG_1^k(Q_\theta,\cdot)$ has a \textbf{larger} Rademacher complexity than $\GD_1^k(Q_\theta,\cdot)$ since it is less stable. However, in the local Rademacher complexity bound in Theorem~\ref{thm:main-rademacher}, even if $\Sens(k)$ in \eqref{eq:sensitivity} is ignored, there is still a trade-off between convergence and stability which implies $\NAG_{1}^k(Q_\theta,\cdot)$ can have a \textbf{smaller} local Rademacher complexity than $\GD_{1}^k(Q_\theta,\cdot)$, leading to a different conclusion. 
Our experiments show the local Rademacher complexity bound is better for explaining the actual observations.

\vspace{-2mm}
\section{Pros and Cons of RNN as a Reasoning Layer} 
\vspace{-2mm}
\label{sec:rnn}

It has been shown that RNN (or GNN) can represent reasoning and iterative algorithms over structures~\cite{andrychowicz2016learning,garg2020generalization}. Can our analysis framework also be used to understand RNN (or GNN)? How will its behavior compare with more interpretable algorithm layers such as $\GD_\phi^k$ and $\NAG_\phi^k$? In the case of RNN, the algorithm update steps in each iteration are given by an RNN cell
\begin{align}
   \vy_{k+1} \gets \texttt{RNNcell}\rbr{Q,\vb, \vy_{k}}:= V\sigma\rbr{W^L\sigma\rbr{W^{L-1}\cdots W^2\sigma\rbr{W^1_1 \vy_t + W^1_2\vg_t}}}.
\end{align}
where the activation function $\sigma=\text{ReLU}$ takes $\vy_k$ and the gradient $\vg_t=Q\vy_t+\vb$ as inputs. Then a recurrent neural network $\RNN_{\phi}^k$ having $k$ unrolled RNN cells can be viewed as a neural algorithm. 
\begin{wraptable}[6]{R}{0.35\textwidth}
\vspace{-8mm}
\centering
\caption{\footnotesize Properties of $\RNN_\phi^k$. (Details are given in Appendix~\ref{app:rnn}.)}
    \resizebox{0.35\textwidth}{!}{
    \label{tab:rnn-properties}
    \begin{tabular}{@{}c|c@{}}
    \toprule
     Stable region $\Phi$ & $c_\phi<1$\\
     $\Stab(k,\phi)$ & $\gO(1-c_\phi^k)$\\
     $\Sens(k)$ & $\gO(1-(\inf_{\phi}c_\phi)^k)$ \\
     $\min_{\phi}\Conv(k,\phi)$ & $\gO(\rho^k)$~\text{with}~$\rho<1$\\
     \bottomrule
     \end{tabular}}
\end{wraptable}
The algorithm properties of $\RNN_\phi^k$ are summarized in Table~\ref{tab:rnn-properties}. Assume
$\phi= \{V,W^1_1,W^1_2,W^{2:L}\}$ is in a stable region with $c_\phi:=\Scale[0.9]{\sup_Q}\|V\|_2\|W_1^1+W_2^1Q\|_2\prod_{l=2}^{L}\|W^l\|_2<1$, so that the operations in \texttt{RNNcell} are strictly contractive, i.e.,  $\|\vy_{k+1}-\vy_k\|_2 < \|\vy_k-\vy_{k-1}\|_2$. In this case, the stability and sensitivity of $\RNN_\phi^k$ are guaranteed to be bounded.

However, the fundamental disadvantage of RNN is its lack of worst-case guarantee for convergence. In general the outputs of \Scale[0.9]{\RNN_\phi^k} may not converge to the minimizer $\Opt$, meaning that its worst-case convergence rate can be much larger than 1. This will lead to worse generalization bound according to our theory compared to \Scale[0.9]{\GD_\phi^k} and \Scale[0.9]{\NAG_\phi^k}. 

The advantage of RNN is its expressiveness, especially given the universal approximation ability of MLP in the RNNcell.
One can show that $\RNN_\phi^k$ can express \Scale[0.9]{\GD_\phi^k} or \Scale[0.9]{\NAG_\phi^k} with suitable choices of $\phi$. Therefore, its best-case convergence can be as small as $\gO(\rho^k)$ for some $\rho<1$. When the needed types of reasoning is unknown or beyond what existing algorithms are capable of, RNN has the potential to learn new reasoning types given sufficient data. 

\vspace{-1mm}
\section{Experimental Validation}
\vspace{-1mm}
\label{sec:experiments}
Our experiments aim to validate our theoretical prediction with computational simulations, rather than obtaining state-of-the-art results. We hope the theory together with these experiments can lead to practical guidelines for designing deep architectures with reasoning layers. We conduct two sets of experiments, where the first set of experiments strictly follows the problem setting described in Sec~\ref{sec:setting} and the second is conducted on BSD500 dataset~\citep{arbelaez2010contour} to demonstrate the possibility of generalizing the theorem to more realistic applications. Implementations in Python are released\footnote{{https://github.com/xinshi-chen/Deep-Architecture-With-Reasoning-Layer}}.

\subsection{Synthetic Experiments}

The experiments follow the problem setting in Sec~\ref{sec:setting}. We sample 10000 pairs of $(\vx,\vb)$ uniformly as overall dataset. During training,~$n$ samples are randomly drawn from these 10000 data points as the training set. Each $Q^*(\vx)$ is produced by a rotation matrix and a vector of eigenvalues parameterized by a randomly fixed 2-layer dense neural network with hidden dimension 3. Then the labels are generated according to $\vy= \Opt(Q^*(\vx),\vb)$.  
We train the model $\Alg_\phi^k(Q_\theta,\cdot)$ on $S_n$ using the loss in \eqref{eq:loss}. Here, $Q_\theta$ has the same overall architecture as $Q^*$ but the hidden dimension could vary. Note that in all figures, each $k$ corresponds to an {\bf independently trained model} with $k$ iterations in the algorithm layer, instead of the sequential outputs of a single model. Each model is trained by ADAM and SGD with learning rate grid-searched from [1e-2,5e-3,1e-3,5e-4,1e-4], and only the best result is reported. Furthermore, error bars are produced by 20 independent instantiations of the experiments.  
See Appendix~\ref{app:experiment} for more details.

\begin{wrapfigure}[5]{R}{0.17\textwidth}
    \vspace{-8mm}
    \centering
    \includegraphics[width=0.17\textwidth]{figs/exp_results/cond_dim16train_loss.pdf}
    \vspace{-6mm}
    \caption{}
    \label{fig:train-loss}
\end{wrapfigure}
\textbf{Approximation ability.} To validate Lemma~\ref{lm:convergence-approximation}, we compare $\GD_\phi^k\rbr{Q_\theta,\cdot}$ and $\NAG_\phi^k\rbr{Q_\theta,\cdot}$ in terms of approximation accuracy. For various hidden sizes of $Q_\theta$, the results are similar,  
so we report one representative case in Fig~\ref{fig:train-loss}. The approximation accuracy aligns with the convergence of the algorithms, showing that faster converging algorithm can induce better approximation ability.

\textbf{Faster convergence$\Rightarrow$better $Q_\theta$.} We report the error of the neural module $Q_\theta$ in Fig~\ref{fig:Q}. Note that $\Alg_\phi^k(Q_\theta,\cdot)$ is trained end-to-end, without supervision on $Q_\theta$. In Fig~\ref{fig:Q}, the error of $Q_\theta$ decreases as $k$ grows, in a rate similar to algorithm convergence. This validates the implication of Lemma~\ref{lm:bound-on-energy} that, when $\Alg_\phi^k$ is closer to $\Opt$, 
it can help the underlying neural module $Q_\theta$ to get closer to $Q^*$.

\begin{figure}[h!]
\vspace{-4mm}
\centering
    \begin{minipage}{0.39\textwidth}
    \centering
    \includegraphics[height=2.2cm]{figs/exp_results/crop_w_err_dim16.pdf}\hspace{0mm}\includegraphics[height=2.2cm]{figs/exp_results/crop_w_err_dim30.pdf}\vspace{-3mm}
    \caption{\small $P\|Q_\theta-Q^*\|_F^2$}
    \label{fig:Q}
\end{minipage}
\begin{minipage}{0.59\textwidth}
    \centering
    \includegraphics[height=2.2cm]{figs/exp_results/crop_gap_dim0.pdf}\hspace{0mm}\includegraphics[height=2.2cm]{figs/exp_results/crop_gap_dim16.pdf}\hspace{0mm}\includegraphics[height=2.2cm]{figs/exp_results/crop_gap_dim30.pdf}\vspace{-3mm}
    \caption{\small Generalization gap}
    \label{fig:gen-gap}
\end{minipage}
\vspace{-3mm}
\end{figure}

\begin{wrapfigure}[5]{R}{0.405\textwidth}
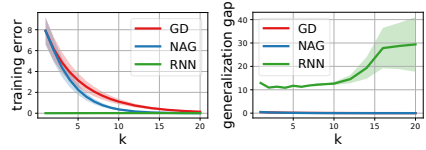

    \centering
        \vspace{-5mm}
        \includegraphics[width=0.2\textwidth]{figs/exp_results/l2o_train_loss_20.pdf}\hspace{0mm}\includegraphics[width=0.205\textwidth]{figs/exp_results/l2o_gap_20.pdf}
        \vspace{-6.5mm}
    \caption{\footnotesize Algorithm layers vs RNN.}
    \label{fig:rnn}
\end{wrapfigure}
\textbf{Generalization gap.} In Fig~\ref{fig:gen-gap}, we report the generalization gaps, with hidden sizes of $Q_\theta$ being~0, 16, and 32, which corresponds to the three cases \textbf{(ii)}, \textbf{(iii)}, and \textbf{(i)} discussed under Theorem~\ref{thm:main-rademacher}, respectively. Comparing Fig~\ref{fig:gen-gap} to Fig~\ref{fig:algo-property}, we can see that the experimental results match very well with the theoretical implications.

\textbf{RNN}. As discussed in Sec~\ref{sec:rnn}, RNN can be viewed as neural algorithms. To have a cleaner~comparison, we report their behaviors under the `learning to optimize' senario where the objectives $(Q,\vb)$ are given. Fig~\ref{fig:rnn} shows that RNN has a better representation power but worse generalization ability. 

\subsection{Experiments on Real Dataset}
\begin{wrapfigure}[17]{R}{0.21\textwidth}
\vspace{-15mm}
    \begin{tabular}{c}
         \includegraphics[width=0.15\textwidth]{figs/clean-crop2.pdf} \\
         (a) original image\\
         \includegraphics[width=0.15\textwidth]{figs/noisier_2-crop.pdf} \\
         (b) noisy image\\
         \includegraphics[width=0.15\textwidth]{figs/denoised-crop2.pdf} \\
         (c) denoised by\\ $\GD_\phi^{12}(E_\theta(X,\cdot))$
    \end{tabular}
    \vspace{-2mm}
\end{wrapfigure}
To show the real world applicability of our theoretical framework, we consider the \textbf{local adaptive image denoising} task. Details are given below. 

\textbf{Dataset.} We split BSD500 (400 images) into a training set (100 images) and a test set (300 images). Gaussian noises are added to each pixel with noise levels depending on image local smoothness, making the noise levels on edges lower than non-edge regions. The task is to restore the original image from the noisy version $X\in[0,1]^{180\times 180}$.

\textbf{Architecture.} In $\Alg_\phi^k\rbr{E_\theta(X,\cdot)}$, $\Alg_\phi^k$ is a $k$-step unrolled minimization algorithm to the $\ell_2$-regularized reconstruction objective $E_\theta(X,Y):= {\textstyle \frac{1}{2} \|Y + g_\theta(X) - X\|_F^2 + \frac{1}{2}\sum_{i,j}|[f_\theta(X)]_{i,j} Y_{i,j}|^2 }$,
and the residual $g_\theta(X)$ and position-wise regularization coefficient $f_\theta(X)$ are both DnCNN networks as in~\citep{zhang2017beyond}. The optimization objective, $E_\theta(X,Y)$, is quadratic in $Y$.

\textbf{Generalization gap.} We instantiate the hybrid architecture into different models using GD and NAG algorithms with different unrolled steps $k$. Each model is trained with 3000 epochs, and the \textit{generalization gaps} are reported in Fig.~\ref{fig:gap}. 
The results also show good consistency with our theory, where stabler algorithm (GD) can generalize better given \textit{over}/\textit{under-}parameterized neural module, and for the \textit{about-right} parameterization case, the generalization gap behaviors are similar to $\Stab(k)*\Conv(k)$.

\begin{wrapfigure}[7]{R}{0.47\textwidth}
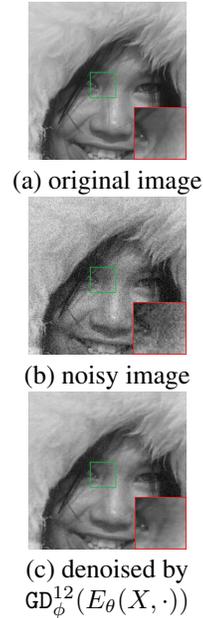
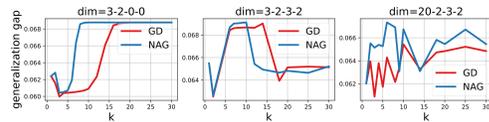

    \vspace{-8mm}
    \hspace*{4cm}
    \includegraphics[width=0.165\textwidth]{figs/exp_image/cond_dim20-2-3-2gap.png}\hspace{-4.4cm}\includegraphics[width=0.165\textwidth]{figs/exp_image/cond_dim3-2-3-2gap.png}\hspace{-4.4cm}\includegraphics[width=0.165\textwidth]{figs/exp_image/cond_dim3-2-0-0gap.png}
    \vspace{-3mm}
    \caption{\small Generalization gap. \textbf{Each $k$ corresponds to a separately trained model}. \textit{Left} (under-parameterized): $f_\theta$ is a DnCNN with 3 channels and 2 hidden layers and $g_\theta=0$. \textit{Middle} (about-right): both $f_\theta$ and $g_\theta$ have 3 channels and 2 hidden layers. \textit{Right (over-parameterized)}:  $f_\theta$ has 20 channels.}
    \label{fig:gap}
\end{wrapfigure}
\textbf{Visualization.} To show that the learned hybrid model has a good performance in this real application, we include a visualization of the original, noisy, and denoised images.

\section{Conclusion and Discussion}
\vspace{-2mm}

In this paper, we take an initial step towards the theoretical understanding of deep architectures with reasoning layers. Our theorem indicates intriguing relation between algorithm properties of the reasoning module and the approximation and generalization of the end-to-end model, which in turn provides practical guideline for designing reasoning layers. 
The current analysis is limited due to the simplified problem setting. However, assumptions we made are only for avoiding the non-uniqueness of the reasoning solution and the instability of the mapping from the reasoning solution to the neural module. The assumptions could be relaxed if we can involve other techniques to resolve these issues. These additional efforts could potentially generalize the results to more complex cases.

\section*{Broader Impact}
A common ethical concern of deep learning models is that they may not perform well on unseen examples, which could lead to the risk of producing biased content reflective of the training data. Our work, which learns an energy optimization model from the data, is not an exception. The approach we adopt to address this issue is to design hybrid deep architectures containing specialized reasoning modules. In the setting of quadratic energy functions, our theoretical analysis and numerical experiments show that hybrid deep models produce more reliable results than generic deep models on unseen data sets. More work is needed to determine the extent to which such hybrid model prevents biased outputs in more sophisticated tasks

\section*{Acknowledgement}
We would like to thank Professor Vladimir Koltchinskii for providing valuable suggestions and thank anonymous reviewers for providing constructive feedbacks. This work is supported in part by NSF grants CDS\&E-1900017 D3SC, CCF-1836936 FMitF, IIS-1841351, CAREER IIS-1350983 to L.S.

\bibliographystyle{unsrt}  

\newpage
\appendix

\section{Proof of Algorithm Properties}
\label{app:algo-properties}

In this section, we 
study  several important properties of 
gradient descent algorithm (GD) and Nesterov's accelerated gradient  algorithm (NAG),
which have already been summarized in Table~\ref{tab:alg-properties} of Section \ref{sec:algo-property}.
To simplify the presentation, we shall focus on  quadratic minimization problems
as in Section \ref{sec:setting}
and 
estimate the sharp dependence on the iteration number $k$.

More precisely, 
in the subsequent analysis,
we shall fix the constants $L\ge\mu>0$
and
assume the objective function is in the function class
$\gQ_{\mu,L}$, which contains all $\mu$-strongly convex and $L$-smooth quadratic functions on $\sR^d$.
Then, for any given $f\in \gQ_{\mu,L}$, 
 the eigenvalue decomposition 
 enables us to represent the Hessian matrix of $f$, denoted by $Q$,
 as  $Q=U\Lambda U^\top$, 
 where  
 $\Lambda$ is a diagonal matrix 
comprising of the   eigenvalues $(\lambda_i)_{i=1}^d$ of $Q$ sorted in ascending order, 
i.e., $\mu \le \lambda_1\le \ldots \le \lambda_d\le L$,
and $U\in \sR^{d\times d}$ is an  orthogonal matrix 
whose columns
constitute
an orthonormal basis of corresponding eigenvectors of $Q$.
Moreover, 
we shall denote by $\sI_d$  the $d\times d$ identity matrix,
and by  $||A||_2$ 
the spectral norm of 
a given matrix $A\in \sR^{d\times d}$.

We start with the GD algorithm. Let 
$f\in \gQ_{\mu,L}$,
  $s\ge 0$ be the stepsize,
  and $x_0\in \sR^d$ be the initial guess.
For each $k\in \sN\cup\{0\}$,
we denote by $x_{k+1}$  the $k+1$-th iterate generated by the following recursive formula
(cf.~the output $\vy_{k+1}$ of $\texttt{GD}_\phi$ in Section \ref{sec:algo-property}):
\begin{equation}\label{eq:gd_s}
x_{k+1}=x_k-s \nabla f(x_k).    
\end{equation}

The following theorem establishes the convergence of \eqref{eq:gd_s} as $k$ tends to infinity,
and  the Lipschitz dependence of the iterates $(x^s_k)_{k\in \sN}$ in terms of the stepsize $s$ (i.e., the sensitivity of GD).
Similar results can be established for general
 $\mu$-strongly convex and $L$-smooth objective functions.

\begin{theorem}\label{thm:gd_quadratic}
Let 
$f\in \gQ_{\mu,L}$ admit the  minimiser $x^*\in \sR^d$,
$x_0\in \sR^d$
and for each $s\ge 0$
let $(x^s_k)_{k\in\sN\cup\{0\}}$
be the iterates generated by \eqref{eq:gd_s}
with stepsize $s$.
Then we have for all 
$k\in \sN$, $c_0> 0$, $s,t\in [c_0,\frac{2}{\mu+L}]$
 that
\begin{align}
\|x^s_k-x^*\|_2\le  (1-s\mu)^k\|x_0-x^*\|_2,
\;
\|x^t_k-x^s_k\|_2\le  Lk (1-c_0\mu)^{k-1}|t-s|\|x_0-x^*\|_2.
\end{align}

\end{theorem}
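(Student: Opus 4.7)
The plan is to exploit the quadratic structure to write the GD iterates in a closed-form spectral expression, then reduce both bounds to elementary scalar estimates on the eigenvalues of $Q$.

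First I would write $f(x) = \frac{1}{2}(x-x^*)^\top Q (x-x^*) + f(x^*)$ so that $\nabla f(x) = Q(x-x^*)$. Substituting into \eqref{eq:gd_s} gives the error recursion $x^s_{k+1} - x^* = (\sI_d - sQ)(x^s_k - x^*)$, which unrolls to the key identity
\[
x^s_k - x^* = (\sI_d - sQ)^k (x_0 - x^*).
\]
Using the eigendecomposition $Q = U\Lambda U^\top$ with $\Lambda = \mathrm{diag}(\lambda_1,\ldots,\lambda_d)$, the operator $(\sI_d - sQ)^k$ is $U\,\mathrm{diag}((1-s\lambda_i)^k)\,U^\top$, so its spectral norm is $\max_i |1 - s\lambda_i|^k$.

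For the convergence bound, I would show $|1-s\lambda_i|\le 1-s\mu$ for all $\lambda_i\in[\mu,L]$ whenever $s\in(0, 2/(\mu+L)]$ by a two-case argument: when $1-s\lambda_i\ge 0$ the claim is immediate from $\lambda_i\ge\mu$; when $1-s\lambda_i<0$ we have $|1-s\lambda_i|=s\lambda_i-1\le sL-1$, and $sL-1\le 1-s\mu$ is equivalent to $s(\mu+L)\le 2$, which is exactly the upper endpoint of the admissible range. Raising to the $k$-th power and multiplying by $\|x_0-x^*\|_2$ gives the first claim.

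For the sensitivity bound, since $\sI_d-tQ$ and $\sI_d-sQ$ are polynomials in $Q$ they commute, which lets me use the telescoping identity
\[
(\sI_d-tQ)^k - (\sI_d-sQ)^k = (s-t)\,Q\sum_{j=0}^{k-1}(\sI_d-tQ)^j(\sI_d-sQ)^{k-1-j}.
\]
Taking spectral norms, using $\|Q\|_2\le L$, and applying the convergence estimate of the previous step to each of the $k$ terms (valid because both $s$ and $t$ lie in $[c_0, 2/(\mu+L)]$, so each factor has norm bounded by $(1-c_0\mu)^{j}$ and $(1-c_0\mu)^{k-1-j}$ respectively) yields $k$ identical bounds of size $L(1-c_0\mu)^{k-1}$. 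Multiplying by $|s-t|\|x_0-x^*\|_2$ produces the claimed Lipschitz estimate $Lk(1-c_0\mu)^{k-1}|t-s|\|x_0-x^*\|_2$.

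The main obstacle is really pedagogical rather than technical: one must identify that $2/(\mu+L)$ is exactly the threshold making $|1-s\lambda_i|\le 1-s\mu$ hold uniformly in $\lambda_i\in[\mu,L]$, since otherwise the contraction would be controlled by the largest eigenvalue instead of the smallest. Once this range is fixed, both conclusions follow from the commutativity of matrix polynomials in $Q$ and scalar inequalities; no subtle optimization-theoretic argument is needed in the quadratic setting.
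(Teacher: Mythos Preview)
Your proposal is correct. The convergence argument is essentially identical to the paper's: both unroll the error recursion to $(\sI_d-sQ)^k(x_0-x^*)$, diagonalize, and verify $\max_i|1-s\lambda_i|\le 1-s\mu$ on the given step-size range. For the sensitivity bound, however, you take a genuinely different route: the paper differentiates $s\mapsto x^s_k=(\sI_d-sQ)^k(x_0-x^*)$ to get $\frac{d}{ds}x^s_k=-kQ(\sI_d-sQ)^{k-1}(x_0-x^*)$ and then applies the mean value theorem together with the contraction bound $\|\sI_d-rQ\|_2\le 1-c_0\mu$ for $r\in[c_0,\tfrac{2}{\mu+L}]$, whereas you use the algebraic factorization $A^k-B^k=(A-B)\sum_{j=0}^{k-1}A^jB^{k-1-j}$ for the commuting matrices $A=\sI_d-tQ$, $B=\sI_d-sQ$. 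Both approaches land on exactly the same constant $Lk(1-c_0\mu)^{k-1}$; the telescoping argument is slightly more elementary in that it avoids calculus entirely, while the paper's derivative computation makes it transparent that the bound is precisely the supremum of $\|\partial_s x^s_k\|_2$ over the interval, which is conceptually the natural Lipschitz constant.
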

\begin{proof}
Let $Q$ be 
the Hessian matrix of $f$
and $(\lambda_i)_{i=1}^d$ be the  eigenvalues of $Q$.
By using the fact that $\nabla f(x^*)=0$ and \eqref{eq:gd_s}, we can obtain for all $k\in \sN\cup\{0\}$ 
and $s\ge 0$ that
$x^s_{k}-x^*=(\sI_d-s Q)(x^s_{k-1}-x^*)=(\sI_d-s Q)^k(x_0-x^*)$.

Since the spectral norm of a matrix is invariant under orthogonal transformations,
we have for all $s\in  [c_0,\frac{2}{\mu+L}]$ that
\begin{align}\label{eq:R_gd_s}
\begin{split}
\|\sI_d-s Q\|_2
&=\|\sI_d-s\Lambda \|_2=\max_{i=1,\ldots,d}|1-s\lambda_i|
=\max(|1-s\mu|,|1-sL|)
\\
&\le 1-s\mu.
\end{split}
\end{align}
Hence, for any given $k\in \sN\cup\{0\}$,
the inequality that $\|x^s_{k}-x^*\|_2\le (\|\sI_d-s Q\|_2)^k\|x_0-x^*\|_2$
leads us to  the desired estimate for $(\|x^s_{k}-x^*\|_2)_{k\in \sN\cup\{0\}}$.

Now let $t, s\in[c_0,\frac{2}{\mu+L}]$ be given,
by using the fact that 
$\frac{d}{ds}x^s_{k}=k(\sI_d-s Q)^{k-1}Q(x_0-x^*)$ for all $s> 0$,
we can deduce from the mean value  theorem   that 
\begin{align*}
\|x^s_{k}-x^t_{k}\|_2
&\le \bigg(\sup_{r\in (c_0,\frac{2}{\mu+L})}\|\tfrac{d}{dr}x^r_{k}\|_2\bigg)|t-s|
\\
&\le \bigg(\sup_{r\in (c_0,\frac{2}{\mu+L})}
k(\|\sI_d-r Q\|_2)^{k-1}\|Q\|_2\|x_0-x^*\|_2
\bigg)|t-s|
\\
&\le 
k
\left(\sup_{r\in [c_0,\frac{2}{\mu+L}]}\|\sI_d-r Q\|_2
\right)^{k-1}
L|t-s|\|x_0-x^*\|_2,
\end{align*}
which along with \eqref{eq:R_gd_s}
finishes the proof of the desired sensitivity estimate.
\end{proof}

The next theorem shows that \eqref{eq:gd_s} 
with stepsize $s\in (0,\frac{2}{\mu+L}]$
is Lipschitz stable in terms of the perturbations of $f$.
In particular, 
for a quadratic function $f\in \gQ_{\mu,L}$,
we shall 
establish the Lipschitz stability 
with respect to the perturbations in the parameters of $f$.
For notational simplicity, we assume $x_0=0$ as in Section \ref{sec:algo-property},
but it is straightforward to extend the results to an arbitrary initial guess $x_0\in \sR^d$.

\begin{theorem}\label{eq:gd_stab_f}
Let 
$x_0=0$,  
for each $i\in \{1,2\}$
let 
$f_i\in \gQ_{\mu,L}$ admit the minimizer $x^{*,i}\in \sR^d$ 
and satisfy $\nabla f_i(x)=Q_ix+b_i$
for a symmetric matrix $Q_i\in \sR^{d\times d}$ and $b_i\in \sR^d$,
 for each $i\in \{1,2\}$, $s>0$
let 
$(x^s_{k,i})_{k\in\sN\cup\{0\}}$
be the iterates generated by \eqref{eq:gd_s}
with  $f=f_i$ and stepsize $s$,
and let $M=\min(\|x^{*,1}\|_2,\|x^{*,2}\|_2)$.
Then we have for all $k\in\sN$,
$c_0>0$,
$s\in  [c_0,\frac{2}{\mu+L}]$ that:
\begin{align*}
\|x^s_{k,1}-x^s_{k,2}\|_2
&\le 
\bigg[\frac{1}{\mu}\big(1-(1-s\mu)^k\big)+sk(1-s\mu)^{k-1}
\bigg]M\|Q_1-Q_2\|_2
\\
&\quad +\frac{1}{\mu}\big(1-(1-s\mu)^k\big)\|b_1-b_2\|_2.
\end{align*}
\end{theorem}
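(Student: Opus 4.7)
The plan is to analyze the error $e_k := x^s_{k,1}-x^s_{k,2}$ via a one-step recursion and then unroll, in the spirit of the proof of Theorem~\ref{thm:gd_quadratic}, so that the estimate reduces to two geometric sums. Using $\nabla f_i(x)=Q_i x+b_i$ and $x_0=0$, the GD update gives $x^s_{k,i}=(\sI_d-sQ_i)x^s_{k-1,i}-sb_i$. Subtracting the two iterations and adding and subtracting $(\sI_d-sQ_1)x^s_{k-1,2}$, I obtain
\begin{align*}
e_k=(\sI_d-sQ_1)e_{k-1}-s(Q_1-Q_2)x^s_{k-1,2}-s(b_1-b_2),\qquad e_0=0,
\end{align*}
which unrolls to
\begin{align*}
e_k=-s\sum_{j=0}^{k-1}(\sI_d-sQ_1)^{k-1-j}\bigl[(Q_1-Q_2)x^s_{j,2}+(b_1-b_2)\bigr].
\end{align*}

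Next I will bound the two types of terms separately. For the $b$-term, by the spectral-norm computation \eqref{eq:R_gd_s} in Theorem~\ref{thm:gd_quadratic} one has $\|(\sI_d-sQ_1)^{k-1-j}\|_2\le (1-s\mu)^{k-1-j}$, and summing the geometric series gives
\begin{align*}
s\sum_{j=0}^{k-1}(1-s\mu)^{k-1-j}=\tfrac{1}{\mu}\bigl(1-(1-s\mu)^k\bigr),
\end{align*}
which matches the coefficient of $\|b_1-b_2\|_2$ in the target bound. For the $Q$-term, the key observation is that, since $Q_2 x^{*,2}=-b_2$, the GD iterate admits the representation $x^s_{j,2}=x^{*,2}-(\sI_d-sQ_2)^j x^{*,2}$ (this is the same telescoping identity used implicitly in Theorem~\ref{thm:gd_quadratic}). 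Combining this with $\|(\sI_d-sQ_2)^j\|_2\le (1-s\mu)^j$ yields
\begin{align*}
\|x^s_{j,2}\|_2\le\bigl(1+(1-s\mu)^j\bigr)\|x^{*,2}\|_2.
\end{align*}
Multiplying by $(1-s\mu)^{k-1-j}$ and summing, the first piece reproduces the geometric sum $\tfrac{1}{\mu}(1-(1-s\mu)^k)$ and the second piece collapses to $s\sum_{j=0}^{k-1}(1-s\mu)^{k-1}=sk(1-s\mu)^{k-1}$, which produces exactly the bracketed coefficient of $\|Q_1-Q_2\|_2\,\|x^{*,2}\|_2$ in the theorem.

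Finally, to replace $\|x^{*,2}\|_2$ by $M=\min(\|x^{*,1}\|_2,\|x^{*,2}\|_2)$, I will simply repeat the argument after rewriting the recursion with the roles of $(Q_1,b_1)$ and $(Q_2,b_2)$ swapped, which gives the same estimate with $\|x^{*,1}\|_2$ in place of $\|x^{*,2}\|_2$; taking the smaller of the two completes the proof. The mildly non-obvious step is recognizing that the splitting $x^s_{j,2}=x^{*,2}-(\sI_d-sQ_2)^j x^{*,2}$ is what makes the coefficient $\tfrac{1}{\mu}(1-(1-s\mu)^k)+sk(1-s\mu)^{k-1}$ appear, rather than a cruder bound such as $\|x^s_{j,2}\|_2\le 2\|x^{*,2}\|_2$ which would yield only $\tfrac{2}{\mu}(1-(1-s\mu)^k)$. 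Everything else is bookkeeping on geometric sums, with no obstacle comparable to the spectral analysis already carried out in Theorem~\ref{thm:gd_quadratic}.
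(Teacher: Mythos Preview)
Your proposal is correct and follows essentially the same approach as the paper: both derive the one-step recursion $\delta x_{k}=(\sI_d-sQ_1)\delta x_{k-1}-s[(Q_1-Q_2)x^s_{k-1,2}+(b_1-b_2)]$, unroll it, and bound $\|x^s_{j,2}\|_2\le (1+(1-s\mu)^j)\|x^{*,2}\|_2$ via the convergence estimate of Theorem~\ref{thm:gd_quadratic} (using $x_0=0$), after which the two geometric sums give the stated coefficients. The paper simply assumes without loss of generality that $\|x^{*,2}\|_2\le\|x^{*,1}\|_2$ rather than swapping roles at the end, but this is only a cosmetic difference.
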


\begin{proof}
Let us assume without loss of generality that 
$\|x^{*,2}\|_2
\le \|x^{*,1}\|_2$
and $ c_0\le \frac{2}{\mu+L}$.
We 
 write $\delta x_k=x^s_{k,1}-x^s_{k,2}$ for each
$k\in \sN\cup \{0\}$.
Then, by using  \eqref{eq:gd_s} 
and 
the fact that $\nabla f_1(x)-\nabla f_1(y)=Q_1(x-y)$ for all $x,y\in\sR^d$,
we can deduce  
that
$\delta x_0=0$
and for all $k\in \sN\cup\{0\}$ that
$$
\delta x_{k+1}
=(\sI_d-sQ_1)\delta x_k+e_k=\sum_{i=0}^k(\sI_d-sQ_1)^ie_{k-i},    
$$
where $e_k=-s(\nabla f_1-\nabla f_2)(x^s_{k,2})$ for each $k\in \sN\cup\{0\}$. 
Note that it holds for all $k\in \sN\cup\{0\}$ that 
\begin{align*}
\|e_k\|_2
&\le s\|(\nabla f_1-\nabla f_2)(x^s_{k,2})\|_2
\le s\left(\|Q_2-Q_2\|_2\|x^s_{k,2}\|_2+\|b_1-b_2\|_2\right)
\\
&\le
s\big(\|Q_2-Q_2\|_2(\|x^{*,2}\|_2+\|x^s_{k,2}-x^{*,2}\|_2)
+\|b_1-b_2\|_2\big)
\\
&\le 
s\big(\|Q_2-Q_2\|_2(\|x^{*,2}\|_2+(1-s\mu)^k\|x_{0}-x^{*,2}\|_2)
+\|b_1-b_2\|_2\big),
\end{align*}
where we have applied 
Theorem \ref{thm:gd_quadratic}
for the last inequality.
Thus
for each $k\in \sN$,
we can obtain from
 \eqref{eq:R_gd_s}
 and $x_0=0$
that
\begin{align*}
\|\delta x_{k}\|_2
&\le 
\sum_{i=0}^{k-1}(\|\sI_d-sQ_1 \|_2)^i\|e_{k-1-i}\|_2
\\
&\le 
\sum_{i=0}^{k-1}(1-s\mu)^i
s\big[(1+(1-s\mu)^{k-1-i})\|x^{*,2}\|_2\|Q_2-Q_2\|_2
+\|b_1-b_2\|_2\big]
\\
&=
\bigg[\frac{1}{\mu}\big(1-(1-s\mu)^k\big)+sk(1-s\mu)^{k-1}
\bigg]\min(\|x^{*,1}\|_2,\|x^{*,2}\|_2)\|Q_2-Q_2\|_2
\\
&\quad +\frac{1}{\mu}\big(1-(1-s\mu)^k\big)\|b_1-b_2\|_2.
\end{align*}
which leads to the desired conclusion
due to the fact that $M= \min(\|x^{*,1}\|_2,\|x^{*,2}\|_2)$.
\end{proof}

We now proceed to investigate similar properties of the NAG algorithm,
whose proofs are more involved due to the fact that NAG is a multi-step method.

Recall that
for any given $f\in \gQ_{\mu,L}$,  initial guess $x_0\in \sR^d$ and   stepsize $s\ge  0$,
the NAG algorithm generates iterates   $(x_{k}, y_{k})_{k\in \sN\cup\{0\}}$ as follows:
 $y_0=x_0$ and for each $k\in \sN\cup\{0\}$,
\begin{align}\label{eq:NAG}
\begin{split}
x_{k+1}&=y_k-s \nabla f(y_k),\q
y_{k+1}=x_{k+1}+\f{1-\sqrt{\mu s}}{1+\sqrt{\mu s}}(x_{k+1}-x_k).
\end{split}
\end{align}
Note that 
$x_{k+1}, y_{k+1}$ are denoted by $\vy_{k+1}, \vz_{k+1}$, respectively, in Section \ref{sec:algo-property}.

We first
  introduce the following matrix $R_{\textrm{NAG},s}$ for \eqref{eq:NAG}
  for  any given  function $f\in \gQ_{\mu, L}$ and stepsize $s\in [0,\frac{4}{3L+\mu}]$:
\begin{equation}\label{eq:R_NAG}
R_{\textrm{NAG},s}
\coloneqq \begin{pmatrix}
(1+\beta_s)(\sI_d-sQ)
& -\beta_s (\sI_d-sQ)
\\
\sI_d &0
\end{pmatrix}
\end{equation}
where
$\beta_s=\frac{1-\sqrt{\mu s}}{1+\sqrt{\mu s}}$  and  $Q$ is the Hessian matrix of $f$.
The following lemma establishes an upper bound of the spectral norm of
 the $k$-th power of $R_{\textrm{NAG},s}$,
 which extends  \citep[Lemma 22]{chen2018stability}
to block matrices,  a wider range of stepsize ($s$ is allowed to be larger than $1/L$) and a momentum parameter $\beta_s$ depending on the stepsize $s$.

\begin{lemma}\label{lemma:R_NAG_l2}
Let $f\in \gQ_{\mu,L}$,
 $s\in (0, \frac{4}{3L+\mu}]$, $\beta_s=\frac{1-\sqrt{\mu s}}{1+\sqrt{\mu s}}$
 and 
$R_{\textrm{NAG},s}$ be defined as in \eqref{eq:R_NAG}.
Then we have for all $k\in\sN$  that
$\|R^k_{\textrm{NAG},s}\|_2\le 
2( k+1) (1-\sqrt{\mu s})^{k}$.
\end{lemma}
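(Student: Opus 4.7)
The plan is to exploit the block structure of $R_{\textrm{NAG},s}$ together with the spectral decomposition of the symmetric matrix $Q$, reducing the operator-norm bound for a $2d\times 2d$ matrix to $d$ independent $2\times 2$ problems. Since $Q=U\Lambda U^\top$ with $U$ orthogonal and $\Lambda=\mathrm{diag}(\lambda_1,\ldots,\lambda_d)$, the block-diagonal orthogonal matrix $\tilde U=\mathrm{diag}(U,U)$ conjugates $R_{\textrm{NAG},s}$ into a matrix whose four $d\times d$ blocks are all diagonal. After a permutation $P$ interleaving coordinates, one obtains a block-diagonal matrix whose $d$ diagonal blocks are
$$R_i = \begin{pmatrix}(1+\beta_s)(1-s\lambda_i) & -\beta_s(1-s\lambda_i)\\ 1 & 0\end{pmatrix}, \qquad i=1,\ldots,d.$$
Orthogonal invariance of the spectral norm then gives $\|R_{\textrm{NAG},s}^k\|_2=\max_{1\le i\le d}\|R_i^k\|_2$, so it remains to bound each $\|R_i^k\|_2$ separately.

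Next I would show that the spectral radius of each $R_i$ is at most $1-\rho$, where $\rho := \sqrt{\mu s}$. Writing $\alpha=1-s\lambda_i$, the characteristic polynomial is $z^2-(1+\beta_s)\alpha z+\beta_s\alpha$, with discriminant $\Delta = \alpha\bigl[(1+\beta_s)^2\alpha-4\beta_s\bigr]$. The key algebraic identity is $4\beta_s/(1+\beta_s)^2 = (1-\rho)(1+\rho) = 1-s\mu$, which follows from $1+\beta_s=2/(1+\rho)$. Hence $\Delta\le 0$ exactly when $\alpha\in[0,1-s\mu]$, in which case the two eigenvalues are complex conjugates with $|z_\pm|^2 = \beta_s\alpha \le \beta_s(1-s\mu) = (1-\rho)^2$. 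When $\alpha<0$, the eigenvalues are real and of opposite signs, and the stepsize constraint $s\le 4/(3L+\mu)$ is used to bound $|\alpha| \le (L-\mu)/(3L+\mu) \le 1-s\mu$, after which a direct computation again yields $\max(|z_+|,|z_-|)\le 1-\rho$.

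Finally, I would convert the spectral-radius bound into the claimed operator-norm estimate via an explicit formula for $R_i^k$. Cayley--Hamilton applied to the $2\times 2$ matrix $R_i$ gives $R_i^k = u_k R_i + v_k I$, where $u_k$ solves the recurrence $u_{k+1}=(1+\beta_s)\alpha u_k - \beta_s\alpha u_{k-1}$ with $u_0=0$, $u_1=1$, and $v_k = -\beta_s\alpha\,u_{k-1}$. The closed form $u_k = (z_+^k - z_-^k)/(z_+-z_-)$, interpreted as $k z^{k-1}$ in the repeated-root case $z_+ = z_- = z$ that occurs precisely at $\lambda_i = \mu$, combined with the spectral-radius bound gives $|u_k|\le k(1-\rho)^{k-1}$ for both the real and the complex case. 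Using $|\beta_s\alpha|\le(1-\rho)^2$ from the previous step gives $|v_k|\le (k-1)(1-\rho)^k$, and combining these with a uniform bound on $\|R_i\|_2$ of order one (from $(1+\beta_s)|\alpha|\le 2$ and $\beta_s|\alpha|\le 1$) yields $\|R_i^k\|_2 \le 2(k+1)(1-\rho)^k$ after absorbing lower-order terms.

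The main obstacle will be the case $\alpha<0$, i.e.\ $s\lambda_i>1$, which cannot arise in the original Chen--Jin--Yu analysis where $s\le 1/L$ is assumed. Here the eigenvalues are real with opposite signs and the extended stable region $s\le 4/(3L+\mu)$ must be invoked to control $|\alpha|$ and to preserve the spectral-radius bound $|z_\pm|\le 1-\rho$. A secondary technical point is producing the clean prefactor $2(k+1)$: this requires the tight bound $|u_k|\le k(1-\rho)^{k-1}$ coming from the recurrence at the worst case $\lambda_i=\mu$ (where $R_i$ is non-diagonalizable, so a polynomial factor in $k$ necessarily appears) combined with careful bookkeeping of the constants entering $\|R_i\|_2$ and $|v_k|$.
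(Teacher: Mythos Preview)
Your plan matches the paper's proof closely. Both arguments reduce via the eigendecomposition $Q=U\Lambda U^\top$ and a coordinate permutation to $d$ independent $2\times 2$ companion blocks $T_{s,i}$, then split on the sign of $1-s\lambda_i$: for $1-s\lambda_i\ge 0$ the paper directly invokes \cite[Lemma~22]{chen2018stability}, whereas you give the same complex-eigenvalue argument in self-contained form; for $1-s\lambda_i<0$ both compute the real roots $\tau_\pm$ of opposite sign and show $|\tau_-|\le 1-\sqrt{\mu s}$ using the constraint $s\le 4/(3L+\mu)$, after which the paper bounds the four entries of $T_{s,i}^k$ in closed form and passes through the Frobenius norm. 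Your Cayley--Hamilton formula $R_i^k=u_kR_i+v_kI$ encodes exactly the same information, since the resulting entries are $u_{k+1},\,-\beta_s\alpha u_k,\,u_k,\,-\beta_s\alpha u_{k-1}$, which coincide with the paper's $a_k,b_k,c_k,d_k$.

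Two places need tightening. First, in the $\alpha<0$ case the inequality $|\alpha|\le 1-s\mu$ that you record is correct but not what the spectral-radius computation actually uses: squaring $|\tau_-|\le 1-\sqrt{\mu s}$ and simplifying gives the equivalent condition $(sL-1)s(L-\mu)\le (2-(\mu+L)s)^2$, which reduces to $(\mu s-1)((3L+\mu)s-4)\ge 0$, i.e.\ precisely $s\le 4/(3L+\mu)$; in your parametrisation this is $3|\alpha|\le 1-s\mu$, a factor of three sharper than what you wrote. Second, your closing estimate $\|R_i^k\|_2\le |u_k|\,\|R_i\|_2+|v_k|$ with $\|R_i\|_2=O(1)$ is too crude to recover the stated exponent: since $|u_k|=k(1-\rho)^{k-1}$ exactly at the repeated-root case $\lambda_i=\mu$ and $\|R_i\|_2\ge 1$ (from its $(2,1)$-entry), this route yields only $O\bigl(k(1-\rho)^{k-1}\bigr)$, and the missing factor $(1-\rho)$ is not ``lower order''. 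To follow the paper you should bound the four entries of $R_i^k$ individually and then use $\|\cdot\|_2\le\|\cdot\|_F$, rather than apply the triangle inequality to $u_kR_i+v_kI$.
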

\begin{proof}
Let $Q =U\Lambda U^T$ be the eigenvalue decomposition 
 of the Hessian matrix $Q$ of $f$,
 where  $\Lambda$ is a diagonal matrix 
comprising of the corresponding eigenvalues of $Q$ sorted in ascending order, 
i.e., $0<\mu \le \lambda_1\le \ldots \le \lambda_d\le L$.
Then we have  that 
$$
R_{\textrm{NAG},s}
=\begin{pmatrix}
U & 0\\
0 & U
\end{pmatrix}
 \begin{pmatrix}
(1+\beta_s)(\sI_d-s\Lambda )
& -\beta_s (\sI_d-s\Lambda)
\\
\sI_d &0
\end{pmatrix}
\begin{pmatrix}
U^T & 0\\
0 & U^T
\end{pmatrix},
$$
which together with the facts that 
any permutation matrix is orthogonal, and
the spectral norm of a matrix  is invariant
under  orthogonal transformations, 
gives us the identity that: for all  $k\in \sN$, 
\begin{align}\label{eq:R_T}
\|R^k_{\textrm{NAG},s}\|_2
=
\left|\left|\left| \begin{pmatrix}
(1+\beta_s)(\sI_d-s\Lambda )
& -\beta_s (\sI_d-s\Lambda)
\\
\sI_d &0
\end{pmatrix}^k\right|\right|\right|_2
=
\max_{i=1,\dots n}
\|T^k_{s,i}\|_2,
\end{align}
where
$T_{s,i}
=
 \left(\begin{smallmatrix}
(1+\beta_s)(1-s\lambda_i )
& -\beta_s (1-s\lambda_i)
\\
1 &0
\end{smallmatrix}
\right)
$
 for all $i=1,\ldots, d$.
 
Now let  $s\in (0,\frac{4}{3L+\mu}]$ and
 $i=1,\ldots, d$ be fixed.
 If $1-s\lambda_i \ge 0$,
  by using \cite[Lemma 22]{chen2018stability} 
 (with $\alpha=\mu$, $\beta=1/s$, $h=1-s\lambda_i$ and $\kappa={\beta}/{\alpha}={1}/{(\mu s)}$),
   we can obtain that
  $$
  \|T^k_{s,i}\|_2\le 2( k+1) 
  \left(\frac{1-\sqrt{\mu s}}{1+\sqrt{\mu s}}(1 - \mu s)\right)^{k/2}
  \le 2( k+1) (1-\sqrt{\mu s})^{k}.
  $$
  
  We then discuss  the case where 
 $1-s\lambda_i < 0$.
Let us write 
$T^k_{s,i}=\left(
\begin{smallmatrix}
a_k & b_k\\
c_k & d_k
\end{smallmatrix}
\right)$ 
for each $k\in \sN\cup\{0\}$,
then we have for all $k\in \sN$ that 
\begin{align*}
a_k&=(1+\beta_s)(1-s\lambda_i)a_{k-1}-\beta_s(1-s\lambda_i) c_{k-1},
\quad c_k=a_{k-1},\\
b_k&=(1+\beta_s)(1-s\lambda_i)b_{k-1}-\beta_s(1-s\lambda_i) d_{k-1},
\quad d_k=b_{k-1},
\end{align*}
with  $a_1=(1+\beta_s)(1-s\lambda_i)$, $b_1=-\beta_s(1-s\lambda_i)$,
$c_1=1$ and $d_1=0$.
Since the conditions 
 $1-s\lambda_i < 0$ and $s\le \frac{4}{3L+\mu}$ imply that 
 $\lambda_i>\frac{1}{s}\ge \frac{3L+\mu}{4}\ge \mu$,
we see the  discriminant 
of the characteristic polynomial 
satisfies that
$$
\Delta=(1+\beta_s)^2(1-s\lambda_i)^2-4\beta_s(1-s\lambda_i)
=\frac{4(1-s\lambda_i)}{(1+\sqrt{\mu s})^2}s(\mu-\lambda_i)>0,
$$
which implies that there exist $l_1,l_2,l_3,l_4\in \sR$ such that 
it holds for all $k\in \sN\cup\{0\}$ that
$a_k=l_1\tau_+^{k+1}+l_2\tau_-^{k+1}$
and $b_k=l_3\tau_+^{k+1}+l_4\tau_-^{k+1}$,
with
$\tau_\pm=\frac{(1+\beta_s)(1-s\lambda_i)\pm\sqrt{\Delta}}{2}$,
$l_1=\frac{1}{\tau_+-\tau_-}$, $l_2=-\frac{1}{\tau_+-\tau_-}$, 
$l_3=\frac{-\tau_-}{\tau_+-\tau_-}$ and $l_4=\frac{\tau_+}{\tau_+-\tau_-}$.
Thus, 
by letting $\rho_i\coloneqq \max(|\tau_+|,|\tau_-|)$, we have  that
$|a_k|=|\sum_{j=0}^{k}\tau_+^{k-j}\tau_-^j|\le (k+1)\rho_i^{k}$
and
$|b_k|=|(-\tau_+\tau_-)\sum_{j=0}^{k-1}\tau_+^{k-1-j}\tau_-^j|\le k\rho_i^{k+1}$
for all $k\in \sN\cup\{0\}$.

Now we claim that
the conditions  $1-s\lambda_i < 0$ and $0<s\le \frac{4}{3L+\mu}$ 
imply the estimate
 that $\rho_i\le 1-\sqrt{\mu s}<1$.
 In fact,
the inequality $s\le \frac{4}{3L+\mu}$ gives us that $\mu s\le \frac{4\mu}{3L+\mu}\le 1$,
which implies that $\beta_s=\frac{1-\sqrt{\mu s}}{1+\sqrt{\mu s}}\ge 0$. 
Hence we can deduce from $1-s\lambda_i<0$ that
$\sqrt{\Delta}\ge (1+\beta_s)(s\lambda_i-1)$  and 
\begin{align*}
|\tau_+|\le |\tau_-|
\le \frac{s\lambda_i-1+\sqrt{(s\lambda_i-1)s(\lambda_i-\mu)}}{1+\sqrt{\mu s}}
\le \frac{sL-1+\sqrt{(sL-1)s(L-\mu)}}{1+\sqrt{\mu s}}.
\end{align*}
Note that $2-(\mu+L)s\ge 2-\frac{4(\mu+L)}{3L+\mu}\ge 0$,
we see that
\begin{align*}
\rho_i\le 1-\sqrt{\mu s}
\;\Longleftarrow\;
|\tau_-|\le 1-\sqrt{\mu s}
&\;\Longleftarrow\;
sL-1+\sqrt{(sL-1)s(L-\mu)}\le 1-\mu s
\\
&\iff 
(sL-1)s(L-\mu)\le (2-(\mu+L)s)^2
\\
&\iff
(us-1)((3L+\mu)s-4)\ge 0.
\end{align*}
Therefore, we have that 
$\max(|a_k|,|b_k|,|c_k|,|d_k|)\le (k+1)(1-\sqrt{\mu s})^{k}$,
which, along with the relationship between the  spectral norm and Frobenius norm,
gives us that 
$\|T^k_{s,i}\|_2\le \|T^k_{s,i}\|_{\textrm{F}}\le 2(k+1)(1-\sqrt{\mu s})^{k}$,
and finishes the proof of the desired estimate
for the case with $1-s\lambda_i < 0$.
\end{proof}

As an important consequence of Lemma \ref{lemma:R_NAG_l2},
we now obtain the following upper bound 
of the error $(\|x_k-x^*\|_2)_{k\in \sN}$
for any given objective function $f\in \gQ_{\mu,L}$
and stepsize $s\in (0,\frac{4}{3L+\mu}]$.

\begin{theorem}\label{thm:NAG_conv_quadratic}
Let $f\in \gQ_{\mu,L}$ admit
 the minimizer $x^*\in \sR^d$,
 $x_0\in \sR^d$, $s\in (0, \frac{4}{3L+\mu}]$ 
 and $(x^s_k,y^s_k)_{k\in\sN\cup\{0\}}$
be the iterates generated by \eqref{eq:NAG}
with stepsize $s$.
Then we have for all $k\in \sN\cup\{0\}$ that
$$
\|x^s_{k+1}-x^*\|^2_2+\|x^s_{k}-x^*\|^2_2
\le 8(1+k)^2(1-\sqrt{\mu s})^{2k}\|x_{0}-x^*\|^2_2.
$$
\end{theorem}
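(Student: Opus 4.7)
The plan is to exploit that, for a quadratic $f \in \gQ_{\mu,L}$ with minimizer $x^*$, the NAG iterates evolve by a linear matrix recursion governed exactly by the block matrix $R_{\textrm{NAG},s}$ of Lemma~\ref{lemma:R_NAG_l2}. Once this recursion is in place, the desired bound reduces to applying Lemma~\ref{lemma:R_NAG_l2} to a suitable stacked initial vector.

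Concretely, since the quadratic structure gives $\nabla f(y) = Q(y-x^*)$ for the Hessian $Q$ of $f$, the gradient step in \eqref{eq:NAG} rewrites as $x^s_{k+1} - x^* = (\sI_d - sQ)(y^s_k - x^*)$, and the momentum step yields $y^s_k - x^* = (1+\beta_s)(x^s_k - x^*) - \beta_s(x^s_{k-1} - x^*)$. Eliminating $y^s_k$ produces the two-step recursion $x^s_{k+1} - x^* = (1+\beta_s)(\sI_d - sQ)(x^s_k - x^*) - \beta_s(\sI_d - sQ)(x^s_{k-1} - x^*)$, so the stacked vector $V_k := (x^s_{k+1}-x^*,\, x^s_k - x^*)^\top \in \sR^{2d}$ satisfies $V_k = R_{\textrm{NAG},s} V_{k-1}$, and iterating gives $V_k = R^k_{\textrm{NAG},s} V_0$ with $V_0 = (x^s_1 - x^*,\, x^s_0 - x^*)^\top$.

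It remains to bound $\|V_0\|_2^2$ and to invoke Lemma~\ref{lemma:R_NAG_l2}. For the initial vector, the first iterate satisfies $x^s_1 - x^* = (\sI_d - sQ)(x_0 - x^*)$, and since the assumption $s \le \frac{4}{3L+\mu} \le \frac{2}{\mu+L}$ gives $\|\sI_d - sQ\|_2 \le 1 - s\mu \le 1$ (exactly as computed in \eqref{eq:R_gd_s}), we obtain $\|V_0\|_2^2 = \|x^s_1 - x^*\|_2^2 + \|x_0 - x^*\|_2^2 \le 2\|x_0 - x^*\|_2^2$. Combining this with the spectral bound $\|R^k_{\textrm{NAG},s}\|_2 \le 2(k+1)(1-\sqrt{\mu s})^k$ from Lemma~\ref{lemma:R_NAG_l2} delivers
\begin{align*}
\|x^s_{k+1} - x^*\|_2^2 + \|x^s_k - x^*\|_2^2 \;=\; \|V_k\|_2^2 \;\le\; \|R^k_{\textrm{NAG},s}\|_2^2 \, \|V_0\|_2^2 \;\le\; 8(k+1)^2 (1-\sqrt{\mu s})^{2k} \|x_0 - x^*\|_2^2,
\end{align*}
which is the claim.

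The genuinely hard step, namely controlling $\|R^k_{\textrm{NAG},s}\|_2$ uniformly in $k$ via the eigenvalue-by-eigenvalue analysis of the $2\times 2$ blocks $T_{s,i}$, has already been handled inside Lemma~\ref{lemma:R_NAG_l2}. The only care point in the present proof is to start the recursion from $V_0$ rather than from a fictitious $V_{-1}$ obtained by setting $x^s_{-1} = x_0$; using $V_0$ together with the contractivity of $\sI_d - sQ$ produces the exponent $2k$ and the factor $(k+1)^2$ exactly as stated, whereas starting one step earlier would give the slightly weaker constants $(k+2)^2 (1-\sqrt{\mu s})^{2(k+1)}$.
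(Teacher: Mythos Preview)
Your proof is correct and follows essentially the same approach as the paper: both derive the stacked recursion $V_k = R^k_{\textrm{NAG},s} V_0$ from the linearity of $\nabla f$, bound $\|V_0\|_2^2 \le 2\|x_0 - x^*\|_2^2$ via $\|\sI_d - sQ\|_2 \le 1$ (the paper cites Theorem~\ref{thm:gd_quadratic}, you cite \eqref{eq:R_gd_s} directly), and then apply Lemma~\ref{lemma:R_NAG_l2}.
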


\begin{proof}
For any $f\in \gQ_{\mu,L}$,
and $s\in (0,\frac{4}{3L+\mu}]$,
by letting
 $\beta_s=\frac{1-\sqrt{\mu s}}{1+\sqrt{\mu s}}$,
we can rewrite   
  \eqref{eq:NAG}
as follows:
$x^s_0=x_0$, $x^s_1=x_0-s\nabla f(x_0)$
and for all $k\in \sN$,
\begin{equation}\label{eq:NAG_ms}
x^s_{k+1}=
(1+\beta_s)x^s_{k}-\beta_s x_{k-1}
-s \nabla f((1+\beta_s)x^s_{k}-\beta_s x_{k-1}),
\end{equation}
which together with the fact that  $\nabla f(x^*)=0$ shows that 
$$
\begin{pmatrix}
x^s_{k+1}-x^*\\
x^s_k-x^*
\end{pmatrix}
=
R_{\textrm{NAG},s}
\begin{pmatrix}
x^s_{k}-x^*\\
x^s_{k-1}-x^*
\end{pmatrix}
=R^{k}_{\textrm{NAG},s}
\begin{pmatrix}
x^s_{1}-x^*\\
x^s_{0}-x^*
\end{pmatrix}
$$
where 
$R_{\textrm{NAG},s}$ is defined as in \eqref{eq:R_NAG}.
Hence 
by using $x^s_1=x_0-s\nabla f(x_0)$ 
and Theorem \ref{thm:gd_quadratic},
we can obtain that 
\begin{align*}
\|x^s_{k+1}-x^*\|_2^2+\|x^s_{k}-x^*\|_2^2&\le \|R^{k}_{\textrm{NAG},s}\|_2^2(\|x^s_{1}-x^*\|_2^2+\|x^s_{0}-x^*\|_2^2)
\\
&\le \|R^{k}_{\textrm{NAG},s}\|_2^2 2\|x_{0}-x^*\|_2^2,
\end{align*}
which together with Lemma \ref{lemma:R_NAG_l2} leads to the desired convergence result.
\end{proof}
\begin{remark}
It is well-known that 
for a general  $\mu$-strongly convex and $L$-smooth objective function $f$,
one can employ a Lyapunov argument and establish that 
the iterates obtained by  \eqref{eq:NAG} with stepsize $s\in [0,\frac{1}{L}]$
satisfy the estimate that
$\|x_{k}-x^*\|^2_2
\le \frac{2L}{\mu}(1-\sqrt{\mu s})^k\|x_{0}-x^*\|^2_2$.
Here
by  taking advantage of the affine structure of $\nabla f$,
we have obtained a sharper estimate of the convergence rate
for a wider range of stepsize $s\in (0,\frac{4}{3L+\mu}]$.

We also would like to emphasize that the upper bound in 
Theorem \ref{thm:NAG_conv_quadratic} is tight, in the sense that 
the additional quadratic dependence on $k$ in the error estimate is inevitable. 
In fact, one can derive a  \textit{closed-form expression} of $R^k_{\textrm{NAG},s}$
and show that, for an index $i$ such that 
the eigenvalue $\lambda_i$ is sufficiently close to $\mu$,
the squared error for that component is of the magnitude 
$\gO( (k\sqrt{\mu s}+1)^2(1-\sqrt{\mu s})^{2k})$.
\end{remark}

We then proceed to analyze the sensitivity 
of \eqref{eq:NAG} with respect to the stepsize.
The following theorem shows that 
the  iterates $(x_k,y_k)_{k\in\sN\cup\{0\}}$
generated by \eqref{eq:NAG}
depend Lipschitz continuously on the stepsize $s$.

\begin{theorem}\label{thm:NAG_sensitivity}
Let $f\in \gQ_{\mu,L}$ admit the minimiser $x^*\in \sR^d$,
$x_0\in \sR^d$, 
and for each $ s\in (0, \frac{4}{3L+\mu}]$ 
let $(x^s_k,y^s_k)_{k\in\sN\cup\{0\}}$
be the iterates generated by \eqref{eq:NAG}
with stepsize $s$.
Then we have for all $k\in\sN$, $c_0>0$
and $t,s\in [c_0,\frac{4}{3L+\mu}]$ that:
\begin{align*}
\|x^{t}_{k}-x^{s}_{k}\|_2
\le 
\left(2L(1+k)+\frac{4}{3}k(k+1)(k+5)\left(\sqrt{\frac{\mu}{c_0}}+2L\right)\right)
(1-\sqrt{\mu c_0})^{k}
 |t-s|\|x_{0}-x^*\|_2.
\end{align*}
\end{theorem}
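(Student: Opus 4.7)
The strategy mirrors the sensitivity proof for gradient descent in Theorem \ref{thm:gd_quadratic}: apply the mean value theorem to reduce the question to a uniform bound on $\|\frac{d}{dr}x^r_k\|_2$ for $r\in[c_0,\tfrac{4}{3L+\mu}]$, exploiting the linear matrix recursion for NAG already derived in \eqref{eq:R_NAG}. Specifically, from
\[
\binom{x^r_k-x^*}{x^r_{k-1}-x^*} = R^{k-1}_{\textrm{NAG},r}\binom{x^r_1-x^*}{x_0-x^*},
\]
let $v^r := (x^r_1-x^*,\,x_0-x^*)^\top$ and let $P_1$ denote projection onto the first block, so that $x^r_k - x^* = P_1 R^{k-1}_{\textrm{NAG},r} v^r$. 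The product rule then produces two contributions: (a) $P_1 R^{k-1}_{\textrm{NAG},r}\frac{dv^r}{dr}$, where $\frac{dv^r}{dr}=(-\nabla f(x_0),0)^\top$ and $\|\nabla f(x_0)\|_2 \le L\|x_0-x^*\|_2$; and (b) $P_1 \bigl(\frac{d}{dr}R^{k-1}_{\textrm{NAG},r}\bigr)v^r$.

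For (a), Lemma \ref{lemma:R_NAG_l2} directly yields $\|R^{k-1}_{\textrm{NAG},r}\|_2 \le 2k(1-\sqrt{\mu c_0})^{k-1}$, producing a term of order $2L(k+1)(1-\sqrt{\mu c_0})^{k}\|x_0-x^*\|_2$. For (b), I would expand the matrix derivative by the Leibniz rule,
\[
\frac{d}{dr}R^{k-1}_{\textrm{NAG},r} = \sum_{j=0}^{k-2} R^j_{\textrm{NAG},r}\Bigl(\tfrac{d}{dr}R_{\textrm{NAG},r}\Bigr)R^{k-2-j}_{\textrm{NAG},r},
\]
and again invoke Lemma \ref{lemma:R_NAG_l2} on each factor. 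The single-factor bound on $\frac{d}{dr}R_{\textrm{NAG},r}$ follows by direct computation from \eqref{eq:R_NAG}: differentiating $\beta_r = \frac{1-\sqrt{\mu r}}{1+\sqrt{\mu r}}$ gives $|\beta'_r| = \frac{\sqrt{\mu/r}}{(1+\sqrt{\mu r})^2}\le \sqrt{\mu/c_0}$ on $[c_0,\frac{4}{3L+\mu}]$, while $\|I-rQ\|_2\le 1$ and $\|Q\|_2\le L$ on this stepsize range, yielding $\|\tfrac{d}{dr}R_{\textrm{NAG},r}\|_2 = \mathcal{O}(\sqrt{\mu/c_0}+L)$. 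Finally, $\|v^r\|_2 \le \sqrt{2}\|x_0-x^*\|_2$, since $\|x^r_1-x^*\|_2 = \|(I-rQ)(x_0-x^*)\|_2 \le \|x_0-x^*\|_2$.

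The main obstacle is the arithmetic bookkeeping: carrying out the summation $\sum_{j=0}^{k-2}(j+1)(k-1-j) = \tfrac{(k-1)k(k+1)}{6}$ and tracking the leading coefficients so that they combine, together with the residual linear-in-$k$ contribution from (a), into precisely the stated prefactor $2L(1+k)+\frac{4}{3}k(k+1)(k+5)(\sqrt{\mu/c_0}+2L)$. A conceptual subtlety, rather than an analytic one, is that $\beta_r$ fails to be smooth at $r=0$ because of the $\sqrt{\mu r}$ term, which is precisely why the hypothesis $r\ge c_0>0$ enters: without a positive floor on the stepsize, $\beta'_r$ is unbounded and no Lipschitz-in-$s$ estimate of this form can hold as $s\downarrow 0$.
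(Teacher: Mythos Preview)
Your approach is sound and would prove a bound of the same qualitative form, but it is \emph{not} the route the paper takes, and the precise constants you obtain will not coincide with the ones in the statement.

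The paper does \emph{not} differentiate in the stepsize and invoke the mean value theorem. Instead it fixes two stepsizes $s<t$, writes the NAG recursion for each, and subtracts to obtain a discrete Duhamel/variation-of-constants formula
\[
\begin{pmatrix}\delta x_{k+1}\\ \delta x_k\end{pmatrix}
= R^k_{\textrm{NAG},t}\begin{pmatrix}\delta x_1\\ 0\end{pmatrix}
+\sum_{i=0}^{k-1}R^{i}_{\textrm{NAG},t}\begin{pmatrix}e_{k-i}\\ 0\end{pmatrix},
\]
where the residual $e_k$ collects the mismatch between the $t$- and $s$-iterations acting on the \emph{same} point $x^s_{k}$. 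The size of $e_k$ is then controlled by the Lipschitz constants of $r\mapsto\beta_r$ and $r\mapsto r\beta_r$ (yielding the $\sqrt{\mu/c_0}$ and $2L$ pieces) together with the convergence bound of Theorem~\ref{thm:NAG_conv_quadratic} for $\|x^s_k-x^*\|_2$, which contributes an extra factor $(1+k-i)(1-\sqrt{\mu s})^{k-i}$. Summing $\sum_{i=0}^{k-1}(i+1)(k-i+1)=\tfrac{k(k+1)(k+5)}{6}$ produces exactly the cubic $\tfrac{4}{3}k(k+1)(k+5)$ in the statement.

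Your Leibniz expansion of $\tfrac{d}{dr}R^{k-1}_{\textrm{NAG},r}$ instead yields the sum $\sum_{j=0}^{k-2}(j+1)(k-1-j)=\tfrac{(k-1)k(k+1)}{6}$, which is genuinely smaller, but your decay factor is $(1-\sqrt{\mu c_0})^{k-2}$ rather than $(1-\sqrt{\mu c_0})^{k}$; the missing $(1-\sqrt{\mu c_0})^{-2}$ cannot be absorbed into $\tfrac{k+5}{k-1}$ uniformly in $k$. So your bookkeeping worry is well placed: you will arrive at a correct inequality of the shape $\big(\text{linear in }k+\text{cubic in }k\cdot(\sqrt{\mu/c_0}+2L)\big)(1-\sqrt{\mu c_0})^{k}|t-s|\|x_0-x^*\|_2$, but with constants that differ from those displayed. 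Conceptually, your method is the direct extension of the GD argument in Theorem~\ref{thm:gd_quadratic} and is arguably cleaner; the paper's discrete-residual route trades the matrix derivative $\tfrac{d}{dr}R$ for Lipschitz estimates on $\beta_r$ and $r\beta_r$ separately, which is what makes the specific prefactor $\tfrac{4}{3}k(k+1)(k+5)$ fall out.
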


\begin{proof}

Throughout this proof
we  assume 
without loss of generality that
$c_0\le s<t\le \frac{4}{3L+\mu}$.
Let 
 $Q$ be  the Hessian matrix of  $f$,
 for each $r\in [c_0,\frac{4}{3L+\mu}]$
 let
$\beta_r=\frac{1-\sqrt{\mu r}}{1+\sqrt{\mu r}}$,
and
for each
$k\in \sN\cup \{0\}$
let $\delta x_k=x^t_k-x^s_k$ .
Then we can deduce from \eqref{eq:NAG_ms} that
$\delta x_0=0$, $\delta x_1=-(t-s)\nabla f(x_0)$
and for all $k\in \sN$ that
\begin{align*}
x^t_{k+1}-x^s_{k+1}&=
[(1+\beta_t)x^t_{k}-\beta_t x^t_{k-1}
-t \nabla f((1+\beta_t)x^t_{k}-\beta_t x^t_{k-1})]
\\
&\quad
-[(1+\beta_s)x^s_{k}-\beta_s x^s_{k-1}
-s \nabla f((1+\beta_s)x^s_{k}-\beta_s x^s_{k-1})],
\end{align*}
which together with the fact that $\nabla f(x)-\nabla f(y)=Q(x-y)$ for all $x,y\in\sR^d$ shows that
\begin{align*}
\begin{pmatrix}
\delta x_{k+1}\\
\delta x_{k}
\end{pmatrix}
=R_{\textrm{NAG},t}
\begin{pmatrix}
\delta x_{k}\\
\delta x_{k-1}
\end{pmatrix}
+
\begin{pmatrix}
e_{k}\\
0
\end{pmatrix}
\end{align*}
with 
$R_{\textrm{NAG},t}$ defined as in \eqref{eq:R_NAG}
and the following residual term
\begin{align*}
e_{k}&\coloneqq
[(1+\beta_t)x^s_{k}-\beta_t x^s_{k-1}
-t \nabla f((1+\beta_t)x^s_{k}-\beta_t x^s_{k-1})]
\\
&\quad
-[(1+\beta_s)x^s_{k}-\beta_s x^s_{k-1}
-s \nabla f((1+\beta_s)x^s_{k}-\beta_s x^s_{k-1})].
\end{align*}
Hence we can obtain by induction that: for all $k\in \sN$, 
\begin{equation}\label{eq:delta_x_NAG}
\begin{pmatrix}
\delta x_{k+1}\\
\delta x_{k}
\end{pmatrix}
=
R^k_{\textrm{NAG},t}
\begin{pmatrix}
\delta x_{1}\\
\delta x_{0}
\end{pmatrix}
+\sum_{i=0}^{k-1}R^{i}_{\textrm{NAG},t}
\begin{pmatrix}
e_{k-i}\\
0
\end{pmatrix}.
\end{equation}

Now the facts that $\nabla f(x^*)=0$ and $\nabla^2 f\equiv Q$ gives us that
\begin{align*}
e_{k}
&=
(\beta_t-\beta_s)(x^s_{k}- x^s_{k-1})
-t \nabla f((1+\beta_t)x^s_{k}-\beta_t x^s_{k-1})
+s \nabla f((1+\beta_s)x^s_{k}-\beta_s x^s_{k-1})
\\
&=(\beta_t-\beta_s)\big((x^s_{k}-x^*)-( x^s_{k-1}-x^*)\big)
-tQ\big((1+\beta_t)(x^s_{k}-x^*)-\beta_t (x^s_{k-1}-x^*)\big)
\\
&\quad +s Q\big((1+\beta_s)(x^s_{k}-x^*)-\beta_s (x^s_{k-1}-x^*)\big)
\\
&=\big[(\beta_t-\beta_s)-(t+t\beta_t-s-s\beta_s)Q\big](x^s_{k}-x^*)
-\big[ (\beta_t-\beta_s)-(t\beta_t-s\beta_s)Q\big](x^s_{k-1}-x^*).
\end{align*}
Note that one can easily verify that 
the function $g_1(r)=\beta_r$ is $\sqrt{{\mu}/{c_0}}$-Lipschitz on $[c_0,\frac{4}{3L+\mu}]$,  
and
the function $g_2(r)=r\beta_r$ is $1$-Lipschitz on $[0,\frac{4}{3L+\mu}]$.
Moreover,  the fact that $f\in \gQ_{\mu,L}$ implies that 
$\|Q\|_2\le L$.
Thus we can obtain from Theorem \ref{thm:NAG_conv_quadratic} that
\begin{align*}
\|e_k\|_2
&
\le \left(\sqrt{\frac{\mu}{c_0}}+2L\right)|t-s|\|x^s_{k}-x^*\|_2+\left(\sqrt{\frac{\mu}{c_0}}+L\right)|t-s|\|x^s_{k-1}-x^*\|_2
\\
&\le \left(\sqrt{\frac{\mu}{c_0}}+2L\right)|t-s|\sqrt{2(\|x^s_{k}-x^*\|^2_2+\|x^s_{k-1}-x^*\|^2_2)} 
\\
&\le 
\left(\sqrt{\frac{\mu}{c_0}}+2L\right)|t-s|
4(1+k)(1-\sqrt{\mu s})^{k}\|x_{0}-x^*\|_2.
\end{align*}
This, along with \eqref{eq:delta_x_NAG}, Lemma \ref{lemma:R_NAG_l2} and $s<t$,
gives us that
\begin{align*}
&\sqrt{\|\delta x_{k+1}\|^2_2+\|\delta x_{k}\|^2_2}
\le
\|R^k_{\textrm{NAG},t}\|_2\|\delta x_1\|_2
+\sum_{i=0}^{k-1}\|R^i_{\textrm{NAG},t}\|_2\|e_{k-i}\|_2
\\
&
\le 2(1 + k) (1-\sqrt{\mu t})^{k}|t-s|L\|x_0-x^*\|_2\\
&\quad
+\sum_{i=0}^{k-1}2(1 + i) (1-\sqrt{\mu t})^{i}
\left(\sqrt{\frac{\mu}{c_0}}+2L\right)|t-s|
4(1+k-i)(1-\sqrt{\mu s})^{k-i}\|x_{0}-x^*\|_2
\\
&= 
\left(2L(1+k)+\frac{4}{3}k(k+1)(k+5)\left(\sqrt{\frac{\mu}{c_0}}+2L\right)\right)
|t-s|(1-\sqrt{\mu s})^{k}\|x_{0}-x^*\|_2,
\end{align*}
which finishes the proof of the desired estimate due to the fact that $s\ge c_0$.
\end{proof}

The next theorem is an an analog of
Theorem \ref{eq:gd_stab_f} for the NAC scheme \eqref{eq:NAG},
which shows  
that 
the outputs of 
\eqref{eq:NAG} 
with stepsize $s\in (0,\frac{4}{3L+\mu}]$
is Lipschitz stable with respect to the perturbations 
of the parameters in $f$.

\begin{theorem}\label{thm:nac_stab_f}
Let 
$x_0=0$,  
for each $i\in \{1,2\}$
let 
$f_i\in \gQ_{\mu,L}$ admit the minimizer $x^{*,i}\in \sR^d$ 
and satisfy $\nabla f_i(x)=Q_ix+b_i$
for a symmetric matrix $Q_i\in \sR^{d\times d}$ and $b_i\in \sR^d$,
 for each $i\in \{1,2\}$, $s>0$
let 
$(x^s_{k,i})_{k\in\sN\cup\{0\}}$
be the iterates generated by \eqref{eq:NAG}
with  $f=f_i$ and stepsize $s$,
and let $M=\min(\|x^{*,1}\|_2,\|x^{*,2}\|_2)$.
Then we have for all $k\in\sN$,
$s\in  [c_0,\frac{4}{3L+\mu}]$ that:
\begin{align*}
\|x^s_{k,1}-x^s_{k,2}\|_2
&\le 
\bigg[
\frac{2}{\mu }\left(1-(1-\sqrt{\mu s})^{k-1}\right)
+s\frac{8(k-1)k(k+4)}{3}(1-\sqrt{\mu s})^{k-1}
\bigg]M\|Q_1-Q_2\|_2
\\
&\quad +
\frac{2}{\mu }\left(1-(1-\sqrt{\mu s})^{k}\right)\|b_1-b_2\|_2.
\end{align*}
\end{theorem}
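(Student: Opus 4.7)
The plan is to mirror the strategy of Theorem \ref{eq:gd_stab_f} (stability of GD in $f$), but replacing the single-step recursion with the block-matrix machinery developed for NAG in Theorems \ref{thm:NAG_conv_quadratic} and \ref{thm:NAG_sensitivity}. Without loss of generality assume $M=\|x^{*,2}\|_2$. Set $\delta x_k:=x^s_{k,1}-x^s_{k,2}$. Subtracting the two-step form \eqref{eq:NAG_ms} applied to $f_1$ and $f_2$ and using $\nabla f_i(x)=Q_ix+b_i$, I obtain the affine error recursion
\begin{equation*}
\begin{pmatrix}\delta x_{k+1}\\ \delta x_{k}\end{pmatrix}=R_{\textrm{NAG},s}\begin{pmatrix}\delta x_{k}\\ \delta x_{k-1}\end{pmatrix}+\begin{pmatrix}e_k\\0\end{pmatrix},
\end{equation*}
where $R_{\textrm{NAG},s}$ is built from $Q_1$ as in \eqref{eq:R_NAG} and $e_k=-s\bigl[(Q_1-Q_2)\,y^s_{k,2}+(b_1-b_2)\bigr]$ with $y^s_{k,2}=(1+\beta_s)x^s_{k,2}-\beta_s x^s_{k-1,2}$. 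Since $x_0=0$, the initial conditions are $\delta x_0=0$ and $\delta x_1=-s(b_1-b_2)$, so that the Duhamel formula reads $(\delta x_k,\delta x_{k-1})^{\top}=R^{k-1}_{\textrm{NAG},s}(\delta x_1,0)^{\top}+\sum_{i=0}^{k-2}R^{i}_{\textrm{NAG},s}(e_{k-1-i},0)^{\top}$.

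The residual is controlled via $\|y^s_{j,2}\|_2\le M+\|y^s_{j,2}-x^{*,2}\|_2$, and the latter by a Cauchy--Schwarz step $\|y^s_{j,2}-x^{*,2}\|_2\le \sqrt{(1+\beta_s)^{2}+\beta_s^{2}}\,\sqrt{\|x^s_{j,2}-x^{*,2}\|_2^{2}+\|x^s_{j-1,2}-x^{*,2}\|_2^{2}}$, which by Theorem \ref{thm:NAG_conv_quadratic} is at most $C\,j(1-\sqrt{\mu s})^{j-1}M$ for an explicit absolute constant $C$. Therefore $\|e_j\|_2\le s\bigl[(1+Cj(1-\sqrt{\mu s})^{j-1})\,M\,\|Q_1-Q_2\|_2+\|b_1-b_2\|_2\bigr]$. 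Inserting Lemma \ref{lemma:R_NAG_l2}'s bound $\|R^{i}_{\textrm{NAG},s}\|_2\le 2(i+1)(1-\sqrt{\mu s})^{i}$ into the Duhamel formula splits the estimate into three pieces: a ``flat $\|b_1-b_2\|$'' piece (including the $\delta x_1$ term), a ``flat $M\,\|Q_1-Q_2\|$'' piece, and a ``decaying $M\,\|Q_1-Q_2\|$'' piece carrying the extra factor $j(1-\sqrt{\mu s})^{j-1}$.

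The first two pieces both reduce to bounding $2s\sum_{i=0}^{N}(i+1)(1-\sqrt{\mu s})^{i}$ for $N=k-1$ and $N=k-2$. The closed form $\sum_{i=0}^{N}(i+1)r^{i}=\frac{1-r^{N+1}}{(1-r)^{2}}-\frac{(N+1)r^{N+1}}{1-r}$ with $r=1-\sqrt{\mu s}$, combined with $(1-r)^{2}=\mu s$, yields respectively $\frac{2}{\mu}(1-(1-\sqrt{\mu s})^{k})$ and $\frac{2}{\mu}(1-(1-\sqrt{\mu s})^{k-1})$ (the $1/\mu$ prefactor of GD is replaced by $2/\mu$ precisely because the $(i+1)$ factor in $\|R^{i}_{\textrm{NAG},s}\|$ forces $(1-r)^{2}$ rather than $1-r$ in the denominator). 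The third piece becomes a double sum of the form $\sum_{i=0}^{k-2}(i+1)(k-i)(1-\sqrt{\mu s})^{k-1}$, which via $\sum_{j=1}^{k-1}j(k-j+1)=\tfrac{(k-1)k(k+4)}{6}$ (or a close variant) combined with the constants $\sqrt{8}$ from Theorem \ref{thm:NAG_conv_quadratic} and $2$ from Lemma \ref{lemma:R_NAG_l2} produces the announced cubic prefactor $s\,\tfrac{8(k-1)k(k+4)}{3}(1-\sqrt{\mu s})^{k-1}$.

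The main obstacle is obtaining the precise polynomial coefficient $(k-1)k(k+4)/3$ and the correct exponential factor $(1-\sqrt{\mu s})^{k-1}$ in the $Q$-term: the naive multiplication $\|R^{i}\|\cdot(\text{convergence bound at step }k{-}1{-}i)$ contributes exponents that sum to $k-2$, so one must either apply a slightly sharper bound on $y^{s}_{j,2}-x^{*,2}$ or re-group a stray factor $(1-\sqrt{\mu s})^{-1}$ into the absolute constant, using that $\sqrt{\mu s}$ is bounded away from $1$ under the stepsize constraint $s\le 4/(3L+\mu)$. A secondary subtlety is the need to split $\|y^{s}_{j,2}\|_{2}$ into an $M$-part (producing the geometric factor $\frac{2}{\mu}(1-(1-\sqrt{\mu s})^{k-1})$) and a convergence-decay part (producing the polynomial factor), so that the two qualitatively different terms in the announced bound arise naturally.
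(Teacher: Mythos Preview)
Your proposal is correct and follows essentially the same route as the paper: both subtract the two NAG recursions to obtain the block Duhamel representation driven by $R_{\mathrm{NAG},s}$ (built from $Q_1$) with residual $e_j=-s[(Q_1-Q_2)y^s_{j,2}+(b_1-b_2)]$, split $y^s_{j,2}$ into $x^{*,2}$ plus a convergence term controlled by Theorem~\ref{thm:NAG_conv_quadratic}, insert Lemma~\ref{lemma:R_NAG_l2}, and evaluate the three resulting sums using $\sum_{j=0}^{N}(j{+}1)p^{j}\le\tfrac{1-p^{N+1}}{\mu s}$ and the cubic identity. The only cosmetic differences are that the paper indexes Duhamel at $k{+}1$ and shifts at the end (checking $k{=}1$ separately), and it bounds $\|y^s_{j,2}-x^{*,2}\|$ via the triangle inequality rather than your Cauchy--Schwarz step; your caution about the exponent bookkeeping is warranted, as the paper simply records $\|e_k\|\le s\|Q_1-Q_2\|(M+8(1{+}k)(1-\sqrt{\mu s})^{k}M)+s\|b_1-b_2\|$ from Theorem~\ref{thm:NAG_conv_quadratic} and $x+y\le\sqrt{2(x^2+y^2)}$, which is exactly the ``slightly sharper'' option you mention and is what makes the stated coefficient $(k-1)k(k+4)/3$ and exponent $k{-}1$ fall out after the shift.
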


\begin{proof}
Let us assume without loss of generality that 
$\|x^{*,2}\|_2
\le \|x^{*,1}\|_2$.
We first
fix an arbitrary $s\in [c_0, \frac{4}{3L+\mu}]$
and
 write $\delta x_k=x^s_{k,1}-x^s_{k,2}$ for each
$k\in \sN\cup \{0\}$.
Then, by using  \eqref{eq:NAG_ms} 
and 
the fact that $\nabla f_1(x)-\nabla f_1(y)=Q_1(x-y)$ for all $x,y\in\sR^d$,
we can deduce  
that
$\delta x_0=0$, $\delta x_1=-s(\nabla f_1-\nabla f_2)(x_0)$
and for all $k\in \sN$,
\begin{align}\label{eq:nac_delta_f}
\begin{pmatrix}
\delta x_{k+1}\\
\delta x_{k}
\end{pmatrix}
=R_{\textrm{NAG},s}
\begin{pmatrix}
\delta x_{k}\\
\delta x_{k-1}
\end{pmatrix}
+
\begin{pmatrix}
e_{k}\\
0
\end{pmatrix}
=
R^k_{\textrm{NAG},s}
\begin{pmatrix}
\delta x_{1}\\
\delta x_{0}
\end{pmatrix}
+\sum_{j=0}^{k-1}R^{j}_{\textrm{NAG},s}
\begin{pmatrix}
e_{k-j}\\
0
\end{pmatrix},
\end{align}
where  
$R_{\textrm{NAG},s}$ is defined as in \eqref{eq:R_NAG}
(with $Q=Q_1$)
and the  residual term $e_k$ is given by
\begin{align*}
e_{k}&\coloneqq
-s(\nabla f_1- \nabla f_2)((1+\beta_s)x^s_{k,2}-\beta_s x^s_{k-1,2})
\quad \forall k\in \sN.
\end{align*}
Note that, 
by using Theorem \ref{thm:NAG_conv_quadratic}
and the inequality that 
$x+y\le \sqrt{2(x^2+y^2)}$ for all $x,y\in \R$,
we have for each $k\in \sN$ that 
\begin{align*}
\|e_{k}\|_2&=
s\|(Q_1- Q_2)((1+\beta_s)x^s_{k,2}-\beta_s x^s_{k-1,2})+(b_1-b_2)\|_2
\\
&
\le 
s
\|Q_1-Q_2\|_2(\|x^{*,2}\|_2+2\|x^s_{k,2}-x^{*,2}\|_2+\|x^s_{k-1,2}-x^{*,2}\|_2)+
s\|b_1-b_2\|_2
\\
&\le 
s
\|Q_1-Q_2\|_2(\|x^{*,2}\|_2+2\|x^s_{k,2}-x^{*,2}\|_2+\|x^s_{k-1,2}-x^{*,2}\|_2)+
s\|b_1-b_2\|_2
\\
&\le 
s
\|Q_1-Q_2\|_2(\|x^{*,2}\|_2+8(1+k)(1-\sqrt{\mu s})^{k}\|x_{0}-x^{*,2}\|_2)+
s\|b_1-b_2\|_2.
\end{align*}
Hence we can obtain from \eqref{eq:nac_delta_f}, Lemma \ref{lemma:R_NAG_l2}
and 
$x_0=0$ that
\begin{align*}
&\sqrt{\|\delta x_{k+1}\|_2^2
+\|\delta x_{k}\|^2_2}
\le
2( k+1) (1-\sqrt{\mu s})^{k} \|\delta x_1\|_2
 +\sum_{j=0}^{k-1}
2( j+1) (1-\sqrt{\mu s})^{j} \|e_{k-j}\|_2
\\
&\le 
2( k+1) (1-\sqrt{\mu s})^{k} s\|b_1-b_2\|_2
+\sum_{j=0}^{k-1}
2( j+1) (1-\sqrt{\mu s})^{j}
\big[ s\|b_1-b_2\|_2
\\
&\quad +s
\|Q_1-Q_2\|_2(1+8(1+k-j)(1-\sqrt{\mu s})^{k-j})\|x^{*,2}\|_2
\big]
\\
&\le 
2s\sum_{j=0}^{k}
( j+1) (1-\sqrt{\mu s})^{j}
\|b_1-b_2\|_2
+2s
\sum_{j=0}^{k-1}
\big[( j+1) (1-\sqrt{\mu s})^{j}
\\
&\quad 
+8( j+1)(1+k-j)(1-\sqrt{\mu s})^k
\big]\|Q_1-Q_2\|_2\min(\|x^{*,1}\|_2,\|x^{*,2}\|_2).
\end{align*}
Let $p=1-\sqrt{\mu s}\in [0,1)$,
then we can easily show for each $k\in \sN\cup \{0\}$ that 
 $(1-p)\sum_{j=0}^{k}
( j+1) p^{j}
=\sum_{j=0}^{k} p^j-p^{k+1}$,
which implies that 
$\sum_{j=0}^{k}
( j+1) (1-\sqrt{\mu s})^{j}\le 
\frac{1-(1-\sqrt{\mu s})^{k+1}}{\mu s}$.
Moreover, we have that 
$\sum_{j=0}^{k-1}( j+1)(1+k-j)=\frac{k(k+1)(k+5)}{6}$ for all $k\in \sN$.
Thus we can simplify the above estimate and deduce for each $k\in \sN$ that
\begin{align*}
\|\delta x_{k+1}\|_2
&\le 
\frac{2}{\mu }\left(1-(1-\sqrt{\mu s})^{k+1}\right)\|b_1-b_2\|_2
+
\bigg[
\frac{2}{\mu }\left(1-(1-\sqrt{\mu s})^{k}\right)
\\
&\quad
+s\frac{8k(k+1)(k+5)}{3}(1-\sqrt{\mu s})^k
\bigg]\|Q_1-Q_2\|_2\min(\|x^{*,1}\|_2,\|x^{*,2}\|_2).
\end{align*}
Moreover,
the condition that
$s\le \frac{4}{3L+\mu}\le \frac{1}{\mu}$ implies that 
$\|\delta x_1\|_2=s\|b_1-b_2\|_2\le \frac{2}{\mu }\left(1-(1-\sqrt{\mu s})\right)\|b_1-b_2\|_2$,
which shows that the same upper bound also holds for $\|\delta x_1\|_2$
and finishes the proof of the  desired estimate.
\end{proof}
\newpage
\section{Approximation Ability}
\label{app:approximation}
\lmconv*
\begin{proof}
    For each $\phi\in\Phi,\theta\in\Theta$, $Q^*\in\gQ^*$,
    \begin{align}
        \ell_{\phi,\theta}(\vx,\vb)&=\|\Alg_\phi^k(Q_\theta(\vx),\vb)-\Opt(Q^*(\vx),\vb)\|_2\\
        &\leq \|\Alg_\phi^k(Q_\theta(\vx),\vb)-\Opt(Q_\theta(\vx),\vb)\|_2+ \|\Opt(Q_\theta(\vx),\vb)-\Opt(Q^*(\vx),\vb)\|_2 \\
        &\leq \Conv(k,\phi)\|\Alg_\phi^0(Q_\theta(\vx),\vb)-\Opt(Q^*(\vx),\vb)\|_2 +\|Q_\theta(\vx)^{-1}\vb - Q^*(\vx)^{-1}\vb\|_2 \\
        & \leq \Conv(k,\phi)\cdot M +\|\rbr{Q_\theta(\vx)^{-1} - Q^*(\vx)^{-1}}\vb\|_2,
    \end{align}
    where in the  last inequality we have used the facts that the initialization is assumed to be zero vector, i.e., $\Alg_\phi^0(Q_\theta(\vx),\vb)=\bm{0}$, and
    that $M\geq \sup_{\vx\in\gX,\vb\in\gB}\Opt(Q^*(\vx),\vb)$. 
    Note that the independence of $(\vx,\vb)$
    and the fact that $\E \vb \vb^\top=\sigma_b^2I$
    imply that 
    \begin{align}
        & \E_\vb\|\rbr{Q_\theta(\vx)^{-1} - Q^*(\vx)^{-1}}\vb\|_2^2 \\
        & = \text{Tr}\rbr{(Q_{\theta}(\vx)^{-1} - Q^*(\vx)^{-1})^\top(Q_{\theta}(\vx)^{-1} - Q^*(\vx)^{-1})\sigma_b^2I} \\
        & = \sigma_b^2 \|Q_{\theta}(\vx)^{-1} - Q^*(\vx)^{-1}\|_F^2 \\
        & = \sigma_b^2 \|Q_{\theta}(\vx)^{-1}(Q_{\theta}(\vx) - Q^*(\vx))Q^*(\vx)^{-1}\|_F^2 \label{eq:exp_b} \\ 
        & \leq \mu^{-4} \sigma_b^2 \|Q_{\theta}(\vx) - Q^*(\vx)\|_F^2
    \end{align}
    Therefore, we see from H\"{o}lder's inequality that
    \begin{align}
        \E_\vb\|\rbr{Q_\theta(\vx)^{-1} - Q^*(\vx)^{-1}}\vb\|_2 \leq \mu^{-2} \sigma_b \|Q_{\theta}(\vx) - Q^*(\vx)\|_F.
    \end{align}
    Collecting all the above inequalities, we have
    \begin{align}
        P\ell_{\phi,\theta} \leq \Conv(k,\phi)\cdot M + \sigma_b \mu^{-2} P\|Q_\theta-Q^*\|_F.
    \end{align}
    Taking supremum over $Q^*$, we have
    \begin{align}
        \sup_{Q^*\in\gQ^*} P\ell_{\phi,\theta} \leq \Conv(k,\phi)\cdot M + \sigma_b \mu^{-2} \sup_{Q^*\in\gQ^*} P\|Q_\theta-Q^*\|_F.
    \end{align}
    Taking infimum over $\phi$ and $\theta$, we have
    \begin{align}
        \inf_{\phi\in\Phi,\theta\in\Theta}\sup_{Q^*\in\gQ^*} P\ell_{\phi,\theta} \leq \inf_{\phi\in\Phi}\Conv(k,\phi)\cdot M + \sigma_b \mu^{-2} \inf_{\theta\in\Theta}\sup_{Q^*\in\gQ^*} P\|Q_\theta-Q^*\|_F.
    \end{align}
\end{proof}

\lmbdd*
\begin{proof}
Let us assume without loss of generality that
$P\ell_{\phi,\theta}^2= \epsilon$ for some $\epsilon\ge 0$.
    For any $\vx \in\gX$, $\vb\in\gB$, we have
    \begin{align}
        \ell_{\phi,\theta}(\vx) & \geq  \|\Opt\rbr{Q_\theta(\vx),\vb} - \Opt\rbr{Q^*(\vx),\vb}\|_2 - \|\Alg_\phi^k\rbr{Q_\theta(\vx),\vb} - \Opt\rbr{Q_\theta(\vx),\vb}\|_2 \nonumber \\
        & \geq \|Q_{\theta}(\vx)^{-1}\vb - Q^*(\vx)^{-1}\vb\|_2 - \Conv(k,\phi)\|\Opt\rbr{Q_\theta(\vx),\vb}\|_2\\
        &\geq  \|Q_{\theta}(\vx)^{-1}\vb - Q^*(\vx)^{-1}\vb\|_2 - M \cdot \Conv(k,\phi).
    \end{align}
    Rearranging the terms in the above inequality, we have
    \begin{align}
        \|Q_{\theta}(\vx)^{-1}\vb - Q^*(\vx)^{-1}\vb\|_2 \leq \ell_{\phi,\theta}(\vx) + M \cdot \Conv(k,\phi). \label{eq:proof-lemma-4-2}
    \end{align}
    By \eqref{eq:exp_b}
    and the inequality that
    $\|AB\|_F\le \|A\|_2\|B\|_F$
    for any given $A\in \sR^{m\times r}$
    and $B\in \sR^{r\times n}$, we have that
    \begin{align}
        &\E_{\vb}\|Q_{\theta}(\vx)^{-1}\vb - Q^*(\vx)^{-1}\vb\|_2^2\\
        &= \sigma_b^2\|Q_\theta(\vx)^{-1}(Q_\theta(\vx) - Q^*(\vx))Q^*(\vx)^{-1}\|_F^2\\
        &\geq \sigma_b^2\frac{\|Q_\theta(\vx) - Q^*(\vx)\|_F^2}{\|Q^*(\vx)\|_2^2\|Q_\theta(\vx)\|_2^2}\\
        &\geq \sigma_b^2\|Q_\theta(\vx) - Q^*(\vx)\|_F^2/L^4,
    \end{align}
    which implies that,
    \begin{align}
        \|Q_\theta(\vx) - Q^*(\vx)\|_F^2 \leq \sigma_b^{-2} L^4 \E_{\vb}\|Q_{\theta}(\vx)^{-1}\vb - Q^*(\vx)^{-1}\vb\|_2^2.
    \end{align}
    Combining it with \eqref{eq:proof-lemma-4-2}
    and the fact that 
    $(P\ell_{\phi,\theta})^2\le 
    P\ell_{\phi,\theta}^2$, we have
    \begin{align}
        P\|Q_\theta(\vx) - Q^*(\vx)\|_F^2 & \leq  \sigma_b^{-2}L^4 P(\ell_{\phi,\theta} +M \cdot \Conv(k,\phi))^2 \\
        & = \sigma_b^{-2}L^4 \rbr{P\ell_{\phi,\theta}^2 +(M \cdot \Conv(k,\phi))^2 + 2(M \cdot \Conv(k,\phi))P\ell_{\phi,\theta}} \\
        & \leq \sigma_b^{-2}L^4 \rbr{\varepsilon +(M \cdot \Conv(k,\phi))^2 + 2(M \cdot \Conv(k,\phi))\sqrt{\varepsilon}} \\
        & = \sigma_b^{-2}L^4 \rbr{\sqrt{\varepsilon} +M \cdot \Conv(k,\phi)}^2,
    \end{align}
    which completes the proof.
\end{proof}

\newpage
\section{Generalization Ability}
\label{app:generalization}
In this section, we shall prove the following result, which is a refined version of Theorem \ref{thm:main-rademacher}.

\begin{theorem}\label{thm:main-retate}
    Assume the problem setting in Sec~\ref{sec:setting} and let $r>0$. Then for any $t>0$, with probability at least $1-e^{-t}$, the empirical Rademacher complexity 
    of $\ell_{\gF}^{loc}(r)$
    can be bounded by
    \begin{align*}
        R_n\ell_{\gF}^{loc}(r) \leq &\sqrt{2}dn^{-\frac{1}{2}}\rStab{k} \rbr{\sqrt{(\sqrt{r}+M\bConv{k})^2C_1(n)+C_2(n,t,k,r)}+4} \\
        & + \bSens{k} B_\Phi,
    \end{align*}
    where
    \begin{align*}
        C_1(n) &= 216\sigma_b^{-2}L^4\log\gN(n^{-\frac{1}{2}},\ell_\gQ,L_2(P_n))\\
        C_2(n,t,k,r)& =\rbr{ \frac{768B_Q^2t}{n} + 720B_Q\E R_n\ell_\gQ^{loc}(r_q)}\log\gN(n^{-\frac{1}{2}},\ell_\gQ,L_2(P_n)),
    \end{align*}
     $r_q=\sigma_b^{-2}L^4(\sqrt{r}+M\Conv(k))^2$,
  $\ell_\gQ^{loc}(r_q)= \{\|Q_\theta-Q^*\|_F: \theta\in\Theta, P\|Q_\theta-Q^*\|_F^2\leq r_q \}$,
  $B_Q= 2L\sqrt{d}$,
  and 
  $B_\Phi=\frac{1}{2}\sup_{\phi_1,\phi_2\in \Phi}\|\phi_1-\phi_2\|_2$.
    
    Furthermore, for any $t>0$, the
    expected Rademacher complexity
    of $\ell_{\gF}^{loc}(r)$
    can be bounded by
    \begin{align*}
        \E R_n\ell_{\gF}^{loc}(r) \leq &\sqrt{2}dn^{-\frac{1}{2}}\rStab{k} \rbr{\sqrt{ (\sqrt{r}+M\bConv{k})^2 \overline{C}_1(n) + \overline{C}_2(n,t) } + \overline{C}_3(n,t) + 4} \\
        & + \bSens{k} B_\Phi,
    \end{align*}
 where 
\begin{align*}
\overline{C}_1(n)&=216\sigma_b^{-2}L^4 \log \gN_Q,
\\
\overline{C}_2(n,t)&=
\bigg(1+3B_Qe^{-t} \sqrt{\log \gN_Q}+\frac{45}{\sqrt{n}}B_Q\log \gN_Q\bigg)
\frac{2880}{\sqrt{n}}B_Q\log \gN_Q
+
t \frac{768 B_Q^2}{n}\log \gN_Q,
\\
\overline{C}_3(n,t)&=12B_Qe^{-t} \sqrt{\log \gN_Q}+\frac{360}{\sqrt{n}}B_Q\log \gN_Q,
\end{align*}
and $\gN_Q=\gN(n^{-\frac{1}{2}},\ell_\gQ, L_\infty)$.
    
\end{theorem}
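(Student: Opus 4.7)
The plan is to reduce the local Rademacher complexity of the hybrid hypothesis class $\ell_\gF$ to that of the simpler class $\ell_\gQ$ associated with the neural module alone, using the algorithm-property-based Lipschitzness of the loss, and then to control $R_n\ell_\gQ^{loc}$ via a Dudley-style chaining argument whose integration endpoint is determined by a Talagrand-type concentration inequality. Throughout, I would track all constants carefully so as to extract the refined coefficients $C_1, C_2, C_3$ (and their ``barred'' expected-value counterparts) that appear in Theorem \ref{thm:main-retate}.

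First, starting from the triangle inequality
\begin{align*}
|\ell_{\phi,\theta}(\vx,\vb) - \ell_{\phi',\theta'}(\vx,\vb)| \leq \|\Alg_\phi^k(Q_\theta(\vx),\vb) - \Alg_{\phi'}^k(Q_{\theta'}(\vx),\vb)\|_2,
\end{align*}
and applying the stability and sensitivity definitions from Sec.~\ref{sec:algo-property}, one gets the pointwise sensitivity bound stated in the proof sketch. Plugging this into the vector-contraction inequality of Maurer and Cortes et al.~\cite{maurer2016vector,cortes2016structured} for vector-valued hypotheses (here $Q_\theta(\vx) \in \R^{d\times d}$ has $d^2$ entries, which introduces the factor $\sqrt{2}d$ after bounding spectral norm by Frobenius norm), together with the standard upper bound of $R_n$ over a parameter ball $\Phi$, yields
\begin{align*}
R_n \ell_\gF^{loc}(r) \;\leq\; \sqrt{2}d\,\Stab(k)\, R_n \ell_\gQ^{loc}(r_q) \;+\; \Sens(k)\, B_\Phi,
\end{align*}
where crucially Lemma~\ref{lm:bound-on-energy} gives the correct translation of the local level: if $P\ell_{\phi,\theta}^2 \leq r$, then $P\|Q_\theta-Q^*\|_F^2 \leq \sigma_b^{-2}L^4(\sqrt{r} + M\Conv(k))^2 =: r_q$.

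Second, to bound $R_n \ell_\gQ^{loc}(r_q)$, I would apply the Dudley entropy integral to the empirical class, whose upper integration limit is $\sqrt{P_n \|Q_\theta - Q^*\|_F^2}$ rather than the population quantity $\sqrt{r_q}$. To pass from the population level to the empirical level, I would invoke an extension of Talagrand's inequality for empirical processes in the form used by \citep{bartlett2005local}, giving with probability at least $1 - e^{-t}$ an inequality of the shape
\begin{align*}
P_n\|Q_\theta - Q^*\|_F^2 \;\leq\; C\,\bigl(r_q + B_Q \E R_n\ell_\gQ^{loc}(r_q) + B_Q^2 t / n\bigr),
\end{align*}
where $B_Q = 2L\sqrt{d}$ bounds the elements of $\ell_\gQ$ in $L_\infty$. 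Substituting this into the Dudley integral $\int_0^{\sqrt{\cdot}} \sqrt{\log\gN(\eps, \ell_\gQ, L_2(P_n))}\,d\eps$ (truncated at $n^{-1/2}$) and using the coarse bound $\gN(\eps,\ell_\gQ,L_2(P_n)) \leq \gN(\eps,\ell_\gQ,L_\infty)$, I get exactly the high-probability claim after identifying $C_1$ with the pre-factor multiplying $r_q$ and $C_2$ with the pre-factor multiplying the $B_Q^2 t/n$ and $B_Q \E R_n\ell_\gQ^{loc}(r_q)$ terms.

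Third, for the expected-value version, I would take expectations of the high-probability bound. The subtlety is that $\E R_n\ell_\gQ^{loc}(r_q)$ appears inside the square root, so I would use the inequality $\E\sqrt{X} \leq \sqrt{\E X}$ but also need to convert the tail bound $e^{-t}$ into expectation via $\E Y \leq \int_0^\infty \mathbb{P}(Y > u)\,du$, producing the extra $\overline{C}_3(n,t)$ term from the tail contribution of the high-probability bound, along with an additional $\sqrt{n}^{-1}$-scale self-consistency loop. The main obstacle will be managing the self-referential appearance of $\E R_n\ell_\gQ^{loc}(r_q)$ on both sides of the derived inequality: I expect to either solve a fixed-point inequality of the form $y \leq a + b\sqrt{y}$ (which yields $y \leq 2a + b^2$) or absorb the resulting cross terms into the explicit $\overline{C}_i$ expressions listed in the statement. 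Keeping track of the numerical constants through all three steps to match the exact coefficients (216, 720, 768, 2880, 360, etc.) is tedious but mechanical once the structure is fixed.
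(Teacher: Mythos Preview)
Your proposal is correct and follows essentially the same three-stage approach as the paper: (i) reduce $R_n\ell_\gF^{loc}(r)$ to $\sqrt{2}d\,\Stab(k)\,R_n\ell_\gQ^{loc}(r_q)+\Sens(k)B_\Phi$ via the loss-Lipschitz bound, a vector-contraction argument, and Lemma~\ref{lm:bound-on-energy} for the level translation; (ii) bound $R_n\ell_\gQ^{loc}(r_q)$ by Dudley's integral with the upper endpoint controlled by the Talagrand-type bound from \cite{bartlett2005local} applied to the squared class $\{\|Q_\theta-Q^*\|_F^2\}$; (iii) close the self-referential inequality for $\E R_n\ell_\gQ^{loc}(r_q)$. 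One small procedural difference: for step (iii) the paper does not integrate the tail but simply combines the high-probability bound (weight $1-e^{-t}$) with the trivial deterministic bound $R_n\ell_\gQ^{loc}(r_q)\le \tfrac{4}{\sqrt n}(1+3B_Q\sqrt{\log\gN_Q})$ (weight $e^{-t}$), which is what produces the $B_Q e^{-t}\sqrt{\log\gN_Q}$ terms in $\overline{C}_2,\overline{C}_3$; it then squares both sides and solves the resulting quadratic in $\E R_n\ell_\gQ^{loc}(r_q)$, which is equivalent to your $y\le a+b\sqrt{y}$ fixed-point approach.
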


In order to prove Theorem~\ref{thm:main-retate}, we first prove the following theorem, which reduces bounding the empirical Rademacher complexity of $\ell_\gF^{loc}(r)$ to that of $\ell_\gQ^{loc}(r_q)$,
and plays an important role in our complexity analysis.

\begin{theorem} 
\label{thm:ell_F_and_ell_Q}
Assume the problem setting in Sec~\ref{sec:setting}. Then  it holds for any $r>0$ that
\begin{align}
    R_n\ell_{\gF}^{loc}(r) \leq \sqrt{2}d\,\Stab(k) R_n\ell_\gQ^{loc}(r_q) + \Sens(k) B_\Phi,
\end{align}
  with $r_q=\sigma_b^{-2}L^4(\sqrt{r}+M\Conv(k))^2$,
  $\ell_\gQ^{loc}(r_q)= \{\|Q_\theta-Q^*\|_F: \theta\in\Theta, P\|Q_\theta-Q^*\|_F^2\leq r_q \}$
  and 
  $B_\Phi=\frac{1}{2}\sup_{\phi_1,\phi_2\in \Phi}\|\phi_1-\phi_2\|_2$.
   
\end{theorem}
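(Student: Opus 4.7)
The plan is to decompose the loss $\ell_{\phi,\theta}$ along its two Lipschitz directions — variation in $\phi$ (controlled by $\Sens(k)$) and variation in $Q_\theta$ (controlled by $\Stab(k)$) — and then invoke a vector-contraction inequality to reduce the supremum over the full hybrid class $\ell_\gF^{loc}(r)$ to one over the scalar neural class $\ell_\gQ^{loc}(r_q)$, up to a lower-order sensitivity term.

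Concretely, I would first fix a reference $\phi_0\in\Phi$ chosen as the center of $\Phi$, so that $\sup_{\phi\in\Phi}\|\phi-\phi_0\|_2=B_\Phi$, and write $\ell_{\phi,\theta}=\ell_{\phi_0,\theta}+(\ell_{\phi,\theta}-\ell_{\phi_0,\theta})$. Applying the triangle inequality inside $\|\cdot\|_2$ together with the definition of $\Sens(k)$ gives the uniform bound $|\ell_{\phi,\theta}(\vx,\vb)-\ell_{\phi_0,\theta}(\vx,\vb)|\leq \Sens(k)\|\phi-\phi_0\|_2$. Substituting this decomposition into the definition of $R_n\ell_\gF^{loc}(r)$ and using subadditivity of the supremum splits the bound into (i) a term whose integrand depends only on $\theta$ through $\ell_{\phi_0,\theta}$, and (ii) a remainder uniformly bounded by $\Sens(k)B_\Phi$ — this already produces the second summand in the theorem.

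For term (i), I would next relax the locality constraint: since the integrand no longer depends on $\phi$, the set $\{(\phi,\theta):P\ell_{\phi,\theta}^2\leq r\}$ may be replaced by $\{\theta:\exists\,\phi\in\Phi,\,P\ell_{\phi,\theta}^2\leq r\}$, which by Lemma \ref{lm:bound-on-energy} sits inside $\{\theta:P\|Q_\theta-Q^*\|_F^2\leq \sigma_b^{-2}L^4(\sqrt{r}+M\Conv(k))^2\}$, i.e. exactly the parameters appearing in $\ell_\gQ^{loc}(r_q)$. With this relaxation in place, I would apply Maurer's vector-contraction inequality: since $u\mapsto\|u-v\|_2$ is $1$-Lipschitz for any fixed $v\in\R^d$, the scalar $\ell_{\phi_0,\theta}(\vx,\vb)$ is a $1$-Lipschitz composition of the $d$-vector $\Alg_{\phi_0}^k(Q_\theta(\vx),\vb)$, so Maurer's inequality introduces a $\sqrt{2}$ factor and exposes one Rademacher variable per output coordinate. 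The shift $\Alg_{\phi_0}^k(Q^*(\vx_i),\vb_i)$ is $\theta$-independent and may be subtracted without changing the Rademacher expectation, after which the stability property of $\Alg_{\phi_0}^k$ in its matrix argument bounds each coordinatewise contribution by $\Stab(k)\|Q_\theta(\vx_i)-Q^*(\vx_i)\|_F$. Aggregating the $d$ coordinate contributions and reassembling the Rademacher structure across the sample index yields the $\sqrt{2}\,d\,\Stab(k)\,R_n\ell_\gQ^{loc}(r_q)$ summand.

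I expect the main obstacle to lie in the final contraction step. After Maurer's inequality one has a Rademacher sum indexed by $nd$ variables (one per sample and per output coordinate of $\Alg_{\phi_0}^k$), and it must be translated cleanly back to the scalar Rademacher complexity of $\ell_\gQ^{loc}(r_q)$ indexed by only $n$ Rademachers. Securing the tight constant $\sqrt{2}\,d\,\Stab(k)$ requires threading the coordinatewise stability bound — which naturally lives in the Frobenius norm of $Q_\theta(\vx_i)-Q^*(\vx_i)$ — through the outer sum without losing the Rademacher randomness that defines $R_n\ell_\gQ^{loc}$, and this is the delicate bookkeeping step on which the entire reduction rests.
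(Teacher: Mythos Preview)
Your treatment of the sensitivity term is correct and, if anything, cleaner than the paper's: you fix a center $\phi_0\in\Phi$ once and peel off $\Sens(k)B_\Phi$ by a uniform bound, whereas the paper never fixes $\phi_0$ but carries the joint $(\phi,\theta)$-variation through a one-$\sigma_i$-at-a-time peeling argument, accumulating one $\Sens(k)B_\Phi$ contribution per sample. Both routes give the same term. Your use of Lemma~\ref{lm:bound-on-energy} to pass from $P\ell_{\phi,\theta}^2\le r$ to $P\|Q_\theta-Q^*\|_F^2\le r_q$ also matches the paper exactly.

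The gap is precisely where you put it, and it is not just delicate bookkeeping --- it does not close. After you apply Maurer's contraction to the $d$-dimensional output $\Alg_{\phi_0}^k(Q_\theta(\vx),\vb)$, you have $nd$ Rademachers $\epsilon_{ik}$ multiplying the output coordinates. Stability, however, controls only the \emph{norm} $\|\Alg(Q,\vb)-\Alg(Q',\vb)\|_2\le\Stab(k)\|Q-Q'\|_2$, not individual coordinates. The only way to bring that norm into play is Cauchy--Schwarz on $\sum_k\epsilon_{ik}[\cdot]_k$, which absorbs the $\epsilon_{ik}$ into a deterministic factor $\sqrt{d}$ and leaves you with $\sqrt{d}\,\Stab(k)\,P_n\|Q_\theta-Q^*\|_F$ --- an empirical average, not the Rademacher complexity $R_n\ell_\gQ^{loc}(r_q)$. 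Once the Rademacher signs are swallowed by a norm inside the $\sup_\theta$, they cannot be pulled back out.

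The paper avoids this by applying the contraction \emph{one level earlier}. It observes that $\ell_{\phi,\theta}(\vx,\vb)$ is $\Stab(k)$-Lipschitz directly in the $d^2$-dimensional vector $Q_\theta(\vx)$ (Frobenius norm), via the composition of the $1$-Lipschitz norm with the stability bound. Running the Khintchine--Kahane/Maurer peeling argument against \emph{this} composition absorbs the factor $\sqrt{2}\,\Stab(k)$ into the contraction and produces $nd^2$ Rademachers $\epsilon_i^{j,k}$ acting on the matrix \emph{entries} $Q_\theta^{j,k}(\vx_i)-Q^{*\,j,k}(\vx_i)$. A symmetrization reintroduces independent signs $\sigma_i$, and Cauchy--Schwarz over the $d^2$ entries (with $\|\epsilon_i\|_2=d$) then collapses to $d\,\sigma_i\|Q_\theta(\vx_i)-Q^*(\vx_i)\|_F$, which is exactly the integrand of $R_n\ell_\gQ^{loc}(r_q)$. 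In short: contract through $Q_\theta$, not through the algorithm output --- that is the missing idea in your plan.
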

\begin{proof}
   Let $k\in \sN$ be fixed throughout this proof.
    We first show that the loss $\ell_{\phi,\theta}$ is $\Stab(k)$-Lipschtiz in $Q_\theta$ and $\Sens(k)$-Lipschitiz in $\phi$. 
    For any $(\vx,\vb)\in \gX\times \gB$,
    by using the triangle inequality and the definitions of  $\Stab(k,\phi')$ and  $\Sens(k)$, 
 we can obtain the following estimate of the loss: 
    \begin{align}\label{eq:l_stab_sens}
    \begin{split}
        & |\ell_{\phi,\theta}(\vx) - \ell_{\phi',\theta'}(\vx)|\\
        & = |\|\Alg_\phi^k(Q_\theta(\vx),\vb)- \Opt(Q^*(\vx),\vb)\|_2 - \|\Alg_{\phi'}^k(Q_{\theta'}(\vx),\vb)- \Opt(Q^*(\vx),\vb)\|_2|\\
        & \leq \|\Alg_\phi^k(Q_\theta(\vx),\vb)-\Alg_{\phi'}^k(Q_{\theta'}(\vx),\vb)\|_2 \\
        & \leq \|\Alg_{\phi'}^k(Q_\theta(\vx),\vb)-\Alg_{\phi'}^k(Q_{\theta'}(\vx),\vb)\|_2 +\|\Alg_\phi^k(Q_\theta(\vx),\vb)-\Alg_{\phi'}^k(Q_{\theta}(\vx),\vb)\|_2 \\
        & \leq \Stab(k,\phi')\|Q_{\theta}(\vx)-Q_{\theta'}(\vx)\|_2 + \Sens(k)\|\phi-\phi'\|_2 \\
        & \leq \Stab(k)\|Q_{\theta}(\vx)-Q_{\theta'}(\vx)\|_2 + \Sens(k)\|\phi-\phi'\|_2.
        \end{split}
    \end{align}
where we write $\Stab(k)=\sup_{\phi\in\Phi}\Stab(k,\phi)$ for each $k\in \sN$.

We then establish a {vector contraction inequality}, which is a modified version of Corollary 4 in \citep{maurer2016vector} and Lemma 5 in \citep{cortes2016structured}.
%
%
Note that the empirical Rademacher complexity of $\ell_\gF^{loc}$ can be written as:
    \begin{align}
        R_n\ell_\gF^{loc}(r) & =  \frac{1}{n}\E_{\sigma}\sup_{\phi,\theta}\sum_{i=1}^n \sigma_i \ell_{\phi,\theta}(\vx_i) = \frac{1}{n}\E_{\sigma_{1:n-1}}\E_{\sigma_n}\sup_{\phi,\theta}\sum_{i=1}^{n-1} \sigma_i \ell_{\phi,\theta}(\vx_i)+\sigma_n \ell_{\phi,\theta}(\vx_n),
    \end{align}
    where the supremum is taken over the parameter space $\big\{(\phi,\theta):\phi\in\Phi,\theta\in\Theta,P\ell_{\phi,\theta}^2\leq r\big\}$.
    
Let $U_{n-1}(\phi,\theta)=\sum_{i=1}^{n-1} \sigma_i \ell_{\phi,\theta}(\vx_i)$ for each $(\phi,\theta)$. 
We now assume without loss of generality that 
 the supremum can be attained and 
let 
 \begin{align*}
        \phi_1,\theta_1 = \arg\sup_{\phi,\theta}\big(U_{n-1}(\phi,\theta) + \ell_{\phi,\theta}(\vx_n)\big),\\
        \phi_2,\theta_2 = \arg\sup_{\phi,\theta}\big(U_{n-1}(\phi,\theta) - \ell_{\phi,\theta}(\vx_n)\big),
    \end{align*}
since otherwise 
we can consider $(\phi_1,\theta_1)$
and $(\phi_2,\theta_2)$
that are $\epsilon$-close to the suprema for any $\epsilon> 0$ and 
conclude the same result.    
      Then we can deduce from \eqref{eq:l_stab_sens} that
    \begin{align*}
        & \E_{\sigma_n}\sup_{\phi,\theta}\sum_{i=1}^{n-1} \sigma_i \ell_{\phi,\theta}(\vx_i)+\sigma_n \ell_{\phi,\theta}(\vx_n)  \\
        &= \frac{1}{2}\big(
        U_{n-1}(\phi_1,\theta_1) + \ell_{\phi_1,\theta_1}(\vx_n) +  U_{n-1}(\phi_2,\theta_2) - \ell_{\phi_2,\theta_2}(\vx_n)\big)\\
        &=\frac{1}{2} \big(U_{n-1}(\phi_1,\theta_1) +  U_{n-1}(\phi_2,\theta_2)+ (\ell_{\phi_1,\theta_1}(\vx_n) - \ell_{\phi_2,\theta_2}(\vx_n))\big)\\
        &\leq \frac{1}{2} \big(U_{n-1}(\phi_1,\theta_1) +  U_{n-1}(\phi_2,\theta_2)\big)+ 
        \frac{1}{2}\big(\Stab(k)\|Q_{\theta_1}(\vx_n)-Q_{\theta_2}(\vx_n)\|_2 + \Sens(k)\|\phi_1-\phi_2\|_2\big) \\
       &\le 
       \frac{1}{2} \big(U_{n-1}(\phi_1,\theta_1) +  U_{n-1}(\phi_2,\theta_2)\big)+ 
       \frac{1}{2}\Stab(k)\|Q_{\theta_1}(\vx_n)-Q_{\theta_2}(\vx_n)\|_F + \Sens(k)B_\Phi,
    \end{align*}   
where $B_\Phi=\frac{1}{2}\sup_{\phi_1,\phi_2\in \Phi}\|\phi_1-\phi_2\|_2$.

For each $\vx\in \gX$, $\theta\in \Theta$ and $1\le j,k\le d$,
let $Q_{\theta}^{j,k}(\vx)$ be the $j,k$-th entry of the matrix $Q_{\theta}(\vx)$. The the Khintchine-Kahane inequality (see e.g.~\citep{maurer2016vector})  gives us that
    \begin{align}
         \E_{\sigma_n}\sup_{\phi,\theta}\sum_{i=1}^n \sigma_i \ell_{\phi,\theta}(\vx_i) 
        &\leq \frac{1}{2} \rbr{U_{n-1}(\phi_1,\theta_1) +  U_{n-1}(\phi_2,\theta_2)}+ \Sens(k)B_\Phi\\
        &+ \frac{1}{2}\Stab(k)\sqrt{2}\E_{\bm{\epsilon}_n}
        \bigg|\sum_{j,k}\epsilon_n^{j,k}\rbr{Q_{\theta_1}^{j,k}(\vx_n)-Q_{\theta_2}^{j,k}(\vx_n)}\bigg|,
    \end{align}
    where $\bm{\epsilon}_n=(\epsilon_n^{j,k})_{j,k=1}^n$ are independent Rademacher variables. 
    Hence, if we denote by $s(\bm{\epsilon}_n)$
 the sign of $\sum_{j,k}\epsilon_n^{j,k}\rbr{Q_{\theta_1}^{j,k}(\vx_n)-Q_{\theta_2}^{j,k}(\vx_n)}$
 and by $Q^{*j,k}(\vx)$ be the $j,k$-th entry of the matrix $Q^*(\vx)$, 
 then we can obtain that
    \begin{align*}
        & \E_{\sigma_n}\sup_{\phi,\theta}\sum_{i=1}^n \sigma_i \ell_{\phi,\theta}(\vx_i) \\
        &\leq \E_{\bm{\epsilon}_n}\frac{1}{2}\bigg[ \bigg(U_{n-1}(\phi_1,\theta_1) + \Stab(k)\sqrt{2}s(\bm{\epsilon}_n)\sum_{j,k}\epsilon_n^{j,k}Q_{\theta_1}^{j,k}(\vx_n)\bigg)\\
        & \quad 
        +  \bigg(U_{n-1}(\phi_2,\theta_2) - \Stab(k)\sqrt{2}s(\bm{\epsilon}_n) \sum_{j,k}\epsilon_n^{j,k}Q_{\theta_2}^{j,k}(\vx_n)\bigg)\bigg]+ \Sens(k)B_\Phi \\
        & = \E_{\bm{\epsilon}_n}\frac{1}{2}
        \bigg[ \bigg(U_{n-1}(\phi_1,\theta_1) + \Stab(k)\sqrt{2}s(\bm{\epsilon}_n) \sum_{j,k}\epsilon_n^{j,k}\rbr{Q_{\theta_1}^{j,k}(\vx_n)-Q^{*j,k}(\vx_n)}\bigg)\\
        &\quad +  \bigg(U_{n-1}(\phi_2,\theta_2) - \Stab(k)\sqrt{2}s(\bm{\epsilon}_n) \sum_{j,k}\epsilon_n^{j,k}\rbr{Q_{\theta_2}^{j,k}(\vx_n)-Q^{*j,k}(\vx_n)}\bigg)\bigg]+ \Sens(k)B_\Phi.
    \end{align*}
Then by taking the supremum over $(\phi,\theta)$ and using the fact that  $\sigma_n$ is an independent Rademacher variable, we can deduce that
 \begin{align*}
        & \E_{\sigma_n}\sup_{\phi,\theta}\sum_{i=1}^n \sigma_i \ell_{\phi,\theta}(\vx_i) 
         \\
        &\leq \E_{\bm{\epsilon}_n}\frac{1}{2}
        \bigg[ \sup_{\phi,\theta}\bigg(U_{n-1}(\phi,\theta) +\Stab(k)\sqrt{2}s(\bm{\epsilon}_n) \sum_{j,k}\epsilon_n^{j,k}\rbr{Q_{\theta}^{j,k}(\vx_n)-Q^{*j,k}(\vx_n)}\bigg)\\
        & \quad +  \sup_{\phi,\theta}\bigg(U_{n-1}(\phi,\theta) - \Stab(k)\sqrt{2}s(\bm{\epsilon}_n) \sum_{j,k}\epsilon_n^{j,k}\rbr{Q_{\theta}^{j,k}(\vx_n)-Q^{*j,k}(\vx_n)}\bigg)\bigg]+ \Sens(k)B_\Phi\\
        & = \E_{\bm{\epsilon}_n}\E_{\sigma_n}\bigg[\sup_{\phi,\theta}\bigg(U_{n-1}(\phi,\theta) 
        +\Stab(k)\sqrt{2}\sigma_n \sum_{j,k}\epsilon_n^{j,k}\rbr{Q_{\theta}^{j,k}(\vx_n)-Q^{*j,k}(\vx_n)}\bigg)\bigg]
        +\Sens(k)B_\Phi\\
        & = \E_{\bm{\epsilon}_n}\bigg[\sup_{\phi,\theta}\bigg(U_{n-1}(\phi,\theta) +\Stab(k)\sqrt{2}\sum_{j,k}\epsilon_n^{j,k}\rbr{Q_{\theta}^{j,k}(\vx_n)-Q^{*j,k}(\vx_n)}\bigg)\bigg]+\Sens(k)B_\Phi,
    \end{align*}
   where we have used the fact that $\sum_{j,k}\epsilon_n^{j,k}\big(Q_{\theta}^{j,k}(\vx_n)-Q^{*j,k}(\vx_n)\big)$ is a symmetric random variable in the last line.

  By proceeding in the same way for all other $\sigma_{n-1},\cdots,\sigma_1$, we can obtain the following vector-contraction inequality:
    \begin{align}\label{eq:vector_concentration}
    \begin{split}
        \E_{\sigma}\sup_{\phi,\theta}\sum_{i=1}^n \sigma_i \ell_{\phi,\theta}(\vx_i)\leq& \sqrt{2}\Stab(k)
        \E_{\bm{\epsilon}_{1:n}}\bigg[{\sup_{\theta}\sum_{i=1}^n\sum_{j,k}\epsilon_i^{j,k}\rbr{Q_{\theta}^{j,k}(\vx_n)-Q^{*j,k}(\vx_n)}}\bigg]\\
        &+n\Sens(k) B_\Phi.
        \end{split}
    \end{align}
    The first term on the right-hand side can be bounded by using the Cauchy-Schwarz inequality as follows:
    \begin{align}\label{eq:Q_rademacher}
    \begin{split}
        &\E_{\bm{\epsilon}_{1:n}}\sbr{\sup_{\theta}\sum_{i=1}^n\sum_{j,k}\epsilon_i^{j,k}\rbr{Q_{\theta}^{j,k}(\vx_n)-Q^{*j,k}(\vx_n)}} \\
        &=\E_{\sigma_{1:n}}\E_{\bm{\epsilon}_{1:n}}\sbr{\sup_{\theta}\sum_{i=1}^n\sigma_i\sum_{j,k}\epsilon_i^{j,k}\rbr{Q_{\theta}^{j,k}(\vx_n)-Q^{*j,k}(\vx_n)}}\\
        &\leq \E_{\sigma_{1:n}}\E_{\bm{\epsilon}_{1:n}}\sbr{\sup_\theta \sum_{i=1}^n\sigma_i\sqrt{\sum_{j,k}(\epsilon_i^{j,k})^2}\sqrt{\sum_{j,k}|Q_{\theta}^{j,k}(\vx_n)-Q^{*j,k}(\vx_n)|^2}}\\
        &=\E_{\sigma_{1:n}}\E_{\bm{\epsilon}_{1:n}}\sbr{\sup_\theta \sum_{i=1}^n\sigma_i d\|Q_{\theta}(\vx_n)-Q^*(\vx_n)\|_F}\\
        &=d\E_{\sigma_{1:n}}\sbr{\sup_\theta \sum_{i=1}^n\sigma_i \|Q_{\theta}(\vx_n)-Q^*(\vx_n)\|_F}.
        \end{split}
    \end{align}
        Therefore, bounding the Rademacher complexity of $\ell_\gF^{loc}(r)$ reduces to bounding the Rademacher complexity of the space of functions $\|Q_{\theta}-Q^*\|_F$. 
    Recall that  the supremum is taken over the parameter space where $(\phi,\theta)\in \Phi\times \Theta$ satisfies $P\ell_{\phi,\theta}^2\leq r$.  Note that Lemma~\ref{lm:bound-on-energy} implies that,
    \begin{align}
  P\|Q_\theta-Q^*\|_F^2\leq r_q\coloneqq \sigma_b^{-2}L^4 \rbr{\sqrt{\varepsilon} +M \cdot \Conv(k,\phi)}^2.
    \end{align}
  Hence, 
   by defining the following function space:
    \begin{align}
        \ell_\gQ^{loc}(r_q):= \cbr{\|Q_\theta-Q^*\|_F: \theta\in\Theta, P\|Q_\theta-Q^*\|_F^2\leq r_q },
    \end{align}
      we can conclude the desired relationship between
          $R_n\ell_{\gF}^{loc}(r)$ and           $R_n\ell_{\gQ}^{loc}(r_q)$
   from the inequalities \eqref{eq:vector_concentration} and \eqref{eq:Q_rademacher}.

\end{proof}

With Theorem \ref{thm:ell_F_and_ell_Q}
in hand, we see that, 
for each $r>0$,
 in order to obtain the upper bounds
of $ R_n\ell_{\gF}^{loc}(r)$
in Theorem~\ref{thm:main-retate},
it suffices to 
estimate
$ R_n\ell_{\gQ}^{loc}(r_q)$,
i.e.,
the  Rademacher complexity
of the function space $\ell_\gQ^{loc}(r_q)$.

The following theorem summarizes the estimates for the empirical and expected Rademacher complexity of the local class $\ell_{\gQ}^{loc}$,
which will be established in 
Propositions \ref{prop:R_nQ_loc}
and  \ref{prop:ER_nQ_loc}, respectively.

Recall that, for any given $\epsilon>0$, a class of functions $\gF$
and pseudometric $\|\cdot\|$, 
the covering number $\gN(\epsilon,\gF, \|\cdot\|)$ is defined as 
the cardinality of the smallest subset $\hat{\gF}$ of $\gF$ for which 
every element of $\gF$ is within the $\epsilon$-neighbourhood of some element of $\hat{\gF}$ 
with respect to the  pseudometric $\|\cdot\|$.

\begin{theorem}
\label{thm:R_nQ_estimate}
  Assume the problem setting in Sec~\ref{sec:setting}.
  Let $r>0$,
  $r_q=\sigma_b^{-2}L^4(\sqrt{r}+M\Conv(k))^2$
  and
  $\ell_\gQ^{loc}(r_q)= \{\|Q_\theta-Q^*\|_F: \theta\in\Theta, P\|Q_\theta-Q^*\|_F^2\leq r_q \}$.
  Then for all $t>0$,
  we have
with   probability at least $1-e^{-t}$
that
  \begin{align}
  \label{eq:R_n_ell_Q_summary}
        R_n\ell_\gQ^{loc}(r_q)
        &\leq  
        n^{-\frac{1}{2}}
        \bigg[
        \bigg(
        C_1(n)(\sqrt{r}+M\Conv(k))^2
        +C_2(n,t,k,r)
        \bigg)^{\frac{1}{2}}
        +4
        \bigg],
    \end{align}
where 
\begin{align*}
C_1(n)
&=216 \sigma_b^{-2}L^4 \log \gN\big(n^{-\frac{1}{2}},\ell_\gQ, L_2(P_n)\big), 
\\
C_2(n,t,k,r)
&=
\bigg(\frac{768B_Q^2t}{n}+720B_Q \E R_n \ell^{loc}_\gQ(r_q)\bigg)
\log \gN\big(n^{-\frac{1}{2}},\ell_\gQ, L_2(P_n)\big),
\end{align*}
and  $B_Q=2L\sqrt{d}$.

Moreover,
for all $t>0$,
we have that
\begin{align}
\label{eq:ER_n_ell_Q_summary}
\begin{split}
   \E R_n\ell_{\gQ}^{loc}(r_q) 
    &\leq  n^{-\frac{1}{2}} 
      \bigg[
\bigg(
\overline{C}_1(n)(\sqrt{r}+M\Conv(k))^2
+\overline{C}_2(n,t)
\bigg)^{\frac{1}{2}}
+\overline{C}_3(n,t)+4
\bigg],
    \end{split}
\end{align}
where 
\begin{align*}
\overline{C}_1(n)&=216\sigma_b^{-2}L^4 \log \gN_Q,
\\
\overline{C}_2(n,t)&=
\bigg(1+3B_Qe^{-t} \sqrt{\log \gN_Q}+\frac{45}{\sqrt{n}}B_Q\log \gN_Q\bigg)
\frac{2880}{\sqrt{n}}B_Q\log \gN_Q
+
t \frac{768 B_Q^2}{n}\log \gN_Q,
\\
\overline{C}_3(n,t)&=12B_Qe^{-t} \sqrt{\log \gN_Q}+\frac{360}{\sqrt{n}}B_Q\log \gN_Q
\end{align*}
and $\gN_Q=\gN(n^{-\frac{1}{2}},\ell_\gQ, L_\infty)$.
\end{theorem}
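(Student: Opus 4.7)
The plan is to combine Dudley's entropy integral, which controls the empirical Rademacher complexity $R_n \ell_\gQ^{loc}(r_q)$ in terms of covering numbers and an empirical ``diameter'' term, with a Talagrand-type concentration inequality that relates this diameter back to the population-level constraint $P f^2 \le r_q$ built into the definition of the local class.

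First I would observe that every $f = \|Q_\theta - Q^*\|_F \in \ell_\gQ^{loc}(r_q)$ takes values in $[0, B_Q]$ with $B_Q = 2L\sqrt{d}$, since the Frobenius norm of a $d \times d$ symmetric matrix with singular values bounded by $L$ is at most $L\sqrt{d}$; hence $\ell_\gQ^{loc}(r_q) \subseteq \ell_\gQ$ and its covering numbers are dominated by those of $\ell_\gQ$. Applying Dudley's entropy integral with truncation scale $\alpha = n^{-1/2}$ and upper-bounding the integrand by its value at $\alpha$ yields
\begin{align*}
R_n \ell_\gQ^{loc}(r_q) \le \frac{4}{\sqrt{n}} + \frac{12}{\sqrt{n}}\, D_n \sqrt{\log \gN(n^{-1/2}, \ell_\gQ, L_2(P_n))},
\end{align*}
where $D_n := \sup_{f \in \ell_\gQ^{loc}(r_q)} \sqrt{P_n f^2}$ is the empirical $L_2$-diameter of the local class.

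Next I would bound $D_n^2$ by controlling $\sup_f (P_n f^2 - P f^2)$ via Talagrand's inequality for empirical processes \cite{talagrand1994sharper,bartlett2005local}. Since $x \mapsto x^2$ is $2B_Q$-Lipschitz on $[0, B_Q]$ and each $f^2$ is bounded by $B_Q^2$, the Ledoux--Talagrand contraction principle reduces the Rademacher complexity of the squared class to that of $\ell_\gQ^{loc}(r_q)$ with a factor $2 B_Q$, giving with probability at least $1 - e^{-t}$,
\begin{align*}
D_n^2 \le r_q + c_1 B_Q\, \E R_n \ell_\gQ^{loc}(r_q) + c_2 B_Q^2\, t/n,
\end{align*}
for explicit constants $c_1, c_2$. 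Substituting back into Dudley's bound, using $r_q = \sigma_b^{-2} L^4 (\sqrt{r} + M \Conv(k))^2$, and collecting numerical coefficients produces \eqref{eq:R_n_ell_Q_summary}.

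For the expectation bound \eqref{eq:ER_n_ell_Q_summary} I would integrate the tail of the previous high-probability inequality: the $e^{-t}$ failure event contributes at most $O(B_Q e^{-t})$ multiplied by a $\sqrt{\log \gN_Q}$ factor, accounting for the first summand in $\overline{C}_3(n, t)$, while the deterministic inequality $\|\cdot\|_{L_2(P_n)} \le \|\cdot\|_{L_\infty}$ lets me replace the data-dependent covering number by $\gN_Q = \gN(n^{-1/2}, \ell_\gQ, L_\infty)$. The main obstacle I foresee is resolving the self-referential appearance of $\E R_n \ell_\gQ^{loc}(r_q)$ inside the Talagrand correction: one must either close the loop by a short fixed-point argument or bound this term crudely via a global Dudley estimate of order $B_Q \sqrt{\log \gN_Q}/\sqrt{n}$, and then verify that the resulting polynomial rearrangement yields precisely the lower-order contributions captured by $\overline{C}_2(n, t)$ and $\overline{C}_3(n, t)$ without inflating the leading coefficient $\overline{C}_1(n)$ of $(\sqrt{r} + M \Conv(k))^2$.
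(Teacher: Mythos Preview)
Your proposal is correct and follows essentially the same route as the paper: Dudley's entropy integral truncated at $n^{-1/2}$, the upper integration limit controlled by $D_n^2=\sup_f P_nf^2$, and $D_n^2$ bounded with high probability via a Talagrand-type inequality (the paper invokes \cite[Theorem~2.1]{bartlett2005local} applied to the squared class together with $\E R_n\ell_{\gQ^2}^{loc}\le 2B_Q\,\E R_n\ell_{\gQ}^{loc}$, exactly your Ledoux--Talagrand contraction step). For the expectation bound the paper combines the event of probability $1-e^{-t}$ with the crude Dudley bound on the complement, then resolves the self-reference by squaring and solving the resulting quadratic in $\E R_n\ell_\gQ^{loc}(r_q)$; this is your ``short fixed-point argument'' option, and it is what produces the specific forms of $\overline{C}_2(n,t)$ and $\overline{C}_3(n,t)$ without touching $\overline{C}_1(n)$---your alternative of plugging in a global Dudley estimate would work but gives slightly looser lower-order terms.
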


We first establish the estimate for  the empirical Rademacher complexity
$R_n\ell_{\gQ}^{loc}(r_q)$, i.e.,
\eqref{eq:R_n_ell_Q_summary}
in Theorem \ref{thm:R_nQ_estimate}.

\begin{proposition}
\label{prop:R_nQ_loc}
  Assume the problem setting in Sec~\ref{sec:setting}.
 Let 
 $B_Q=\sup_{(\theta,\vx)\in \Theta\times \gX}\|Q_\theta(\vx)-Q^*(\vx)\|_F$,
and for each  $r>0$
let
 $r_q$ and 
  $\ell_\gQ^{loc}(r_q)$
  be defined as in Theorem \ref{thm:ell_F_and_ell_Q}.
Then we have that
  \begin{align}\label{eq:R_nQ_all}
        R_n\ell_\gQ^{loc}(r_q)
        &\leq  \tfrac{4}{\sqrt{n}} \rbr{1 + 3B_Q\sqrt{\log \gN\big(\tfrac{1}{\sqrt{n}},\ell^{loc}_\gQ(r_q),L_2(P_n)\big)} }.
    \end{align}
Moreover, for all $t>0$,
it holds with   probability at least $1-e^{-t}$ that
  \begin{align}\label{eq:R_nQ_t}
        R_n\ell_\gQ^{loc}(r_q)
        &\leq  \tfrac{4}{\sqrt{n}} \rbr{1 + 3C(r_q,t)
        \sqrt{\log \gN\big(\tfrac{1}{\sqrt{n}},\ell^{loc}_\gQ(r_q), L_2(P_n)\big)} },
    \end{align}
with the constant 
$C(r_q,t)=\big(\frac{3r_q}{2}+\frac{16 B_Q^2t}{3n}+5B_Q \E R_n \ell_\gQ^{loc}(r_q)\big)^{1/2}$.
\end{proposition}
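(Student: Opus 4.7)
The plan is to reduce both inequalities in the proposition to a single Dudley entropy integral estimate, where the only thing that changes between \eqref{eq:R_nQ_all} and \eqref{eq:R_nQ_t} is the $L_2(P_n)$-diameter I use for the localized class $\ell_\gQ^{loc}(r_q)$. For any class $\gF$ with $\sup_{f\in\gF}\|f\|_{L_2(P_n)}\le D$, Dudley's integral bound reads
\begin{align*}
R_n\gF \le \inf_{\alpha>0}\Bigl\{4\alpha + \frac{12}{\sqrt{n}}\int_\alpha^D \sqrt{\log\gN(\epsilon,\gF,L_2(P_n))}\,d\epsilon\Bigr\};
\end{align*}
choosing $\alpha = 1/\sqrt{n}$ and using monotonicity of the covering number in $\epsilon$ to pull the integrand out at its value at $\alpha$ yields the template $R_n\gF \le \tfrac{4}{\sqrt{n}}\bigl(1 + 3D\sqrt{\log\gN(1/\sqrt{n},\gF,L_2(P_n))}\bigr)$. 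The unconditional bound \eqref{eq:R_nQ_all} is then immediate upon setting $D = B_Q$, since every $f = \|Q_\theta - Q^*\|_F \in \ell_\gQ^{loc}(r_q)$ satisfies $\|f\|_{L_2(P_n)}\le \|f\|_\infty \le B_Q$ by hypothesis.

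The substance of the argument is the refinement \eqref{eq:R_nQ_t}, where I would like to shrink the diameter from $B_Q$ to $C(r_q,t)$ on an event of probability at least $1-e^{-t}$. The idea is to control $\sup_{f\in\ell_\gQ^{loc}(r_q)} P_n f^2$ via Bousquet's sharp form of Talagrand's concentration inequality applied to the uniformly bounded class $\gF' := \{f^2 : f\in\ell_\gQ^{loc}(r_q)\}$, whose elements satisfy $f^2 \le B_Q^2$ and, by the definition of the local class, $Pf^2 \le r_q$. Bousquet's inequality yields, with probability at least $1-e^{-t}$,
\begin{align*}
\sup_{f\in\ell_\gQ^{loc}(r_q)} P_n f^2 \le r_q + 2\,\E\sup_f(P_n-P)f^2 + B_Q\sqrt{\tfrac{2 r_q t}{n}} + \tfrac{B_Q^2 t}{3n}.
\end{align*}
I would then handle the expected supremum by symmetrization followed by the Ledoux--Talagrand contraction principle applied to $x\mapsto x^2$ (which is $2B_Q$-Lipschitz on $[-B_Q,B_Q]$), giving $\E\sup_f(P_n-P)f^2 \le 4B_Q\,\E R_n\ell_\gQ^{loc}(r_q)$. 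Combining with an AM--GM split of the cross-term, e.g.\ $B_Q\sqrt{2r_qt/n}\le \tfrac12 r_q + B_Q^2 t/n$, and collecting constants produces $\sup_f P_n f^2 \le C(r_q,t)^2$; plugging this diameter into the Dudley template completes the proof.

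The main obstacle is the numerical-constant bookkeeping in this last step: matching the exact coefficients $\{3/2,\,16/3,\,5\}$ appearing in $C(r_q,t)$ requires invoking the tight version of Bousquet's inequality (so that the strong-variance factor $\sigma^2 \le B_Q^2 r_q$ is treated sharply rather than via the weaker Talagrand--Massart bound) together with an optimally balanced AM--GM split on the cross-term. The structural form of $C(r_q,t)^2$, however, is dictated purely by the three ingredients above: a strong-variance estimate coming from localization ($Pf^2\le r_q$), a Rademacher term obtained via contraction, and a concentration tail of order $B_Q^2 t/n$. A minor ancillary point is that Dudley's integral here legitimately uses the covering number of the localized class itself; when one later wishes to remove the dependence on $r_q$ (as in Theorem~\ref{thm:R_nQ_estimate}), one may freely replace it by $\gN(1/\sqrt{n},\ell_\gQ,L_2(P_n))$ using $\ell_\gQ^{loc}(r_q)\subseteq \ell_\gQ$.
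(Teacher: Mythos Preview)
Your proposal is correct and follows essentially the same route as the paper. Both proofs use Dudley's entropy integral with $\alpha=1/\sqrt{n}$ and monotonicity of the covering number to obtain the template bound, then replace the trivial diameter $B_Q$ by a high-probability bound on $\sup_f P_n f^2$ coming from Talagrand-type concentration applied to the squared class, together with contraction for $x\mapsto x^2$ to pass from $\E R_n\{f^2\}$ to $2B_Q\,\E R_n\ell_\gQ^{loc}(r_q)$.

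The only cosmetic difference is in how the concentration step is invoked. The paper quotes Theorem~2.1 of \cite{bartlett2005local} directly on $\ell_{\gQ^2}^{loc}(r_q)$ with parameters $a=0$, $b=B_Q^2$, $r=B_Q^2 r_q$ and the free parameter $\alpha=1/4$; this theorem already has the symmetrization built in, so the Rademacher complexity $\E R_n\ell_{\gQ^2}^{loc}(r_q)$ appears with coefficient $2(1+\alpha)=5/2$ and the tail term with coefficient $(1/3+1/\alpha)=13/3$. After the same AM--GM split $\sqrt{2B_Q^2 r_q t/n}\le r_q/2+B_Q^2 t/n$ and the contraction $\E R_n\ell_{\gQ^2}^{loc}(r_q)\le 2B_Q\,\E R_n\ell_\gQ^{loc}(r_q)$, this yields exactly the constants $3/2$, $16/3$, and $5$ that you flagged. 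Your version via raw Bousquet followed by explicit symmetrization is equivalent in structure; the paper's citation simply makes the constant-matching automatic.
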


\begin{proof}
The classical Dudley's entropy integral bound for the empirical Rademacher complexity gives us that
\begin{align}\label{eq:int-up}
 R_n\ell_\gQ^{loc}(r_q)&
 \leq \inf_{\alpha>0}\rbr{4\alpha+\frac{12}{\sqrt{n}}\int_\alpha^\infty\sqrt{\log \gN(\epsilon,\ell_\gQ^{loc}(r_q),L_2(P_n))}\,d\epsilon}. 
 \end{align}
Observe that 
all functions in $\ell_\gQ^{loc}(r_q)$ take value in $[0,B_Q]$,
which implies  for all $\epsilon\ge B_Q$ that,
$\gN(\epsilon,\ell_\gQ^{loc}(r_q),L_2(P_n))\le \gN(\epsilon,\ell_\gQ^{loc}(r_q),L_\infty(P_n))=1$
and consequently the integrand in \eqref{eq:int-up} vanishes on $[B_Q,\infty)$.
Hence we have that
\begin{align*}
 R_n\ell_\gQ^{loc}(r_q)&
 \leq \inf_{\alpha>0}\rbr{4\alpha+\frac{12}{\sqrt{n}}\int_\alpha^{B_Q}\sqrt{\log \gN(\epsilon,\ell_\gQ^{loc}(r_q),L_2(P_n))}\,d\epsilon}
 \\
& \leq \frac{4}{\sqrt{n}}+\frac{12}{\sqrt{n}}\int_{\frac{1}{\sqrt{n}}}^{B_Q}\sqrt{\log \gN(\epsilon,\ell_\gQ^{loc}(r_q),L_2(P_n))}\,d\epsilon
\\
 &\leq \frac{4}{\sqrt{n}}+\frac{12}{\sqrt{n}}{B_Q}\sqrt{\log \gN\bigg(\frac{1}{\sqrt{n}},\ell_\gQ^{loc}(r_q),L_2(P_n)\bigg)},
 \end{align*}
where we used the fact that 
$ \gN(\epsilon,\ell_\gQ^{loc}(r_q),L_2(P_n))$ is decreasing in terms of $\epsilon$ for the last inequality.
This proves the estimate \eqref{eq:R_nQ_all}.

In order to establish the estimate \eqref{eq:R_nQ_t},
we shall bound the empirical error $P_n\|Q_\theta-Q^*\|_F^2$ with high probability. 
Let us consider the  class of functions
$\ell_{\gQ^2}^{loc}(r_q)= \{\|Q_\theta-Q^*\|^2_F: \theta\in\Theta, P\|Q_\theta-Q^*\|_F^2\leq r_q \}$,
whose element takes values in $[0,B_Q^2]$.
Moreover, we see it holds for all
 $\|Q_\theta-Q^*\|^2_F\in \ell_{\gQ^2}^{loc}(r_q)$ that
 $P\|Q_\theta-Q^*\|_F^4\le B_Q^2P\|Q_\theta-Q^*\|_F^2\le B_Q^2r_q$.
Hence, by applying  Theorem 2.1 in \citep{bartlett2005local} 
(with $\gF=\ell_{\gQ^2}^{loc}(r_q)$, $a=0$, $b=B_Q^2$, $\alpha=1/4$ and $r=B_Q^2r_q$)
and the Cauchy-Schwarz inequality,
we can deduce that, for each $t>0$,
it holds with probability at least $1-e^{-t}$ that
 \begin{align*}
        P_n\|Q_\theta-Q^*\|_F^2 
        &\leq 
                P\|Q_\theta-Q^*\|_F^2+\frac{5}{2}\E R_n\ell_{\gQ^2}^{loc}(r_q)
                +\sqrt{\frac{2B_Q^2r_q t}{n}}+B_Q^2\frac{13t}{3n}
        \\
        &\leq 
                r_q+\frac{5}{2}\E R_n\ell_{\gQ^2}^{loc}(r_q)
                +\frac{r_q}{2}+\frac{B_Q^2 t}{n}+B_Q^2\frac{13t}{3n}
         \\
         &\leq 
                \frac{3r_q}{2}+5B_Q\E R_n\ell_{\gQ}^{loc}(r_q)
                +\frac{16B_Q^2t}{3n}.   
    \end{align*}
Consequently, we see it holds with probability at least $1-e^{-t}$ that,
$\gN(\epsilon,\ell_\gQ^{loc}(r_q),L_2(P_n))=1$
for all $\epsilon \ge C(r_q,t)$,
with the constant $C(r_q,t)$ defined as in the statement of Proposition \ref{prop:R_nQ_loc}.
Substituting this fact into the integral bound \eqref{eq:int-up}
and following the same argument as above, 
we can conclude \eqref{eq:R_nQ_t} with probability at least $1-e^{-t}$. 
\end{proof}

Now we  proceed to prove the estimate of the expected Rademacher complexity  $\E R_n\ell_\gQ^{loc}(r_q)$,
i.e.,
\eqref{eq:ER_n_ell_Q_summary}
in Theorem \ref{thm:R_nQ_estimate}.


\begin{proposition}
\label{prop:ER_nQ_loc}
Assume the same setting as in Proposition \ref{prop:R_nQ_loc}. Then  it holds for any $r,t>0$ that
\begin{align}
\begin{split}
   \E R_n\ell_{\gQ}^{loc}(r_q) 
    &\leq  n^{-\frac{1}{2}} 
      \bigg[
\bigg(
C_1(n,t)(\sqrt{r}+M\Conv(k))^2
+C_2(n,t)
\bigg)^{\frac{1}{2}}
+C_3(n,t)+4
\bigg],
    \end{split}
\end{align}
  where 
$C_1(n,t)$, $C_2(n,t)$ and $C_3(n,t)$ the constants defined
as  in 
\eqref{eq:C1}, \eqref{eq:C2} and \eqref{eq:C3}, respectively.
\end{proposition}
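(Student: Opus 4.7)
The plan is to upgrade the high-probability bound \eqref{eq:R_nQ_t} from Proposition~\ref{prop:R_nQ_loc} to an expectation bound by combining three ingredients: a sample-free replacement of the empirical covering number, an integration against the bad-event tail, and a completion-of-squares argument that disentangles the implicit dependence of the right-hand side on $X := \E R_n\ell_\gQ^{loc}(r_q)$ itself.

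First, since $\ell_\gQ^{loc}(r_q) \subseteq \ell_\gQ$ and pointwise $L_2(P_n)$-distance is dominated by $L_\infty$-distance, I would replace $\gN(n^{-1/2},\ell_\gQ^{loc}(r_q),L_2(P_n))$ in \eqref{eq:R_nQ_t} by the deterministic quantity $\gN_Q := \gN(n^{-1/2},\ell_\gQ,L_\infty)$. Splitting the expectation of $R_n\ell_\gQ^{loc}(r_q)$ over the good event from Proposition~\ref{prop:R_nQ_loc} (probability at least $1-e^{-t}$) and the bad event (on which I would use the deterministic bound $R_n\ell_\gQ^{loc}(r_q) \leq B_Q$), then substituting $r_q = \sigma_b^{-2}L^4(\sqrt{r}+M\Conv(k))^2$, yields an inequality of the form
\begin{align*}
X \leq \tfrac{4}{\sqrt{n}} + e^{-t}B_Q + \tfrac{12}{\sqrt{n}}\sqrt{\bigl(\tfrac{3}{2}\sigma_b^{-2}L^4(\sqrt{r}+M\Conv(k))^2 + \tfrac{16B_Q^2t}{3n} + 5B_Q X\bigr)\log\gN_Q},
\end{align*}
in which $X$ appears nonlinearly on both sides.

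Second, to resolve this implicit inequality I would introduce the auxiliary quantity $V^2 := \bigl(\tfrac{3r_q}{2}+\tfrac{16B_Q^2t}{3n}+5B_Q X\bigr)\log\gN_Q$ and substitute the upper bound on $X$ back into the definition of $V^2$. This produces a quadratic-in-$V$ inequality of the form $V^2 \leq D' + \tfrac{12\alpha V}{\sqrt{n}}$ with $\alpha := 5B_Q\log\gN_Q$ and $D'$ collecting the $r_q$, $t/n$, $4\alpha/\sqrt{n}$ and $\alpha e^{-t}B_Q$ contributions. Completing the square in $V$ yields $V \leq \tfrac{6\alpha}{\sqrt{n}} + \sqrt{\tfrac{36\alpha^2}{n} + D'}$, and feeding this estimate back into the bound on $X$ produces the target split $X \leq n^{-1/2}\bigl[\sqrt{\overline{C}_1 (\sqrt{r}+M\Conv(k))^2 + \overline{C}_2} + \overline{C}_3 + 4\bigr]$. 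Tracking constants: the coefficient $216$ in $\overline{C}_1$ arises as $12^2 \cdot \tfrac{3}{2}$, the $\tfrac{360 B_Q\log\gN_Q}{\sqrt{n}}$ contribution to $\overline{C}_3$ comes from $12 \cdot \tfrac{6\alpha}{\sqrt{n}}$ pulled out of the outer square root, and the cross-terms assembling $\overline{C}_2$ arise from expanding $144\,D'$ with $\alpha = 5B_Q\log\gN_Q$, with the $e^{-t}B_Q$ tail showing up precisely in the mixed product with the main $\tfrac{2880}{\sqrt n}B_Q \log\gN_Q$ scale.

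The main technical obstacle is ensuring the $(\sqrt{r}+M\Conv(k))^2$ factor remains isolated inside the outer square root with the sharp coefficient $\overline{C}_1$. A naive separation using $\sqrt{a+b}\leq\sqrt{a}+\sqrt{b}$ to pull the $5B_Q X$ term out of the square root followed by AM-GM loses a factor of two in the coefficient of $(\sqrt{r}+M\Conv(k))^2$; the completion-of-squares route above, applied to the auxiliary variable $V$ rather than to $X$ directly, is what preserves the constant $216$ and routes all slack into the lower-order quantities $\overline{C}_2$ and $\overline{C}_3$. The remaining free parameter $t$ is inherited from the Talagrand-type inequality underlying \eqref{eq:R_nQ_t} and may be optimised by the user to balance the $e^{-t}B_Q$ tail against the $t/n$ contribution.
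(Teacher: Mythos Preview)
Your overall strategy matches the paper's: pass from the high-probability bound \eqref{eq:R_nQ_t} to an expectation by splitting over the good and bad events, then resolve the resulting implicit inequality in $X=\E R_n\ell_\gQ^{loc}(r_q)$ by a quadratic argument. Your auxiliary-variable route through $V$ is algebraically equivalent to the paper's method of isolating the square root, squaring, and applying the quadratic formula in $X$; both produce the same constants when fed the same starting inequality.

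The gap is in your treatment of the bad event. You propose to use the crude bound $R_n\ell_\gQ^{loc}(r_q)\le B_Q$, which contributes an additive $e^{-t}B_Q$ term \emph{without} a factor of $n^{-1/2}$. After rewriting in the form $X\le n^{-1/2}[\ldots]$, this becomes $\sqrt{n}\,e^{-t}B_Q$ inside the bracket, which is not the $12B_Qe^{-t}\sqrt{\log\gN_Q}$ appearing in the stated $C_3(n,t)$ of \eqref{eq:C3}, and the induced cross-terms in your $C_2$ likewise fail to match \eqref{eq:C2}. Your remark that the $e^{-t}B_Q$ tail shows up ``precisely in the mixed product'' overlooks this $\sqrt{n}$ contamination of $C_3$. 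The fix is to note that Proposition~\ref{prop:R_nQ_loc} already supplies a sharper \emph{deterministic} bound, namely \eqref{eq:R_nQ_all}, which after replacing the $L_2(P_n)$ covering number by $\gN_Q$ reads
\[
R_n\ell_\gQ^{loc}(r_q)\le \tfrac{4}{\sqrt{n}}\bigl(1+3B_Q\sqrt{\log\gN_Q}\bigr)
\]
and retains the $n^{-1/2}$ scaling. Using this on the bad event and \eqref{eq:R_nQ_t} on the good event gives
\[
X\le \tfrac{4}{\sqrt{n}}\bigl(1+3[C(r_q,t)(1-e^{-t})+B_Qe^{-t}]\sqrt{\log\gN_Q}\bigr),
\]
and from here your quadratic argument reproduces exactly the constants in \eqref{eq:C1}--\eqref{eq:C3}.
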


\begin{proof}
Let $r,t>0$ be fixed throughout this proof.
Since it holds  for all $\eps>0$ and $n\in \sN$ that
$\gN(\epsilon,\ell_\gQ^{loc}(r_q),L_2(P_n))
\le \gN(\epsilon,\ell_\gQ,L_\infty)$,
we can deduce from 
Proposition \ref{prop:R_nQ_loc} that
\begin{equation}\label{eq:ER_n_l_Q_loc_imp}
\E R_n\ell_\gQ^{loc}(r_q)
        \leq  \tfrac{4}{\sqrt{n}} \rbr{1 + 3
        \big[C(r_q,t)(1-e^{-t})+B_Qe^{-t}\big]
        \sqrt{\log \gN\big(\tfrac{1}{\sqrt{n}},\ell_\gQ, L_\infty\big)} },
\end{equation}
with the constants $B_Q$ and $C(r_q,t)$ defined as in the statement of Proposition \ref{prop:R_nQ_loc}.

The above estimate gives an implicit upper bound of 
$\E R_n\ell_\gQ^{loc}(r_q)$
 since $C(r_q,t)$ also involves $\E R_n\ell_\gQ^{loc}(r_q)$.
Now we shall 
introduce the notation $\gN^n_Q=\gN(\tfrac{1}{\sqrt{n}},\ell_\gQ, L_\infty)$
and derive an explicit upper bound of $\E R_n\ell_\gQ^{loc}(r_q)$.
By 
 rearranging the terms in \eqref{eq:ER_n_l_Q_loc_imp}
 and using the definition of $C(r_q,t)$,
we can obtain that
 \begin{align}\label{eq:ER_n_l_Q_loc_imp_2}
 \begin{split}
&\frac{\sqrt{n}}{4}\E R_n\ell_\gQ^{loc}(r_q)-1-3B_Qe^{-t} \sqrt{\log \gN_Q^n}
\\
&
        \leq   3(1-e^{-t})\sqrt{\bigg(\frac{3r_q}{2}+\frac{16 B_Q^2t}{3n}+5B_Q \E R_n \ell_\gQ^{loc}(r_q)\bigg)\log \gN_Q^n}.
  \end{split}    
\end{align}
We shall assume without loss of generality that 
$\E R_n\ell_\gQ^{loc}(r_q)\ge \frac{4}{\sqrt{n}}\left(1+3B_Qe^{-t} \sqrt{\log \gN_Q^n}\right)$,
since otherwise we have a trivial estimate that 
$\E R_n\ell_\gQ^{loc}(r_q)\le 4 n^{-\frac{1}{2}}A_1$,
with $A_1=1+3B_Qe^{-t} \sqrt{\log \gN_Q^n}$.
Then by squaring both sides of \eqref{eq:ER_n_l_Q_loc_imp_2} and rearranging the terms, we get
that
 \begin{align*}
 \begin{split}
&\frac{{n}}{16}(\E R_n\ell_\gQ^{loc}(r_q))^2
-\bigg(\frac{\sqrt{n}}{2}A_1+45(1-e^{-t})^2B_Q\log \gN_Q^n\bigg)\E R_n\ell_\gQ^{loc}(r_q)
\\
&+A_1^2-9(1-e^{-t})^2A_2\log \gN_Q^n\le 0,
  \end{split}   
\end{align*}
with the  constant $A_2=\frac{3r_q}{2}+\frac{16 B_Q^2t}{3n}$.
This implies that 
 \begin{align*}
 \begin{split}
 \E R_n\ell_\gQ^{loc}(r_q)
 &\le
 \frac{8}{n}
 \bigg[
\frac{\sqrt{n}A_1}{2}+45(1-e^{-t})^2B_Q\log \gN_Q^n
+\bigg(
\big[\frac{\sqrt{n}A_1}{2}+45(1-e^{-t})^2B_Q\log \gN_Q^n\big]^2
\\
&\quad 
-\frac{n}{4}\big[A_1^2-9(1-e^{-t})^2A_2\log \gN_Q^n\big]
\bigg)^{\frac{1}{2}}
 \bigg]
 \\
  &=
n^{-\frac{1}{2}}   \bigg[
{4A_1}+\frac{360}{\sqrt{n}}(1-e^{-t})^2B_Q\log \gN_Q^n
+\bigg(
\big[{4A_1}+\frac{360}{\sqrt{n}}(1-e^{-t})^2B_Q\log \gN_Q^n\big]^2
\\
&\quad 
-16\big[A_1^2-9(1-e^{-t})^2A_2\log \gN_Q^n\big]
\bigg)^{\frac{1}{2}}
 \bigg].
  \end{split}   
\end{align*}
Hence, 
for each $t>0$,
by introducing the following constants
\begin{align}
C_1(n,t)&=216(1-e^{-t})^2\sigma_b^{-2}L^4 \log \gN_Q^n,
\label{eq:C1}
\\
C_2(n,t)&=
\big[{4A_1}+\frac{360}{\sqrt{n}}(1-e^{-t})^2B_Q\log \gN_Q^n\big]^2-16A^2_1
+
t(1-e^{-t})^2 \frac{768 B_Q^2}{n}\log \gN_Q^n
\nonumber\\
&=
\bigg(1+3B_Qe^{-t} \sqrt{\log \gN_Q^n}+\frac{45}{\sqrt{n}}(1-e^{-t})^2B_Q\log \gN_Q^n\bigg)
\frac{2880}{\sqrt{n}}(1-e^{-t})^2B_Q\log \gN_Q^n
\nonumber\\
&\quad +
t(1-e^{-t})^2 \frac{768 B_Q^2}{n}\log \gN_Q^n,
\label{eq:C2}
\\
C_3(n,t)&=12B_Qe^{-t} \sqrt{\log \gN_Q^n}+\frac{360}{\sqrt{n}}(1-e^{-t})^2B_Q\log \gN_Q^n,
\label{eq:C3}
\end{align}
 with $B_Q=\sup_{(\theta,\vx)\in \Theta\times \gX}\|Q_\theta(\vx)-Q^*(\vx)\|_F\le 2\sqrt{d}L$
 and $\gN^n_Q=\gN(\tfrac{1}{\sqrt{n}},\ell_\gQ, L_\infty)$,
we can deduce that
\begin{align*}
 \begin{split}
 \E R_n\ell_\gQ^{loc}(r_q)
 &\le
n^{-\frac{1}{2}}   \bigg[
\bigg(
C_1(n,t)(\sqrt{r}+M\Conv(k))^2
+C_2(n,t)
\bigg)^{\frac{1}{2}}
+C_3(n,t)+4
 \bigg].
  \end{split}   
\end{align*}
\end{proof}

\newpage
\section{RNN as a Neural Algorithm}
\label{app:rnn}
We denote by $\RNN_{\phi}^k$  a recurrent neural network that has $k$ unrolled RNN cells and view it as a neural algorithm. 
It has been proposed in \cite{andrychowicz2016learning} to use RNN to learn an optimization algorithm   where the update steps in each iteration are given by the operations in an RNN cell
\begin{align}
   \vy_{k+1} \gets \texttt{RNNcell}\rbr{Q,\vb, \vy_{k}}:= V\sigma\rbr{W^L\sigma\rbr{W^{L-1}\cdots W^2\sigma\rbr{W^1_1 \vy_k + W^1_2\vg_k}}}.
\end{align}
In the above equation, we take a specific example where the $\texttt{RNNcell}$ is a multi-layer perception (MLP) with activations $\sigma=\text{RELU}$ that takes
the current iterate $\vy_k$ and the gradient $\vg_k=Q\vy_k+\vb$ as inputs.

\textbf{(I) Stable Region.} First, we show that when the parameters satisfy $c_\phi:=\sup_Q\|V\|_2\|W_1^1+W_2^1Q\|_2\prod_{l=2}^{L}\|W^l\|_2<1$, the operations in \texttt{RNNcell} are strictly contractive, i.e.,  $\|\vy_{k+1}-\vy_k\|_2\le c_\phi \|\vy_k-\vy_{k-1}\|_2$. 

\begin{proof}
    By definition,
    \begin{align*}
         \|\vy_{k+1}-\vy_k\|_2
        & = \|V\sigma\rbr{W^L\sigma\rbr{W^{L-1}\cdots W^2\sigma\rbr{W^1_1 \vy_k + W^1_2\vg_k}}} \\
        &\quad - V\sigma\rbr{W^L\sigma\rbr{W^{L-1}\cdots W^2\sigma\rbr{W^1_1 \vy_{k-1} + W^1_2\vg_{k-1}}}}\|_2\\
        &\leq \|V\|_2 \| \sigma\rbr{W^L\sigma\rbr{W^{L-1}\cdots W^2\sigma\rbr{W^1_1 \vy_k + W^1_2\vg_k}}}\\
        & \quad 
        - \sigma\rbr{W^L\sigma\rbr{W^{L-1}\cdots W^2\sigma\rbr{W^1_1 \vy_{k-1} + W^1_2\vg_{k-1}}}}\|_2
    \end{align*} 
    Since the activation function $\sigma=\text{RELU}$ satisfies the inequality that $\|\sigma(\vx)- \sigma(\vx')\|_2\leq \|\vx-\vx'\|_2$ for any $\vx,\vx'$, we have
    \begin{align*}
         \|\vy_{k+1}-\vy_k\|_2
        & \leq  \|V\|_2 \| W^L\sigma\rbr{W^{L-1}\cdots W^2\sigma\rbr{W^1_1 \vy_k + W^1_2\vg_k}}\\
        & \quad
        - W^L\sigma\rbr{W^{L-1}\cdots W^2\sigma\rbr{W^1_1 \vy_{k-1} + W^1_2\vg_{k-1}}}\|_2.
    \end{align*}
    Similarly, we can obtain
    \begin{align*}
        & \|\vy_{k+1}-\vy_k\|_2 \\
        & \leq  \|V\|_2 \| W^L\|_2\cdots \|W^2\|_2\|\rbr{W^1_1 \vy_k + W^1_2\vg_k}- \rbr{W^1_1 \vy_{k-1} + W^1_2\vg_{k-1}}  \|_2 \\
        & =  \|V\|_2 \| W^L\|_2\cdots \|W^2\|_2\|(W^1_1+QW^1_2)(\vy_k- \vy_{k-1}) \|_2\\
        &\leq \|V\|_2 \| W^L\|_2\cdots \|W^2\|_2\|W^1_1+QW^1_2\|_2 \|\vy_k- \vy_{k-1}\|_2\\
        & \leq  c_\phi \|\vy_k- \vy_{k-1}\|_2.
    \end{align*}
    Therefore, if $c_\phi<1$, then the operation is  strictly contractive. 
\end{proof}

\textbf{(II) Stability.} We shall show 
the neural algorithm $\RNN_\phi^k$ has
a stability constant $\Stab(k,\phi) = \gO(1-c_\phi^k)$
(see  the definition of stability in Sec~\ref{sec:algo-property}).

\begin{proof}
    Let us consider two quadratic problems induced by $(Q,\vb)$ and $(Q',\vb')$, and denote the corresponding outputs of  $\RNN_\phi^k$
    as $\vy_{k}=\RNN_\phi^k(Q,\vb)$ and $\vy_{k}'=\RNN_\phi^k(Q',\vb')$.

    Denote $c_\phi^Q =\|V\|_2\|W_1^1+W_2^1Q\|_2\prod_{l=2}^{L}\|W^l\|_2 $, $c_\phi^{Q'}= \|V\|_2\|W_1^1+W_2^1Q'\|_2\prod_{l=2}^{L}\|W^l\|_2$, and $\hat{c}_\phi:=\|V\|_2\|W_2^1\|_2\prod_{l=2}^{L}\|W^l\|_2 $.
    First,  we see that
    \begin{align}\label{eq:y-bound}
    \begin{split}
        \|\vy_k\|_2 &\leq c_\phi^Q \|\vy_{k-1}\|_2 + \hat{c}_\phi\|\vb\|_2\leq (c_\phi^Q)^k \|\vy_0\|_2 + \hat{c}_\phi \|\vb\|_2 \sum_{i=1}^k (c_\phi^Q)^{i-1}
        \\
        &=\frac{\hat{c}_\phi \|\vb\|_2 (1-(c_\phi^Q)^{k}) }{1-c_\phi^Q} 
        \leq \frac{\hat{c}_\phi \|\vb\|_2 }{1-c_\phi^Q}. 
    \end{split}
    \end{align}
    Similar conclusion holds for $\vy_k'$. Then, by following a similar argument as that for the proof of the stable region,
     we can deduce from $\vy_0=\vy'_0$ that
    \begin{align*}
        & \|\vy_k-\vy_k'\|_2 \\
        & \leq \|V\|_2 \| W^L\|_2\cdots \|W^2\|_2\| (W_1^1+W_1^2Q)\vy_{k-1} - (W_1^1+W_1^2Q')\vy_{k-1}' + W_1^2(\vb-\vb')\|_2\\
        & \leq \|V\|_2 \| W^L\|_2\cdots \|W^2\|_2 (\|W_1^1+W_1^2Q\|_2\|\vy_{k-1}-\vy_{k-1}'\|_2 + \|Q-Q'\|_2\|W_1^2\|_2 \|\vy_{k-1}'\|_2\\
        & +\|W_1^2\|_2\|(\vb-\vb') \|_2)\\
        & \leq c_\phi^Q \|\vy_{k-1}-\vy_{k-1}'\|_2 + \hat{c}_\phi \|Q-Q'\|_2 \frac{\hat{c}_\phi \|\vb'\|_2 }{1-c_\phi^{Q'}} + \hat{c}_\phi\|\vb-\vb'\|_2\\
        & \leq (c_\phi^Q)^k\|\vy_0-\vy_0'\|_2 + \rbr{\frac{\hat{c}_\phi^2\|\vb'\|_2}{1-c_\phi^{Q'}}\|Q-Q'\|_2 + \hat{c}_\phi\|\vb-\vb'\|_2} \sum_{i=1}^k (c_\phi^Q)^{i-1} \\
        & = \frac{\hat{c}_\phi^2\|\vb'\|_2}{1-c_\phi^{Q'}} \frac{1-(c_\phi^Q)^{k}}{1-c_\phi^Q}\|Q-Q'\|_2 + \hat{c}_\phi\frac{1-(c_\phi^Q)^{k}}{1-c_\phi^Q}\|\vb-\vb'\|_2.
    \end{align*}
    Therefore, the stability constant is of the magnitude  $\gO(1-c_\phi^k)$.
\end{proof}

\textbf{(III) Sensitivity.} 
We now proceed to analyze the sensitivity of the neural algorithm $\RNN_\phi^k$ as defined in Sec~\ref{sec:algo-property}.
Note that the strong non-linearity in  the RNN cell
and the high-dimensionality of the parameter space
significantly complicate the analysis of  the Lipschitz dependence of $\RNN_\phi^k$ 
with respect to its parameter
$\phi=\{ W^1_1,W^1_1,W^2,\ldots, W^L,V\}$.
To simplify our presentation, we shall assume the parameter $\phi$ are constrained in a compact subset $\Phi$ of the stable region,
and show 
the neural algorithm
$\RNN_\phi^k$  has a sensitivity $\Sens(k) =\gO(1-(\inf_{\phi\in\Phi}c_\phi)^k)$.
A rigorous sensitivity analysis of RNN with general weights is out of the scope of this paper. 

\begin{proof}
    Let 
    the range of parameters $\Phi$ is a compact subset of the stable region,
    such that for all $\phi\in \Phi$,
     $c_\phi
     \coloneqq
     \sup_Q\|V\|_2\|W_1^1+W_2^1Q\|_2\prod_{l=2}^{L}\|W^l\|_2
     \le c_0<1$ for some constant $c_0$.
     Let 
    $\phi, \phi'\in \Phi$ 
    be two given sets of parameters.
    For each $k\in \sN$, we denote  $\vy_{k}=\RNN_\phi^k(Q,\vb)$ and $\vy_{k}'=\RNN_{\phi'}^k(Q,\vb)$
    the  outputs corresponding to 
    the parameters $\phi$ and $\phi'$, respectively. 
    Then we have that
    \begin{align*}
        &\|\vy_k-\vy'_k\|_2 = \|\texttt{RNNcell}_\phi(Q,\vb,\vy_{k-1}) -\texttt{RNNcell}_{\phi'}(Q,\vb,\vy_{k-1}') \|_2 \\
        & \leq \|\texttt{RNNcell}_\phi(Q,\vb,\vy_{k-1}') -\texttt{RNNcell}_{\phi'}(Q,\vb,\vy_{k-1}') \|_2 \\
        & \quad 
        + \|\texttt{RNNcell}_\phi(Q,\vb,\vy_{k-1}) -\texttt{RNNcell}_{\phi}(Q,\vb,\vy_{k-1}') \|_2 \\
        & \leq \|\texttt{RNNcell}_\phi(Q,\vb,\vy_{k-1}') -\texttt{RNNcell}_{\phi'}(Q,\vb,\vy_{k-1}') \|_2 + c_\phi \|\vy_{k-1}-\vy_{k-1}'\|_2
    \end{align*}
    If there exists a constant $K$, independent of $k, \phi,\phi'$,
    such that 
    \begin{align}\label{eq:RNN_one_step}
        \|\texttt{RNNcell}_\phi(Q,\vb,\vy_{k-1}') -\texttt{RNNcell}_{\phi'}(Q,\vb,\vy_{k-1}') \|_2 \leq K \|\phi-\phi'\|_2, 
    \end{align}
    then
    we can obtain from $ \vy_{0}=\vy_{0}'$ that
    \begin{align*}
        \|\vy_k-\vy'_k\|_2  &\leq  v \|\phi-\phi'\|_2  + c_\phi \|\vy_{k-1}-\vy_{k-1}'\|_2 \\
        & \leq K \|\phi-\phi'\|_2  \sum_{i=1}^k c_\phi^{i-1} 
        = \frac{1-c_\phi^k}{1-c_\phi}K\|\phi-\phi'\|_2.
    \end{align*}
    The fact that 
     $c_\phi\le c_0<1$ for some constant $c_0$
    implies that
    the magnitude of sensitivity is $\gO(1-(\inf_{\phi\in\Phi}c_\phi)^k)$. 
    
    Now it remains to establish the estimate \eqref{eq:RNN_one_step}.
    For each
    $k\in \sN$,
    $\phi=\{ W^1_1,W^1_1,W^2,\ldots, W^L,V\}$
    and  $l=2,\cdots,L$,
    we introduce the notation
    \begin{align}
        f^l_\phi := W^l \sigma\rbr{W^{l-1}\cdots W^2\sigma\rbr{W^1_1 \vy_k + W^1_2\vg_k}},
    \end{align}
    with $f^1_\phi=W^1_1 \vy_k + W^1_2\vg_k$.
    Then we have
    for each $l=1,\cdots,L$ that
    \begin{align}\label{eq:f^l_phi_bdd}
        \|f^l_\phi\|_2  &\leq \prod_{j=2}^l \|W^j\|_2\rbr{ \|W_1^1+W_2^1Q\|_2\|\vy_k\|_2 +\|W_2^1\|_2\|\vb\|_2}=c_l\|\vy_k\|_2 + \hat{c}_l\|\vb\|_2,
    \end{align}
    with the constants $c_l :=\big(\prod_{j=2}^l \|W^j\|_2 \big)\|W_1^1+W_2^1Q\|_2$, 
    $\hat{c}_l:=\big(\prod_{j=2}^l \|W^j\|_2\big) \|W_2^1\|_2$
    for all $l=1,\ldots, L$.
    Then by induction, we can see that
    \begin{align*}
         \|f_\phi^L - f_{\phi'}^L\|_2 
        &= \|W^L\sigma(f_\phi^{L-1}) - W'^L\sigma(f_{\phi'}^{L-1}) \|_2 \\
        & \leq \|W^L - W'^L\|_2 \|f_{\phi'}^{L-1}\|_2 + \|W^L\|_2 \|f_{\phi}^{L-1}-f_{\phi'}^{L-1}\|_2 \\
       & \leq \|W^L - W'^L\|_2 \|f_{\phi'}^{L-1}\|_2 + \|W^L\|_2
       \bigg(
       \|W^{L-1} - W'^{L-1}\|_2 \|f_{\phi'}^{L-2}\|_2
       \\
       &\quad 
       + \|W^{L-1}\|_2 \|f_{\phi}^{L-2}-f_{\phi'}^{L-2}\|_2
       \bigg)
       \\
       &\le
       \sum_{l=2}^L \bigg(\prod_{j=l+1}^L
       \|W^j\|_2
       \bigg)
       \|W^{l} - W'^{l}\|_2
       \|f_{\phi'}^{l-1}\|_2
       +\bigg(\prod_{l=2}^L \|W^l\|_2\bigg)\|f_{\phi}^{1}-f_{\phi'}^{1}\|_2.
    \end{align*}
    Thus we have that
\begin{align*}
         &\|\texttt{RNNcell}_\phi(Q,\vb,\vy_k) -\texttt{RNNcell}_{\phi'}(Q,\vb,\vy_k) \|_2
         =\|V\sigma(f_\phi^L) - V'\sigma(f_{\phi'}^L)\|_2\\
         &\leq \|V-V'\|_2\|f_{\phi'}^L\|_2  + \|V\|_2 \|f_\phi^L - f_{\phi'}^L\|_2\\
         &\leq 
         \|V-V'\|_2\|f_{\phi'}^L\|_2 
         + \|V\|_2 
         \bigg[
         \sum_{l=2}^L \bigg(\prod_{j=l+1}^L
       \|W^j\|_2
       \bigg)
       \|W^{l} - W'^{l}\|_2
       \|f_{\phi'}^{l-1}\|_2
       \\
       &\quad
       +\bigg(\prod_{l=2}^L \|W^l\|_2\bigg)\|f_{\phi}^{1}-f_{\phi'}^{1}\|_2
         \bigg].
    \end{align*}
    Furthermore, we see that 
    \begin{align*}
    \|f_{\phi}^{1}-f_{\phi'}^{1}\|_2
   & =\|(W_1^1+W^1_2Q)\vy_k+W^1_2\vb
    -(W'^1_1+W'^1_2Q)\vy_k+W'^1_2\vb\|_2
    \\
    &\le 
     \|W_1^1-W'^1_1+(W^1_2-W'^1_2)Q\|_2\|\vy_k\|_2+
     \|W^1_2-W'^1_2\|_2\|\vb\|_2
     \\
     &\le 
     \|W_1^1-W'^1_1\|\|\vy_k\|_2
     +\|W^1_2-W'^1_2\|(\|Q\|_2\|\vy_k\|_2+
     \|\vb\|_2),
    \end{align*}
    from which we can conclude that
    \begin{align*}
         &\|\texttt{RNNcell}_\phi(Q,\vb,\vy_k) -\texttt{RNNcell}_{\phi'}(Q,\vb,\vy_k) \|_2
         \\
         &\leq 
         \|f_{\phi'}^L\|_2 \|V-V'\|_2
         + 
         \sum_{l=2}^L
          \bigg[
         \|V\|_2 
         \bigg(\prod_{j=l+1}^L
       \|W^j\|_2
       \bigg)\|f_{\phi'}^{l-1}\|_2
       \bigg]
       \|W^{l} - W'^{l}\|_2
       \\
       &\quad
       +\|V\|_2 \bigg(\prod_{l=2}^L \|W^l\|_2\bigg)
       \bigg[
        \|W_1^1-W'^1_1\|\|\vy_k\|_2
     +\|W^1_2-W'^1_2\|(\|Q\|_2\|\vy_k\|_2+
     \|\vb\|_2)
         \bigg].
    \end{align*}
    Note that 
    we have assumed that the set of parameters 
    $\Phi$ is a compact subset of the stable region
    and 
    $(Q,\vb)\in \gS_{\mu,L}^{d\times d}\times \gB$ 
    are bounded, 
    which imply that 
    for all $\phi,\phi'\in \Phi$, the corresponding outputs $(\vy_k)_{k\in \sN}$
    and
    $(\vy'_k)_{k\in \sN}$
    are uniformly bound,
    and hence $\|f^l_{\phi'}\|_2$ is bounded for all $k$ and $l=1,\ldots, L$ (see 
    \eqref{eq:f^l_phi_bdd}).
    Consequently, 
    we see there exists a constant $K$ such that \eqref{eq:RNN_one_step} is satisfied.
    This finishes the proof of the desired sensitivity result.
\end{proof}

\textbf{(IV) Convergence.} For the convergence of $\RNN_\phi^k$, we can only give the best case guarantee. It is easy to see that with the following choice of $\phi$, $\RNN_\phi^k$ can represent $\GD_s^k$:
\begin{align}
    V = [I, -I],\quad W_1^1=[I; -I]^\top , \quad W_1^2 = [-sI; sI]^\top,\quad W^l = I \text{ for } l=2,\cdots,L.
\end{align}
Therefore, for the best case, $\RNN_\phi^k$ can converge at least as fast as $\GD_s^k$.

\newpage
\section{Experiment Details}
\label{app:experiment}

Here we state the configuration details of the experiments.
\begin{itemize}
    \item Convexity and smoothness. They are set to be $\mu=0.1$ and $L=1$, respectively. 
    \item Dataset.  10000 pairs of $(\vx,\vb)$ are generated in the following way: 10000 many $\vx$ are uniformly sampled from $[-5,5]^{10}\times\gU^{5\times 5}$, where $\gU^{5\times 5}$ denotes the space of all $5\times 5$ unitary matrices. Each input $\vx$ actually is a tuple $\vx= (\vz_\vx,U_\vx)$ where $\vz_\vx\in [-5,5]^{10}$ and $U_\vx$ is unitary. 10000 many $\vb$ are uniformly sampled from $ [-5,5]^{5}$.
    These 10000 pairs are viewed as the whole dataset.
    \item Training set $S_n$. During training, $n$ samples are randomly drawn from these 10000 data points as the training set. The labels of these training samples are given by $\vy= \Opt(Q^*(\vx),\vb)$.
    \item More details on $Q^*(\vx)$. As mentioned before, each $\vx$ is a tuple $\vx= (\vz_\vx,U_\vx)$. Then we implement $Q^*(\vx) = U_\vx \text{diag}([g^*(\vz_\vx),\mu,L])U_\vx^\top$, where $g^*$ is a 2-layer dense neural network with hidden dimension 3, output dimension 3, and with randomly fixed parameters. Note that in the final layer of $g^*$, there is a sigmoid-activation that scales the output to the range $[0,1]$ and then the range is further re-scaled to $[\mu,L]$. Finally, $g^*(\vz_\vx)$ is concatenated with $[\mu,L]$ to form a 5-dimensional vector with smallest and largest value to be $\mu$ and $L$ respectively. This vector represents the eigenvalues of $Q^*(\vx)$.
    \item Architecture of $Q_\theta$. $Q_\theta$ has the same form as $Q^*(\vx)$, except that the network $g^*$ in $Q^*$ becomes $g_\theta$ in $Q_\theta$. That is, $Q_\theta(\vx) = U_\vx \text{diag}([g_\theta(\vz_\vx),\mu,L])U_\vx^\top$. Here $g_\theta$ is also a 2-layer dense neural network with output dimension 3, but the hidden dimension can vary. In the reported results, when we say hidden dimension=0, it means $g_\theta$ is a one-layer network.
\end{itemize}
For the experiments that compare $\RNN_\phi^k$ with $\GD_\phi^k$ and $\NAG_\phi^k$, they are conducted under the `learning to learn' scenario, with the following modifications compared to the above setting. 
\begin{itemize}
    \item Dataset. Instead of sampling $(\vx,\vb)$, here we directly sample the problem pairs $(Q,\vb)$. Similarly, 10000 pairs of $(Q,\vb)$ are sampled uniformly from $\gS_{\mu,L}^{10\times 10}\times [-5,5]^{10}$. 
    \item Architecture of $\RNN_\phi^k$. For each cell in $\RNN_\phi^k$, it is a 4-layer dense neural network with hidden dimension 20-20-20. 
\end{itemize}

For all experiments, each model has been trained by both ADAM and SGD with learning rate searched over [1e-2,5e-3,1e-3,5e-4,1e-4], and only the best result is reported. Furthermore, error bars are produced by 20 independent instantiations of the experiments.  
The experiments are mainly run parallelly (since we need to search the best learning rate) on clusters which have 416 nodes where on each node there are 24 Xeon 6226 CPU @ 2.70GHz with 192 GB RAM	and 1x512 GB SSD.

\end{document}